%% file: main_arxiv.tex
\documentclass{article}

\newif\ifarxivversion
\arxivversiontrue

\usepackage{ICLR/iclr2027_conference,times}

\ifarxivversion
  \iclrfinalcopy
\fi

\usepackage{microtype}
\usepackage{graphicx}
\usepackage{subcaption}
\usepackage{booktabs}
\usepackage{longtable}
\usepackage{diagbox}
\usepackage{multirow}
\usepackage{threeparttable}
\usepackage{float}
\usepackage{placeins}
\usepackage{xcolor}
\usepackage{enumerate}
\usepackage{comment}

\usepackage{algorithm}
\usepackage{algorithmic}

\usepackage{amsmath}
\usepackage{amssymb}
\usepackage{amsfonts}
\usepackage{mathtools}
\usepackage{amsthm}
\usepackage{nicefrac}
\usepackage{thmtools}
\usepackage{thm-restate}

\input{ICLR/math_commands.tex}

\usepackage{hyperref}
\usepackage{url}

\usepackage[capitalize,noabbrev]{cleveref}

\graphicspath{{./}{figures/}{ICLR/}}
\def\iclrsysrs{}

\theoremstyle{plain}
\newtheorem{theorem}{Theorem}[section]
\newtheorem{proposition}[theorem]{Proposition}
\newtheorem{lemma}[theorem]{Lemma}
\newtheorem{corollary}[theorem]{Corollary}

\makeatletter

\providecommand*{\toclevel@proposition}{2}
\providecommand*{\toclevel@lemma}{2}
\providecommand*{\toclevel@corollary}{2}
\providecommand*{\toclevel@algorithm}{2}
\makeatother

\theoremstyle{definition}

\theoremstyle{remark}

\title{Efficient Cost-Aware LLM Evaluation via Bayesian Bandit Gittins Indices}

\author{
\setcounter{footnote}{1}
\makebox[\dimexpr\textwidth-2\tabcolsep-3pt\relax][c]{Qian Xie$^{1,2}$\thanks{\hangindent=1.8em\hangafter=1 Corresponding authors: Qian Xie
(qx66@cornell.edu); Nairen Cao (caonairen@sufe.edu.cn).}\qquad
Yueli He$^{3}$\qquad
Nairen Cao$^{3,4}$\footnotemark[2]}\\[10pt]
\normalfont
\makebox[\dimexpr\textwidth-2\tabcolsep-3pt\relax][c]{$^{1}$Cornell University \quad $^{2}$Columbia University}\\
\makebox[\dimexpr\textwidth-2\tabcolsep-3pt\relax][c]{$^{3}$New York University \quad $^{4}$Shanghai University of Finance and Economics}
}

\begin{document}

\maketitle

\ifarxivversion
  \lhead{Preprint.}
\fi

\begin{abstract}
\input{ICLR/abstract}
\ifarxivversion
\par\smallskip
\noindent\textbf{Code:}
\href{https://github.com/QianJaneXie/BanditGittinsEval}{\texttt{github.com/QianJaneXie/BanditGittinsEval}}
\fi
\end{abstract}

\input{ICLR/introduction}
\input{ICLR/problem_setup}

\input{ICLR/method}

\input{ICLR/experiment}
\input{ICLR/conclusion}
\input{ICLR/statements}

\ifarxivversion
\section*{Acknowledgments}
QX thanks Theodore Brown, Ziv Scully, and Alexander Terenin for a prior collaboration
from which the translational-invariance reduction and FFT-based approach to efficient
Gittins-index computation originated. QX also thanks Kyuseong Choi for introducing QX
to the area of efficient LLM evaluation and sharing related work, Tianyi Peng for
inviting QX to a related project and discussions connecting LLM evaluation with
multi-armed bandits, and Jin Peng Zhou and Ruihan Wu for sharing the BanditEval data
and answering questions about the datasets.
\fi

\bibliographystyle{ICLR/iclr2027_conference}
\bibliography{ICLR/BanditGittinsEval}

\appendix

\input{appendices/theory_details_iclr}
\input{appendices/anytime_recommendation}
\input{appendices/dataset_details}
\input{appendices/experiment_setup}
\input{appendices/experiment_results}

\end{document}

%% file: ICLR/math_commands.tex
\usepackage{amsmath,amsfonts,bm}

\def\eqref#1{equation~\ref{#1}}

\def\1{\bm{1}}

\DeclareMathAlphabet{\mathsfit}{\encodingdefault}{\sfdefault}{m}{sl}
\SetMathAlphabet{\mathsfit}{bold}{\encodingdefault}{\sfdefault}{bx}{n}



%% file: ICLR/abstract.tex
Exhaustively evaluating every candidate LLM configuration on every benchmark item to identify a high-performing one is costly. We formulate configuration selection as a cost-aware Bayesian bandit problem and propose \emph{GittinsEval}, which draws on the Bayesian-optimal Gittins policy to determine which configuration to evaluate next and when to stop. We extend the policy with an anytime recommendation rule over both fully and partially evaluated configurations, using an LCB-style score to account for posterior uncertainty. GittinsEval is computationally efficient, requiring only lightweight online updates after offline precomputation. Across GSM8K, PIQA, AlpacaEval, and MMLU response matrices, GittinsEval is consistently competitive, with particularly strong gains over configuration-level Bayesian optimization on large-example benchmarks and over cost-unaware bandit baselines on large-candidate tasks. Crucially, GittinsEval often attains near-zero simple regret using only \(1\%\)–\(2\%\) of the exhaustive-evaluation cost; it also offers an adaptive stopping rule that typically triggers at \(1\%\)–\(10\%\).

%% file: ICLR/introduction.tex
\section{Introduction}

Modern LLM evaluation requires selecting among a combinatorial space of configurations—such as models, prompts, temperatures, and decoding strategies—across benchmarks like GSM8K, PIQA, and MMLU \citep{cobbe2021training,bisk2020piqa,hendrycks2020measuring}. Exhaustive evaluation—testing every configuration on every benchmark item—incurs substantial cost and is often unnecessary: adaptive evaluation can allocate fewer queries to weak candidates and concentrate evaluation on promising or uncertain ones. This motivates a fundamental sequential decision problem: \emph{which configuration should be evaluated next, and how can uncertainty be incorporated to make reliable recommendations?} A practical method can further provide a principled stopping rule.

Bandit and racing algorithms provide a natural framework for adaptive evaluation \citep{zhou2024speeding,polo2024efficient,shi2024efficientbestarm,lyu2026cutting}. Some methods further share information across configurations or examples through low-rank response models, item-response models, prompt embeddings, or response-vector similarities. A complementary paradigm is configuration-level Bayesian optimization \citep{snoek2012practical}, which adaptively chooses candidates but typically evaluates each on the entire benchmark. However, existing methods predominantly operate in cost-unaware or frequentist regimes. In practice, different LLMs can have substantially different inference costs, while Bayesian methods can exploit even limited prior knowledge, such as the range of possible scores or a rough sense of benchmark difficulty. This motivates a cost-aware Bayesian approach that jointly accounts for estimated performance, uncertainty in those estimates, and evaluation cost.

We introduce \textsc{GittinsEval}, a cost-aware LLM evaluation framework that models configuration selection as a Bayesian bandit \citep{gittins2011multi}. Each configuration is an arm whose evaluations reveal noisy benchmark scores at configuration-dependent costs. Here, \emph{Bayesian bandit} describes the familiar arm-based sampling model, not a cumulative-reward objective: benchmark scores are measurements, and the payoff is the quality of the final recommendation. Our goal is therefore to identify a high-performing configuration at low evaluation cost. Rather than posing evaluation as exact best-arm identification, we assess recommendation quality through simple regret or net utility, so a near-optimal configuration can be sufficient when distinguishing the exact best would require disproportionate evaluation cost. Under a terminal-recommendation objective, this formulation is an instance of Markov chain selection \citep{dumitriu2003playing,scully2025gittins}. \textsc{GittinsEval} uses the corresponding Gittins indices to account for estimated performance, uncertainty, and evaluation cost. For this problem, the classical Gittins policy evaluates the arm with the largest index and stops once a completed arm attains the largest index.

Making \textsc{GittinsEval} practical for LLM benchmarks hinges on two key components. First, it is computationally efficient, requiring only lightweight updates during evaluation after offline precomputation. Second, the classical Gittins policy above restricts terminal recommendations to fully evaluated arms, whereas practical LLM evaluation need not require full evaluation before recommending a configuration. We therefore add an LCB-style anytime recommendation rule that accounts for posterior uncertainty when selecting among fully and partially evaluated configurations.

Across extensive experiments on GSM8K, PIQA, AlpacaEval, and MMLU, \textsc{GittinsEval} is consistently competitive, with particularly strong advantages in two regimes. On large-example benchmarks, it vastly outperforms configuration-level Bayesian optimization, which evaluates each selected configuration on the entire benchmark rather than allocating partial batches across candidates. On large-candidate MMLU tasks (1,500 arms per subject), it decisively outperforms cost-unaware frequentist baselines, which do not use evaluation costs or Bayesian priors in their allocation rules. Across these settings, \textsc{GittinsEval} reaches near-zero simple regret using only a small fraction of the exhaustive-evaluation cost.

Our core contributions are:
\begin{itemize}
    \item We formulate LLM configuration selection as a cost-aware Bayesian bandit problem that incorporates heterogeneous evaluation costs and prior information, and establish its connection to classical Markov chain selection, providing the theoretical basis for \textsc{GittinsEval}.
    \item We adapt the Gittins framework to practical LLM evaluation through modeling and algorithmic design choices that enable efficient offline index precomputation and lightweight online decision-making, and extend it with an LCB-style anytime recommendation rule over both fully and partially evaluated configurations.

    \item We show empirically that \textsc{GittinsEval} is consistently competitive across diverse LLM evaluation settings, with particular advantages in large-example and large-candidate regimes, often achieving near-zero simple regret at a small fraction of the exhaustive-evaluation cost.
\end{itemize}

\subsection{Related Work}

\paragraph{Efficient LLM evaluation.} Recent work has studied how to reduce the cost of evaluating many LLM configurations by adaptively allocating benchmark queries. BanditEval uses UCB-E for best-arm identification and introduces UCB-E-LRF (hereafter LRF), which exploits low-rank structure in the response matrix \citep{zhou2024speeding}. TRIPLE combines successive halving with a predictive model of final arm performance \citep{shi2024efficientbestarm}, while SySRs exploits similarities among model response vectors within a successive-rejection framework \citep{lyu2026cutting}. PromptEval uses configuration covariates to improve multi-prompt evaluation and best-arm identification \citep{polo2024efficient}. These methods primarily target efficient identification under a fixed evaluation budget. 
Concurrent work
has also explored complementary forms of cost-aware LLM evaluation, including
best-model identification with low-rank response-matrix structure
\citep{tolochinsky2026valid}, selective human auditing of LLM judges
\citep{ao2026bestarm}, and budget-aware multi-agent judging through debate and
deliberation \citep{harrasse2026debate}.
Our setting additionally allows heterogeneous evaluation costs and Bayesian prior information, and we evaluate recommendation quality throughout the allocation process using simple regret.

\paragraph{Bayesian optimization.} Configuration-level Bayesian optimization places a surrogate prior, typically a Gaussian process, over performance as a function of the configuration and can account for heterogeneous query costs \citep{snoek2012practical}. Our approach instead places Bayesian beliefs on each configuration's benchmark mean and updates them through repeated example-level outcomes; its uncertainty is therefore within configurations rather than primarily across them. Multi-fidelity Bayesian optimization is also related through its use of cheaper approximations \citep{klein2017fast,kandasamy2017multi,wu2020practical}.

\paragraph{Gittins indices and terminology.} Gittins indices are most familiar from the classical Bayesian multi-armed bandit, which maximizes expected discounted cumulative reward over an infinite sequence of arm pulls \citep{gittins1979bandit,gittins2011multi}. Following \citet{scully2025gittins}, we use \emph{Markov chain selection} for the broader structure that unifies Gittins-index applications: at each step, the decision maker chooses one of several independent Markov chains to advance. This name separates the indexable structure from the particular reward objective. Our method uses a terminal-selection objective---benchmark observations are costly measurements, and reward comes from the final recommended configuration---and is closely related to terminal-state problems such as the golf problem of \citet{dumitriu2003playing}. Pandora's-box problems provide a related inspect-once setting that has been used for cost-aware Bayesian optimization and stopping \citep{xie2024cost,xie2025cost}. We retain \emph{Bayesian bandit} as a familiar description of our Bayesian arm-sampling model, while using \emph{Markov chain selection} when referring to the decision framework that establishes the Gittins policy.

%% file: ICLR/problem_setup.tex
\section{Cost-Aware LLM Configuration Recommendation}
\label{sec:problem}

\subsection{Problem Setup}

We consider $K$ candidate LLM configurations evaluated across a benchmark of $N$ examples. Evaluating configuration $k \in [K]$ on example $j \in [N]$ yields a scalar score $Z_{k,j} \in \mathbb{R}$ ($[0,1]$ in this work), modeled as an observation with latent mean $\theta_k$. The latent mean $\theta_k$ represents configuration $k$'s expected performance on the underlying example distribution and is our primary evaluation target.

An adaptive evaluation policy sequentially queries configurations in batches
(with a single example as the special case $B=1$) and terminates by returning
a recommendation $\hat{k}_T$. Let $A_t$ and $b_t$ denote the configuration and
batch size at allocation step $t$. If $\widetilde c_k$ is the raw per-example
cost of configuration $k$, the effective cost of this batch is
$c_{k,b_t}:=\lambda b_t\widetilde c_k>0$. For batch size $B$,
we abbreviate $c_k:=c_{k,B}$. Our goal is to identify a high-performing
configuration while minimizing cumulative evaluation cost.

When each configuration has a finite evaluation horizon, we call a configuration \emph{completed} once all allowed benchmark observations have been collected. In practical LLM evaluation, however, completion is optional: an unfinished configuration remains eligible for recommendation based on the observations collected so far. We therefore allow an anytime recommendation after any allocation step, whether or not the recommended configuration is completed.

\subsection{Problem Formulations}
We consider two complementary cost-aware formulations. The first targets
\emph{anytime recommendation}: under any externally specified evaluation budget
$C$, the policy returns a recommendation $\hat{k}_{T_C}$, which may be an
unfinished configuration. Let $T_C$ denote the last allocation step satisfying
$\sum_{t=1}^{T_C} c_{A_t,b_t} \le C$. The policy's budget-indexed simple regret is
\[
    R_{\pi}(C)
    :=
    \mathbb{E}_{\pi}\!\left[
        \max_{k \in [K]} \theta_k
        - \theta_{\hat{k}_{T_C}}
    \right].
\]
We seek an anytime policy with low $R_{\pi}(C)$ across evaluation budgets.

The second targets \emph{adaptive stopping}: rather than fixing $C$ in advance,
the evaluator chooses when to stop by trading off recommendation quality against
the cost of further evaluation. We formulate this objective under required
completion. Let $\Pi^{\mathrm{req}}$ denote policies that evaluate only
unfinished configurations and recommend only completed ones. Its expected
net-utility objective is
\begin{equation}
\label{eq:required-completion-objective}
    \sup_{\pi\in\Pi^{\mathrm{req}}}\; \mathbb{E}_{\pi}\!\left[
        \theta_{\hat{k}_T}
        - \sum_{t=1}^T c_{A_t,b_t}
    \right].
\end{equation}
The Gittins allocation and stopping policy underlying \textsc{GittinsEval} is
derived from this adaptive-stopping formulation. For evaluation on a fixed
finite benchmark, Section~\ref{sec:anytime-recommendation} specializes the same
Gittins construction to an observable empirical target and augments it with an
LCB-style recommendation over all configurations to provide the anytime output
in the first formulation. Thus, our method supports both anytime recommendation
and adaptive stopping.

In our experiments, $\widetilde c_k$ is derived from published API prices and
is constant across examples for a given arm. The factor $\lambda>0$ scales
measurement cost to the utility of benchmark performance; a smaller final
batch is automatically charged according to its actual size $b_t$.

%% file: ICLR/method.tex
\section{Gittins Index Policy for Cost-Aware Bayesian Evaluation}
\label{sec:method}

We develop \textsc{GittinsEval} by starting from the Gittins policy for Markov
chain selection. Throughout this section, $t$ denotes the global allocation
step, while $n$ denotes the local number of batch pulls of a particular arm. We
first analyze the adaptive-stopping formulation under required completion in a
latent-mean Gaussian model. Each arm's posterior mean evolves as a Gaussian
random walk with a deterministic variance schedule. The corresponding Gittins
policy yields cost-aware allocation indices and a stopping rule. We then show how to
precompute the indices efficiently, adapt the construction to full-benchmark
empirical means, and extend the required-completion policy with an LCB-style
anytime recommendation for optional completion.
Figure~\ref{fig:gittins-eval-loop} summarizes the overall evaluation loop.

\begin{figure*}[t]
  \centering
  \includegraphics[width=\textwidth]{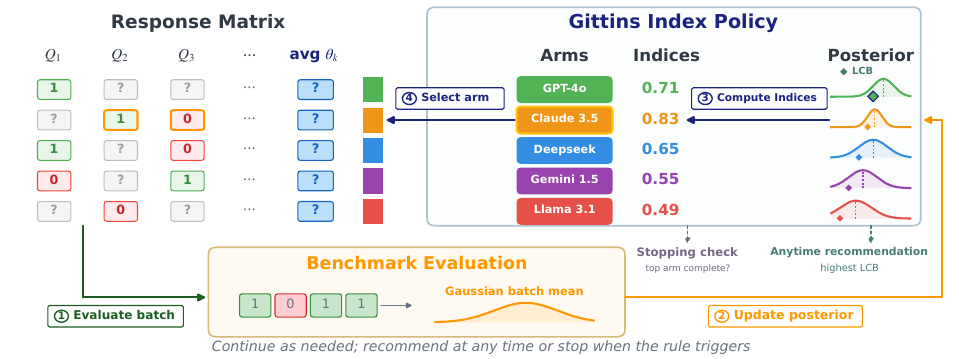}
  \caption{
  Overview of \textsc{GittinsEval}. Each iteration evaluates a fresh batch from the selected response-matrix row, uses a Gaussian approximation to its mean to update the posterior, recomputes the empirical-target Gittins indices, and selects the highest-index unfinished configuration for the next batch. The updated posterior yields an anytime recommendation based on the posterior mean minus one posterior standard deviation. The policy can return its current recommendation at any time or, under adaptive stopping, terminate once the highest-index configuration is fully evaluated. Highlighted cells mark the new batch; model names and values are illustrative. }
  \label{fig:gittins-eval-loop}
\end{figure*}

\subsection{Gaussian Bayesian Observation Model}
\label{sec:gaussian-observation}

Throughout, one pull means evaluating one batch. After $n$ local pulls of arm
$k$, let $\mathcal{D}_{k,n}$ denote the batch means collected from that arm.
Suppose the $i$th pull evaluates a fresh batch $\mathcal{B}_{k,i}$ of $B$
benchmark examples. The scalar observation used for posterior updating is
\[
    \overline{Y}_{k,i}
    :=
    \frac{1}{B}
    \sum_{j\in\mathcal{B}_{k,i}} Z_{k,j}.
\]
Under the working population model, example scores are conditionally i.i.d.
given $\theta_k$, so the batch mean has conditional mean $\theta_k$. For binary
correctness observations, its conditional variance is
$\theta_k(1-\theta_k)/B$. More generally, we use a Gaussian approximation to
the batch mean and a fixed observation-noise variance $\tau^2$:
\begin{equation}
    \theta_k \sim \mathcal{N}(\mu_0,v_0),
    \qquad
    \overline{Y}_{k,i}\mid\theta_k
    \sim
    \mathcal{N}(\theta_k,\tau^2).
    \label{eq:gaussian-model}
\end{equation}
Section~\ref{sec:design-choices} describes our choice of $\tau^2$ for binary and continuous benchmark scores.

To state the posterior update compactly, fix a representative arm and suppress the arm index. After $n$ local pulls,
\[
    \theta\mid\mathcal{D}_n
    \sim
    \mathcal{N}(\mu_n,v_n).
\]
After observing $\overline{Y}_{n+1}$, conjugacy gives
\begin{equation}
\label{eq:gaussian-posterior-update}
\begin{aligned}
    v_{n+1}
    &=
    \left(v_n^{-1}+\tau^{-2}\right)^{-1},
    \\
    \mu_{n+1}
    &=
    \mu_n
    +
    \frac{v_n}{v_n+\tau^2}
    \left(\overline{Y}_{n+1}-\mu_n\right).
\end{aligned}
\end{equation}
Under fixed $\tau^2$, the variance schedule is deterministic. In particular,
\[
    v_n-v_{n+1}
    =
    \frac{v_n^2}{v_n+\tau^2}.
\]
Thus, arms with the same initial variance $v_0$ and observation noise $\tau^2$ share the same posterior variance schedule. The framework can also be extended to accommodate arm-specific prior variances or noise levels.
Appendix~\ref{app:gaussian-random-walk-details} provides the full derivation of the posterior update and the induced Gaussian random-walk transition.

\subsection{Posterior Dynamics and the Gittins Policy}
\label{sec:gittins-policy}

The posterior update also characterizes the future evolution of an arm's posterior mean. Let $S_n$ denote the posterior mean of a representative arm after $n$ local pulls, viewed as a random variable over future observations. Conditional on the current realized state $S_n=s$, Eq.~(\ref{eq:gaussian-posterior-update}) implies
\begin{equation}
    S_{n+1}\mid S_n=s
    \sim
    \mathcal{N}
    \left(
        s,
        \frac{v_n^2}{v_n+\tau^2}
    \right).
    \label{eq:posterior-transition}
\end{equation}
Hence the posterior mean follows a Gaussian random walk with a transition variance determined by the local pull count.

The horizon is measured in batch pulls. Fix a finite per-arm horizon $H$, so
that an arm is completed after $H$ batches; for a benchmark with $N$ examples
and batch size $B$, $H:=\lceil N/B\rceil$. Its state is therefore $(S_n,n)$.
Selecting an arm changes only that arm's state and incurs its batch cost. A
smaller final batch, when needed, has a known stage-specific noise variance and
cost and is handled by the same construction.

The Gittins construction reduces the multi-arm allocation problem to a single-arm stopping problem. Intuitively, it asks: \emph{how attractive must the best alternative be before we prefer to stop evaluating the current arm?}

Fix one arm with state $S_n=s$, pull cost $c$, and remaining horizon $H-n$, alongside an outside terminal reward $\alpha$ available from competing arms. For $n < H$, the local decision balances the continuation cost $c$ and future information gain against terminating with outside value $\alpha$. Consequently, the terminal value is $V_H(s;\alpha) := \max\{s, \alpha\}$, and for $n < H$ the value function satisfies the Bellman recursion:
\begin{equation}
    V_n(s;\alpha) := \max\left\{ \alpha, \; -c + \mathbb{E}[V_{n+1}(S_{n+1};\alpha) \mid S_n=s] \right\}.
    \label{eq:bellman-compact}
\end{equation}
For $n < H$, the cost-aware Gittins index $G_n^c(s)$ is the smallest outside value for which stopping is optimal:
\begin{equation}
    G_n^c(s) := \inf\left\{ \alpha \in \mathbb{R} : \alpha \ge -c + \mathbb{E}[V_{n+1}(S_{n+1};\alpha) \mid S_n=s] \right\}.
    \label{eq:gittins-index}
\end{equation}
The index is larger when the arm currently looks promising, when uncertainty creates greater value of information, or when evaluation is cheaper.

At global allocation step $t$, let $n_k(t)$ denote the number of pulls previously made from arm $k$. For an unfinished arm,
\begin{equation}
    G_{k,t}
    :=
    G_{n_k(t)}^{c_k}
    \left(\mu_{k,n_k(t)}\right).
    \label{eq:arm-gittins-index}
\end{equation}
For a completed arm,
\[
    G_{k,t}:=\mu_{k,H}.
\]
In the required-completion adaptive-stopping formulation, the policy evaluates an unfinished arm with the largest index. If an arm attaining the largest index is already completed, the policy stops and recommends a completed arm with the largest terminal index. GittinsEval uses this allocation and stopping policy while retaining optional-completion anytime recommendations.

Since the arms evolve as independent Markov chains, the classical Markov-chain selection theorem implies that this batch-level Gittins policy is optimal for the latent-mean required-completion objective
\citep{dumitriu2003playing,scully2025gittins}. Appendix~\ref{app:gittins-proof}
states the objective and policy class formally and gives the reduction.

\subsection{Efficient Gittins Index Computation}
\label{sec:gittins-computation}

A direct dynamic-programming computation of the Gittins index involves a continuous posterior-mean state at every local stage. In our Gaussian setting, however, translation invariance implies that the stopping problem depends on the posterior mean and the outside option only through their difference. This reduces the computation to a one-dimensional dynamic program and, ultimately, to one stage-dependent stopping root that can be precomputed offline.

The Gaussian transition in Eq.~(\ref{eq:posterior-transition}) is translation invariant, and its variance schedule is deterministic. Consequently, for fixed $(c,v_0,\tau^2,H)$, the exact single-arm index has the form
\begin{equation}
    G_n^c(s)
    =
    s-r_n(c,v_0,\tau^2,H),
    \label{eq:gittins-root}
\end{equation}
where $r_n$ is a stage-dependent stopping root that does not depend on the current posterior mean $s$.

We compute these roots offline using a discretized dynamic program. Let
\[
    \left\{
        \widehat r_n(c,v_0,\tau^2,H)
    \right\}_{n=0}^{H-1}
\]
denote the resulting numerical approximation to the root schedule. Arms with the same pull cost, prior variance, observation noise, and horizon share the same table.

The numerical counterpart of the exact single-arm index is obtained by a single
lookup:
\begin{equation}
    \widehat G_n^c(s)
    :=
    s
    -
    \widehat r_n(c,v_0,\tau^2,H).
    \label{eq:gittins-index-lookup}
\end{equation}

The following proposition summarizes the computational guarantee of the FFT
construction adapted from
\citet[Sections~5.3.2--5.3.3 and Appendix~D.2]{xie2026gittins}.

\begin{proposition}[FFT precomputation complexity]
\label{prop:fft-precomputation-complexity}
On a uniform grid of $P$ points, FFT-accelerated Bellman updates compute the $H$-stage numerical root schedule in $\mathcal{O}(HP\log P)$ time and $\mathcal{O}(P)$ working memory, plus $\mathcal{O}(H)$ root storage. Once stored, each index lookup takes $\mathcal{O}(1)$ time, and selecting among $K$ arms takes $\mathcal{O}(K)$ time.
\end{proposition}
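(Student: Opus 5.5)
The argument exploits the translation invariance behind Eq.~(\ref{eq:gittins-root}). By translation invariance, $V_n(s;\alpha)=\alpha+W_n(s-\alpha)$, where $W_n$ is the single-arm value in the relative coordinate $x:=s-\alpha$. It satisfies $W_H(x)=\max\{x,0\}$ and
\[
W_n(x)=\max\bigl\{0,\,-c+(W_{n+1}*\varphi_n)(x)\bigr\},
\]
where $\varphi_n$ is the $\mathcal{N}(0,\sigma_n^2)$ density with $\sigma_n^2:=v_n^2/(v_n+\tau^2)$ from Eq.~(\ref{eq:posterior-transition}). With this reduction, the root is $r_n=-x_n^\ast$, where $x_n^\ast$ is the smallest $x$ such that $-c+(W_{n+1}*\varphi_n)(x)\ge 0$, i.e., the indifference point of Eq.~(\ref{eq:gittins-index}). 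The continuation value $-c+(W_{n+1}*\varphi_n)$ is nondecreasing in $x$, since $W_{n+1}$ is nondecreasing and convolution with a probability density preserves monotonicity. This makes the root a single crossing.

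\textbf{Step 1: discretize.} Fix a uniform grid of $P$ points covering $x\in[-L,L]$. Represent $W_{n+1}$ by its $P$ samples and $\varphi_n$ by a sampled, renormalized kernel on the same spacing, truncated to width $O(P)$. Outside the grid, extend $W$ by its known asymptotics: $0$ far to the left, and $x$ minus a constant far to the right. Padding handles these boundary terms in $O(P)$.

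\textbf{Step 2: FFT Bellman update.} The discrete convolution of two length-$O(P)$ vectors is computed via zero-padded FFT in $O(P\log P)$ time and $O(P)$ memory. Taking the pointwise max with $0$ after subtracting $c$ costs $O(P)$. Locating the sign change of the continuation value is monotone, so binary search or a linear scan with interpolation also costs $O(P)$. The root $\widehat r_n$ is then stored.

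\textbf{Step 3: backward induction.} Iterate Steps 1--2 for $n=H-1,\dots,0$. This runs in $H$ stages, each $O(P\log P)$, for $O(HP\log P)$ time in total. Only $W_{n+1}$ and $W_n$, together with FFT buffers, must be held at once, so working memory is $O(P)$, plus the $O(H)$ table of roots. The deterministic variance schedule from Section~\ref{sec:gaussian-observation} gives each kernel in closed form at $O(P)$ cost per stage. Arms sharing $(c,v_0,\tau^2,H)$ reuse the same table.

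\textbf{Step 4: online phase.} By Eq.~(\ref{eq:gittins-index-lookup}), an index is $s-\widehat r_n$: one array access and one subtraction, so $O(1)$. Completed arms use $\mu_{k,H}$ directly. Selecting the maximum over $K$ arms is a linear scan, $O(K)$.

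\textbf{Main obstacle.} The real subtlety is justifying that the truncated grid convolution represents the Bellman operator with bounded kernel support. Gaussian tails must be cut at a width proportional to $\sigma_n$ relative to the grid spacing, and the linear right tail of $W$ must be handled without an aperiodic-wrap error. For the stated complexity, it suffices that each stage uses a kernel of at most $O(P)$ points and a constant-factor padded FFT. I would adopt the construction of \citet{xie2026gittins}, which pads to size $2P$ and adds the analytic right-tail correction, so each stage stays $O(P\log P)$ without expanding memory.
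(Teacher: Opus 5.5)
Your complexity accounting is sound and has the same skeleton as the paper's argument:
\begin{itemize}
\item translation invariance reduces each stage to the centered recursion for $W_n$;
\item each stage costs one zero-padded FFT convolution of length-$\mathcal{O}(P)$ vectors, plus $\mathcal{O}(P)$ work for the max with $0$ and the root scan;
\item there are $H$ backward stages, with only two stage vectors and a kernel held in memory;
\item online use is an $\mathcal{O}(1)$ lookup per arm and an $\mathcal{O}(K)$ argmax.
\end{itemize}

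The genuine difference is in \emph{what} gets convolved. You sample $W_{n+1}$ and a truncated, renormalized Gaussian density, so each Bellman expectation becomes a discrete quadrature. That choice creates the obstacle you identify: picking the truncation width relative to $\sigma_n$, and padding $W_{n+1}$ with extrapolated tail values. The paper never faces this obstacle. It replaces $W_{n+1}$ by its piecewise-linear interpolant with boundary slopes $m_0=0$ and $m_P=1$, written as $w_0+\sum_i d_i(x-\xi_i)_+$. It then uses the closed form $\mathbb{E}[(z+Z)_+]=\operatorname{EI}_{\sigma_n}(z)$. The stage update is therefore the convolution of the slope increments $\boldsymbol d$ with the fixed-length kernel $e^{(n)}_\ell=\operatorname{EI}_{\sigma_n}((\ell-(P-1))\delta)$, $\ell=0,\ldots,2P-2$, zero-padded to $L\ge 3P-2$.

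This evaluates the expectation of the surrogate exactly for every $\sigma_n$. No density is truncated, and the linear right tail is carried by the boundary slope rather than by extrapolated samples or a separate correction. Nothing degrades if $\sigma_n$, which decreases along the variance schedule, falls below the grid spacing; in that regime a sampled density is a crude quadrature.

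So your route gives the same $\mathcal{O}(HP\log P)$ time and $\mathcal{O}(P)$ memory bounds. It does so for a different numerical root schedule, one that carries quadrature and truncation error on top of the interpolation error. Your description of the cited construction (padding to $2P$ plus an added tail correction) also does not match the construction the paper actually uses. Any $L=\mathcal{O}(P)$ suffices for the bound.

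One slip to fix: with $x=s-\alpha$, continuing is optimal iff $q_n(s-\alpha)>0$, and $G_n^c(s)=s-r_n$. So the root is $r_n=x_n^\ast$, not $-x_n^\ast$. This matches the paper's $q_n(r_n)=0$ and $\widehat r_n=\min\{\xi_j:q_n(\xi_j)\ge 0\}$. With your sign, the index would fall below $s$ exactly when the value of information exceeds the cost, which is precisely when it should exceed $s$. This does not affect the complexity claim.
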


The FFT computation replaces the $\mathcal{O}(HP^2)$ cost of explicit summation over the discretized state space. Thus the dynamic program is solved offline; online allocation requires only posterior updates, root lookups, and index comparisons. Appendices~\ref{app:one-dimension-gittins-computation} and~\ref{app:fft-computation} give the root characterization and FFT construction.

\subsection{Empirical-Mean Allocation, Stopping, and Anytime Recommendation}
\label{sec:anytime-recommendation}

For a fixed finite benchmark, the latent population mean $\theta_k$ remains
unobserved even when the response matrix is complete. Our offline
response-matrix experiments therefore use each configuration's full-row
empirical mean as an observable finite-benchmark proxy and specialize the
allocation, stopping, and recommendation rules to this target. We retain the
batch as the decision unit: under the Gaussian surrogate, the posterior mean of
each empirical target again follows a Gaussian random walk at batch boundaries
(with the arm-wise processes forming independent Markov chains).

\paragraph{Full-benchmark empirical target.}
Define the full-benchmark empirical target of arm $k$ as
\begin{equation}
\bar Z_k
:=
\frac{1}{N}\sum_{j=1}^N Z_{k,j}.
\label{eq:finite-target}
\end{equation}
It remains unknown until row $k$ is exhaustively observed.
Let $M_{k,t}$ and $V_{k,t}$ denote the posterior mean and variance,
respectively, of $\bar Z_k$ given the batches observed by
global time $t$.
Appendix~\ref{app:anytime-recommendation} derives this posterior process.

Under the Gaussian surrogate, $(M_{k,t},n_k(t))$ is a Markov state with a
deterministic transition variance schedule; at completion,
$M_{k,t}=\bar Z_k$. Thus the same Markov-chain selection result
applies.

For this finite-benchmark specialization, the required-completion objective is
\begin{equation}
\sup_{\pi\in\Pi^{\mathrm{req}}}\; \mathbb{E}_{\pi}\!\left[
    \bar Z_{\hat{k}_T}
    - \sum_{t=1}^T c_{A_t,b_t}
\right].
\label{eq:finite-required-completion-objective}
\end{equation}

\begin{proposition}[Exact batch-level Gittins optimality]
\label{prop:empirical-target-optimality}
Under the Gaussian surrogate, suppose the arms are independent, example scores
are conditionally i.i.d. within each arm, and the batch sizes, transition
variances, and positive batch costs are predetermined functions of the arm's
local stage. Assign
$G_{k,t}:=M_{k,t}-r_{k,n_k(t)}$ to an unfinished arm and
$G_{k,t}:=\bar Z_k$ to a completed arm. If a completed arm attains
$\max_k G_{k,t}$, stop and recommend a completed arm with largest terminal
index; otherwise, evaluate an unfinished arm with largest index. This exact
empirical-target Gittins policy attains the supremum in
Eq.~(\ref{eq:finite-required-completion-objective}) among required-completion
policies that act at batch boundaries.
\end{proposition}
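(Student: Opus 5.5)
The plan is to reduce Proposition~\ref{prop:empirical-target-optimality} to the Markov chain selection theorem for terminal-reward problems \citep{dumitriu2003playing,scully2025gittins}, which the paper already invokes for the latent-mean case. There are two steps. First, show that the empirical-target problem is an instance of that framework. Second, show that the stated index $M_{k,t}-r_{k,n_k(t)}$ coincides with the Gittins index of each arm's chain.

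\emph{Step 1 (Markov structure).} For each arm $k$, take the state to be the pair $(M_{k,n},n)$ for $n=0,\dots,H$. Under the Gaussian surrogate, the finite-population posterior derived in Appendix~\ref{app:anytime-recommendation} gives the following. Conditional on the current state, the next posterior mean is Gaussian with mean $M_{k,n}$ (the martingale property of posterior means). Its variance is a predetermined function of $n$, because it depends only on the batch sizes and the surrogate variances, not on the observed values. At stage $H$ every example in row $k$ has been observed, so $M_{k,H}=\bar Z_k$. The transition kernel therefore depends only on $(M_{k,n},n)$. Independence across arms, together with conditionally i.i.d.\ scores within each arm, makes the chains independent. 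Only the selected arm moves, and it incurs the deterministic positive cost $c_{k,b_n}$. Since the only reward is $\bar Z_{\hat k_T}=M_{\hat k_T,H}$ for a completed recommended arm, the objective in Eq.~(\ref{eq:finite-required-completion-objective}) has exactly the form ``terminal reward of a chain stopped in its absorbing (completed) state, minus the costs of advancing chains.'' This is the golf/Markov-chain-selection setting restricted to $\Pi^{\mathrm{req}}$. Boundedness of $\bar Z_k\in[0,1]$ and finiteness of $H$ give integrability and a finite horizon, so no limiting arguments are needed.

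\emph{Step 2 (index identification).} For a single arm, define the retirement problem with outside option $\alpha$ through the Bellman recursion in Eq.~(\ref{eq:bellman-compact}), with the terminal value $\max\{s,\alpha\}$ at $n=H$ and stage-dependent costs. The Gittins index is then the infimum defined in Eq.~(\ref{eq:gittins-index}). The proof of Eq.~(\ref{eq:gittins-root}) carries over, because both the transition and the terminal payoff are translation-equivariant in $(s,\alpha)$. By induction backward from $H$, one has $V_n(s+a;\alpha+a)=V_n(s;\alpha)+a$. Hence the stopping set has the form $\{\alpha\ge s-r_{k,n}\}$ for a root $r_{k,n}$ that does not depend on $s$, and the index equals $M_{k,n}-r_{k,n}$. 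For a completed arm the index is just its terminal reward $\bar Z_k$. With this identification, the stated rule is exactly the Gittins policy of the cited theorem: it stops when a completed arm attains the maximum index, and otherwise advances a maximal-index unfinished arm. The theorem then gives optimality among policies acting at batch boundaries.

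\emph{Main obstacle.} The delicate point is checking the hypotheses of the cited theorem. In particular, the index must be well defined and attained: the set in Eq.~(\ref{eq:gittins-index}) must be nonempty and closed, and the tie-breaking must be consistent. The plan is to show that, for fixed $s$, the function $\alpha\mapsto -c+\mathbb{E}[V_{n+1}]-\alpha$ is continuous, nonincreasing, and tends to $-c<0$ as $\alpha\to\infty$, using $V_{n+1}(\cdot;\alpha)-\alpha\downarrow 0$. I expect the Gaussian moment bounds to handle the limit as $\alpha\to-\infty$. Ties between a completed arm and an unfinished arm at the maximum index are harmless, because indifference holds at the index. A further check is that restricting to $\Pi^{\mathrm{req}}$ matches the theorem's requirement that a recommendation be made only from an absorbing state.
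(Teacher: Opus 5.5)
Your proposal follows essentially the same route as the paper. The paper likewise treats $(M_{k,t},n_k(t))$ as independent arm-wise Markov chains whose stage fixes the batch size, transition variance and cost, with terminal value $\bar Z_k$; it substitutes them into the Markov-chain selection reduction of Appendix~\ref{app:gittins-proof} and obtains the index form $M_{k,t}-r_{k,n_k(t)}$ from translational invariance. Your monotonicity and existence check for the root is more careful than the paper, which simply posits the crossing point $q_n(r_n)=0$.

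One small correction: under the Gaussian surrogate $\bar Z_k$ is Gaussian rather than $[0,1]$-valued. Integrability should therefore come from finite Gaussian moments over at most $KH$ pulls, not from boundedness.
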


This is a Bayesian optimality statement under the Gaussian surrogate; it's not a distribution-free finite-sample identification guarantee for
the realized response matrix. Appendices~\ref{app:gittins-proof} and~\ref{app:anytime-recommendation} give
the Markov-chain reduction and empirical-target transition calculation,
respectively.

Applying the same construction to the empirical-target chains, our
implementation precomputes gridded stopping roots $\widehat r_{k,n}$ and, for
an unfinished arm, uses
\begin{equation}
\widehat G_{k,t}
:=M_{k,t}-\widehat r_{k,n_k(t)}.
\label{eq:finite-gittins-index}
\end{equation}
The index of a completed arm is $\bar Z_k$. The hat marks the
numerical root approximation. Thus
Proposition~\ref{prop:empirical-target-optimality} applies to the corresponding
exact batch-level roots, not to their gridded approximation, continuation past
the stopping time, or the anytime recommendation below.

The numerical \textsc{GittinsEval} stopping time is the first post-batch time at which a completed arm attains the largest value of $\widehat G_{k,t}$. The recommendation at stopping is a completed arm with largest terminal index. In fixed-budget experiments, we record this stopping time but continue evaluation to the prescribed budget so that the anytime regret trajectories remain comparable across methods.

\paragraph{Anytime recommendation.}
To accommodate arbitrary evaluation budgets where no candidate has completed its full benchmark horizon, we evaluate an anytime recommendation rule. Although online sampling is guided by the numerical empirical-target index in Eq.~(\ref{eq:finite-gittins-index}), naive posterior-mean maximization can induce erratic early switching due to sparsely observed configurations with optimistic estimates. We therefore decouple online allocation from anytime reporting. 
We recommend the candidate that maximizes an uncertainty-penalized LCB-style score:
\begin{equation}
    \hat{k}_t
    \in
    \operatorname*{arg\,max}_{k=1,\ldots,K}
    \left\{
        M_{k,t}-\sqrt{V_{k,t}}
    \right\}.
    \label{eq:finite-lcb-recommendation}
\end{equation}
The uncertainty penalty vanishes as an arm becomes fully observed. Crucially, this penalty discounts poorly evaluated arms purely for reporting, without altering the numerical Gittins indices, the sampling trajectory, or the stopping time. When fixed-budget experiments continue past the stopping time, simple regret is measured against the anytime recommended arm's true full-row mean $\bar Z_{\hat{k}_t}$ rather than its penalized score.

Full expressions for the posterior mean and variance of the full-benchmark empirical target, the batch-level transition schedule, and pseudocode appear in Appendix~\ref{app:anytime-recommendation} and Algorithm~\ref{alg:gittins}.

\subsection{Design Choices for LLM Benchmarks}
\label{sec:design-choices}

\paragraph{Prior specification.}
For accuracy-style benchmarks, performance lies naturally on the $[0,1]$ scale. Our general prior is $\mathcal{N}(0.5,0.04)$, whose mean expresses no preference between low and high accuracy and whose standard deviation is $0.2$. To study the value of coarse task-level prior calibration, our retrospective response-matrix experiments also use an informative common prior selected from four pre-specified difficulty buckets. Each benchmark, or MMLU subject, is assigned to a bucket based on its aggregate empirical difficulty, and all arms in that problem instance share the corresponding prior. The bucket therefore provides no arm-specific information; in a prospective application, it could instead be selected using domain knowledge, evaluations of related tasks, or a separate pilot sample. Table~\ref{tab:prior_settings} gives the prior values and experimental assignments.

\paragraph{Observation noise.}
Under the working Bernoulli population model, a batch of $B$ binary correctness observations has conditional mean variance $\theta_k(1-\theta_k)/B$. Batching can also reduce wall-clock latency when examples are evaluated in parallel or request overhead is amortized, at the cost of less frequent adaptation \citep{zhou2024speeding}. Since $\theta_k$ is unknown, we use the conservative fixed approximation $\tau^2:=1/(4B)$, corresponding to the worst-case Bernoulli variance. This approximation may overestimate the noise for very easy or very hard tasks, but fixing $\tau^2$ makes the posterior variance schedule deterministic and allows the Gittins root schedule to be precomputed. More generally, any random variable supported on $[0,1]$ has variance at most $1/4$. We therefore use the same conservative bound for the continuous pairwise preference scores in AlpacaEval: the per-comparison working variance is $\tau_{\mathrm{cell}}^2:=1/4$, and, treating comparisons within a batch as conditionally independent in the working model, the batch-mean variance is $\tau^2:=\tau_{\mathrm{cell}}^2/B=1/(4B)$. This is a fixed working approximation rather than an estimate of the empirical AlpacaEval score variance.

\paragraph{Cost scaling.}
As in Section~\ref{sec:problem}, $\widetilde c_k$ is a raw per-example
API-price proxy and a full batch has effective cost
$c_k:=c_{k,B}=\lambda B\widetilde c_k$. We do not use example-specific token counts or
latency in this monetary-cost proxy. Smaller $\lambda$ makes additional
evaluation relatively cheap and therefore encourages more exploration,
whereas larger $\lambda$ makes evaluation more expensive and leads to earlier
stopping. We study several cost-scaling factors in addition to the raw
inference-cost ratios.

%% file: ICLR/experiment.tex
\section{Experiments}
\label{sec:experiments}

\paragraph{Benchmarks.}
We evaluate adaptive allocation policies using completed LLM response matrices built on four standard benchmarks: GSM8K for grade-school mathematical reasoning~\citep{cobbe2021training}, PIQA for physical commonsense reasoning~\citep{bisk2020piqa}, AlpacaEval for instruction-following quality~\citep{li2023alpacaeval}, and MMLU for multitask knowledge and reasoning~\citep{hendrycks2020measuring}. The benchmark examples define the evaluation tasks, while the observed scores come from existing response-matrix datasets. GSM8K, PIQA, and AlpacaEval use response matrices released with BanditEval~\citep{zhou2024speeding}. GSM8K contains $122$ model--sampling-configuration arms and $1000$ examples, while PIQA contains $103$ arms and $1000$ examples. The AlpacaEval matrix is derived from AlpacaEval~2.0 leaderboard comparisons and contains $152$ leaderboard-model arms and $805$ instructions; its entries are continuous scores in $[0,1]$ rather than binary correctness values. 

MMLU uses response matrices from DOVE~\citep{habba2025dove} across $57$ subject datasets. For each subject, we evaluate $15$ models with $100$ prompting templates; each arm is a model-template pair, giving $1500$ arms per subject. For MMLU, we group subjects into easy, medium, and hard categories according to their empirical mean arm quality. Appendix~\ref{app:dataset-detail} provides full dataset details and descriptive analyses.

\paragraph{Baselines.}
We compare the two \textsc{GittinsEval} variants with UCB-E~\citep{audibert2010best}, BanditEval's low-rank-factorization variant LRF~\citep{zhou2024speeding}, SySRs~\citep{lyu2026cutting}, PromptEval-BAI~\citep{polo2024efficient}, and the configuration-level Bayesian optimization baselines PBGI, LogEI, and LogEIPC. We use PromptEval's one-hot PE-OneHot variant and label it PromptEval-BAI in figures; PBGI and LogEIPC are cost-aware. \textsc{GittinsEval-G} uses the general prior, whereas \textsc{GittinsEval-S} uses a shared benchmark-level prior selected from four pre-specified difficulty buckets. Table~\ref{tab:prior_settings} lists the prior values and assignments. UCB-E, LRF, SySRs, and PromptEval-BAI do not natively account for heterogeneous evaluation costs or use Bayesian priors and Gittins-index stopping values. Initialization percentages refer to the sampling unit: a $5\%$ arm-level initialization fully evaluates $5\%$ of the arms, whereas LRF uses BanditEval's entry-level warm-up of $T_0=0.05KN$ arm--example entries.

\paragraph{Evaluation metric.}
The primary metric is simple regret versus evaluation resources. After step $t$, each policy recommends an arm $\hat{k}_t$. The \textsc{GittinsEval} variants use the LCB-style rule in Eq.~(\ref{eq:finite-lcb-recommendation}), while the baselines retain their native recommendation rules. Since the latent means $\theta_k$ are not observed even in the completed response matrices, we use the full-benchmark empirical targets in Eq.~(\ref{eq:finite-target}) as observable proxies and compute $\mathrm{SR}_t:=\max_k\bar Z_k-\bar Z_{\hat{k}_t}$. Thus, the uncertainty penalty affects the \textsc{GittinsEval} recommendation but not its reported regret. We plot regret against cumulative evaluation cost $C_t:=\sum_{s=1}^t b_s\tilde c_{A_s}$, where $b_s$ is the batch size and $\tilde c_k$ is the raw per-example cost; the unit-cost setting is the special case $\tilde c_k\equiv1$. A Bayesian optimization query evaluates one configuration on all benchmark examples and incurs its all-example cost.

\paragraph{Experimental conditions.}
For each benchmark, we consider both unit-cost and cost-aware settings. Each evaluation has unit cost in the former and incurs a model-specific cost derived from API-price proxies in the latter, requiring the policy to account for heterogeneous costs when allocating evaluations. Configurations that use the same base model share the same model-level price, computed from published input/output rates using fixed benchmark-level ratios described in Appendix~\ref{app:experiment-setup}; we do not model example-specific token usage.

Here, exhaustive evaluation means evaluating every configuration on every benchmark item. All methods receive a nominal budget equal to 10\% of exhaustive evaluation, measured in response-matrix entries under unit costs and total entry cost under heterogeneous costs. UCB-E and LRF run to this limit following the BanditEval protocol of \citet{zhou2024speeding}; for the \textsc{GittinsEval} variants, we record the stopping time from Section~\ref{sec:anytime-recommendation} while continuing each trajectory to the same limit. The Bayesian optimization baselines evaluate complete configurations, use random initialization capped at 5\% of the configurations, and then select configurations by acquisition until the remaining budget is exhausted, without overshooting the limit in the cost-aware setting.

For the main GSM8K, PIQA, and AlpacaEval plots, UCB-E and \textsc{GittinsEval} use batch size $B=8$. For the main MMLU aggregate plots, the small size bucket (at most $150$ examples) uses $B=2$ and the large size bucket (more than $400$ examples) uses $B=8$ for UCB-E and \textsc{GittinsEval}. Because LRF's repeated low-rank updates make smaller batches prohibitively slow, LRF uses $B=32$ throughout. The medium-size bucket, full subject-level MMLU results, and batch-size ablations are included in the appendix.

\paragraph{Reporting.}
Figures~\ref{fig:gsm8k-piqa-main} and~\ref{fig:mmlu-aggregate} report the main simple-regret curves. PromptEval-BAI is included for GSM8K, PIQA, and MMLU, but not AlpacaEval because its released best-arm-identification implementation assumes binary outcomes rather than continuous pairwise preference scores. Shaded bands denote one standard error. Dashed vertical lines mark average stopping times for \textsc{GittinsEval} and Bayesian optimization; each \textsc{GittinsEval} trajectory continues to the fixed budget after its stopping time. The $x$-axis reports cumulative evaluation cost as a percentage of exhaustive evaluation, and the $y$-axis reports raw simple regret. Appendix~\ref{app:anytime-recommendation} describes the LCB-style recommendation rule, while Appendix~\ref{app:additional-experiment-results} provides additional MMLU results, ablations, and prior-bucket diagnostics.

\begin{figure}[t]
\centering
\includegraphics[width=1\columnwidth]{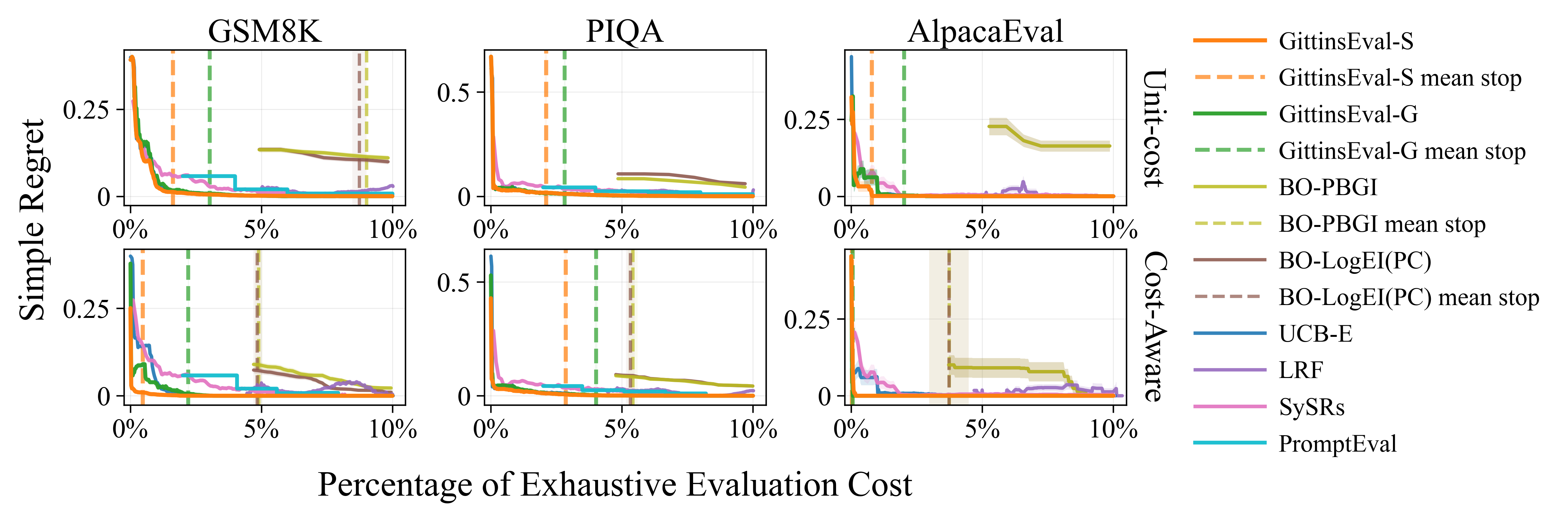}
\caption{Simple regret on GSM8K, PIQA, and AlpacaEval under unit-cost (top) and cost-aware (bottom) evaluation. The horizontal axis reports cumulative evaluation cost as a percentage of exhaustive evaluation. Curves average 100 runs for GSM8K and PIQA and 20 runs for AlpacaEval; shaded regions denote $\pm1$ standard error. Bayesian optimization curves begin after the $5\%$ initialization phase, and dashed vertical lines mark mean stopping times. \textsc{GittinsEval-S} reaches near-zero regret earliest in most settings.}
\label{fig:gsm8k-piqa-main}
\end{figure}

\begin{figure*}[t]
\centering
\includegraphics[width=1\textwidth]{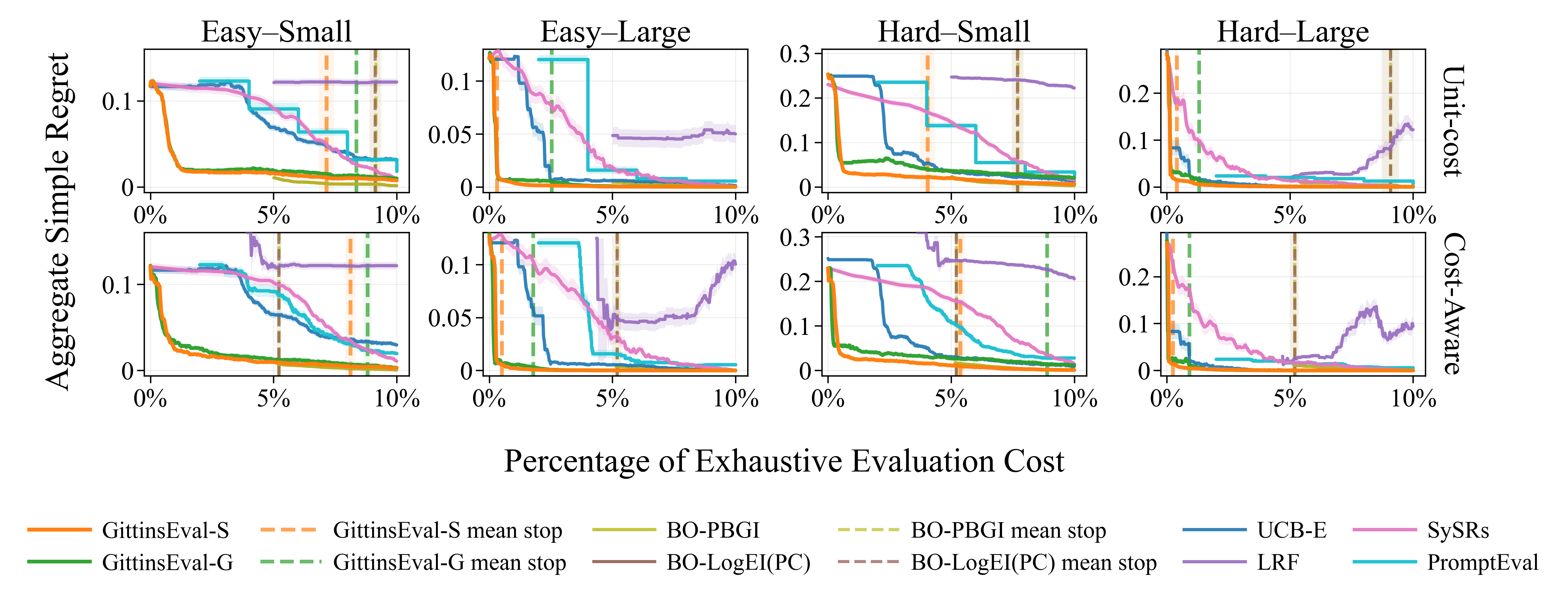}
\caption{Aggregate raw MMLU simple regret by subject size and difficulty under unit-cost (top) and cost-aware (bottom) evaluation. Columns show the easy--small, easy--large, hard--small, and hard--large buckets. Curves average raw simple regret over 20 runs per subject and are then aggregated within each bucket; shaded regions denote $\pm1$ standard error. Bayesian optimization curves begin after the $5\%$ initialization phase, and dashed vertical lines mark mean stopping times. \textsc{GittinsEval-S} achieves the strongest overall performance across the reported buckets.}
\label{fig:mmlu-aggregate}
\end{figure*}

\paragraph{Implementation details.}
For \textsc{GittinsEval}, we precompute empirical-target root schedules for the prior variance, batch-level noise and cost schedules, and horizon used in each condition. Online allocation then requires only posterior updates, empirical-target moment computation, and root-table lookup, keeping runtime close to UCB-E and well below LRF; Appendix~\ref{app:additional-experiment-results} reports timing comparisons. After random initialization, the Bayesian optimization baselines fit mixed Gaussian-process models and optimize their acquisition functions in BoTorch~\citep{balandat2020botorch}. Their categorical kernels represent BanditEval configurations with four coordinates and DOVE configurations with two. GSM8K and PIQA results average five response matrices and 20 randomized trials per matrix; AlpacaEval uses 20 trials on its single response matrix. MMLU results aggregate 20 trials per subject within each reported bucket.

\paragraph{Main findings.}
\textsc{GittinsEval} rapidly reduces simple regret in the low-budget regime under both cost settings. On GSM8K, PIQA, and AlpacaEval, configuration-level Bayesian optimization spends a substantial part of its 10\% budget on full-benchmark evaluations during initialization, leaving few acquisition-driven updates. By allocating partial batches across configurations, \textsc{GittinsEval} reaches low regret substantially earlier and also markedly outperforms LRF.

On MMLU, with 1,500 model--template arms per subject, \textsc{GittinsEval-S} consistently reaches low regret earlier than the cost-unaware bandit baselines. LRF has substantially higher simple regret than UCB-E across the MMLU aggregates, consistent with \citet{lyu2026cutting}. Bayesian optimization remains competitive on smaller tasks, where full-configuration evaluations are cheaper, but the \textsc{GittinsEval} variants are stronger across most size--difficulty buckets.

Across benchmarks, the \textsc{GittinsEval} stopping time typically occurs after the regret curves flatten, at $1\text{--}10\%$ of exhaustive-evaluation cost and earlier than the Bayesian optimization stopping times or fixed-budget endpoints. Comparing \textsc{GittinsEval-G} and \textsc{GittinsEval-S} also shows that prior calibration matters most on challenging MMLU subjects, while the general prior remains competitive on simpler tasks.

%% file: ICLR/conclusion.tex
\section{Conclusion}
We introduced \textsc{GittinsEval}, which formulates LLM configuration selection as a cost-aware Bayesian bandit problem and adaptively allocates benchmark queries via Gittins indices. By precomputing index schedules offline, the policy reduces online decision-making to lightweight posterior updates and table lookups while naturally accommodating heterogeneous evaluation costs. Empirically, GittinsEval is consistently competitive across benchmarks, with particularly strong gains over configuration-level Bayesian optimization on large-example benchmarks and over cost-unaware bandit baselines across large candidate spaces. Its anytime recommendation often attains near-zero simple regret using only $1\%\text{--}2\%$ of the exhaustive-evaluation cost; \textsc{GittinsEval} also offers an adaptive stopping rule that typically triggers at $1\%\text{--}10\%$.

%% file: ICLR/statements.tex
\section*{AI Use Statement}
The authors used generative-AI tools, including LLM-based assistants, to retrieve
and discover potentially relevant literature, support research ideation and
execution, draft and revise portions of the manuscript, and improve the clarity
of the writing. The tools also assisted in proving mathematical claims by
working out detailed derivations and proof steps from sketches and arguments
provided by the authors. The authors checked the cited sources, mathematical
derivations, and AI-assisted text, revised them as needed, and take full
responsibility for the paper's contents.

Separately, LLMs are the objects of evaluation in this work. The experiments use
precomputed response matrices released by BanditEval for GSM8K, PIQA, and
AlpacaEval, and by DOVE for MMLU. We did not query model APIs to generate new
evaluation responses for this study; the provenance and processing of the
experimental data are described in Appendix~\ref{app:dataset-detail}.

\section*{Reproducibility Statement}
The main text specifies the probabilistic model, allocation rule, numerical
index computation, stopping rule, evaluation metrics, and experimental
conditions. Appendix~\ref{app:theoretical-details} states the theoretical
assumptions and provides the derivations and proofs; Appendix~\ref{app:anytime-recommendation}
derives the finite-benchmark recommendation and index construction and gives
pseudocode. Appendix~\ref{app:dataset-detail} documents the data sources,
processing, and dataset statistics, while Appendix~\ref{app:experiment-setup}
reports the computing environment, repetitions, priors, batch sizes, cost
construction, baseline setup, and aggregation procedures. Additional results
and sensitivity analyses appear in Appendix~\ref{app:additional-experiment-results}.
All experiments operate on the precomputed response data rather than requiring
new LLM API calls.

%% file: appendices/theory_details_iclr.tex
\section{Theoretical Details}
\label{app:theoretical-details}

\subsection{Gaussian Random-Walk Details}
\label{app:gaussian-random-walk-details}

We derive the posterior update and the induced random-walk transition used in Section~\ref{sec:method}. Fix an arm and suppress the arm index. Let $\overline{Y}_{n+1}$ denote the batch mean observed on the next pull. Suppose that after $n$ local pulls the current posterior distribution of the arm mean is
\[
\theta\mid \mathcal{D}_n
\sim
\mathcal{N}(\mu_n,v_n),
\]
and that the next observation follows the fixed-noise Gaussian likelihood
\[
\overline{Y}_{n+1}\mid \theta
\sim
\mathcal{N}(\theta,\tau^2).
\]

By Bayes' rule, the posterior density after observing $\overline{Y}_{n+1}$ is proportional to the likelihood times the current posterior:
\[
p(\theta\mid \mathcal{D}_{n+1})
\propto
p(\overline{Y}_{n+1}\mid \theta)\,
p(\theta\mid \mathcal{D}_n).
\]
Substituting the Gaussian likelihood and the current Gaussian posterior gives
\[
p(\theta\mid \mathcal{D}_{n+1})
\propto
\exp\left(
-\frac{1}{2}
\left[
\frac{(\theta-\mu_n)^2}{v_n}
+
\frac{(\overline{Y}_{n+1}-\theta)^2}{\tau^2}
\right]
\right).
\]
Expanding the terms that depend on $\theta$,
\[
\frac{(\theta-\mu_n)^2}{v_n}
+
\frac{(\overline{Y}_{n+1}-\theta)^2}{\tau^2}
=
\left(v_n^{-1}+\tau^{-2}\right)\theta^2
-
2\left(
\frac{\mu_n}{v_n}
+
\frac{\overline{Y}_{n+1}}{\tau^2}
\right)\theta
+
\text{const.}
\]
Thus the exponent is quadratic in $\theta$, so the posterior is Gaussian. Matching the quadratic and linear coefficients with the Gaussian form
\[
\exp\left(
-\frac{(\theta-\mu_{n+1})^2}{2v_{n+1}}
\right)
\]
gives
\[
v_{n+1}^{-1}
=
v_n^{-1}+\tau^{-2},
\qquad
\frac{\mu_{n+1}}{v_{n+1}}
=
\frac{\mu_n}{v_n}
+
\frac{\overline{Y}_{n+1}}{\tau^2}.
\]
Therefore,
\[
v_{n+1}
=
\left(v_n^{-1}+\tau^{-2}\right)^{-1},
\qquad
\mu_{n+1}
=
v_{n+1}
\left(
\frac{\mu_n}{v_n}
+
\frac{\overline{Y}_{n+1}}{\tau^2}
\right).
\]
Equivalently, the posterior mean update can be written in the incremental form
\[
\mu_{n+1}
=
\mu_n
+
\frac{v_n}{v_n+\tau^2}
\left(\overline{Y}_{n+1}-\mu_n\right).
\]
This form shows that the new posterior mean moves from the old posterior mean toward the new observation, with gain
\[
\frac{v_n}{v_n+\tau^2}.
\]

To describe the posterior mean as a state process, we first compute the predictive distribution of the next observation. Conditioned on the current data $\mathcal{D}_n$, we may write
\[
\overline{Y}_{n+1}
=
\theta+\varepsilon_{n+1},
\qquad
\theta\mid\mathcal{D}_n\sim\mathcal{N}(\mu_n,v_n),
\qquad
\varepsilon_{n+1}\sim\mathcal{N}(0,\tau^2),
\]
with $\theta$ and $\varepsilon_{n+1}$ conditionally independent given $\mathcal{D}_n$. Hence
\[
\overline{Y}_{n+1}\mid\mathcal{D}_n
\sim
\mathcal{N}(\mu_n,v_n+\tau^2).
\]

We now view the posterior mean as the Markov state used below. Let $S_n$ denote the posterior mean after $n$ local pulls, viewed as a random variable over future observations. If the current realized state is $S_n=s$, then the posterior update gives
\[
S_{n+1}
=
s
+
\frac{v_n}{v_n+\tau^2}
\left(\overline{Y}_{n+1}-s\right).
\]
Since
\[
\overline{Y}_{n+1}-s
\mid S_n=s
\sim
\mathcal{N}(0,v_n+\tau^2),
\]
we have
\[
S_{n+1}\mid S_n=s
\sim
\mathcal{N}\!\left(
s,
\left(\frac{v_n}{v_n+\tau^2}\right)^2
(v_n+\tau^2)
\right).
\]
Therefore,
\[
S_{n+1}\mid S_n=s
\sim
\mathcal{N}\!\left(
s,
\frac{v_n^2}{v_n+\tau^2}
\right).
\]

Finally, under the fixed-noise likelihood, the variance update
\[
v_{n+1}
=
\left(v_n^{-1}+\tau^{-2}\right)^{-1}
\]
does not depend on the realized observation $\overline{Y}_{n+1}$. Therefore the posterior variance follows a deterministic schedule indexed only by the local pull count $n$, while the posterior mean follows the Gaussian random-walk transition above whenever the arm is selected. Across arms, this schedule is identical only for arms with the same initial variance and fixed noise level; otherwise each arm has its own deterministic schedule.

\subsection{Proof of Bayesian Optimality}
\label{app:mdp-formulation}
\label{app:gittins-proof}

\begin{proposition}[Latent-mean optimality under required completion]
\label{thm:gittins-optimality-restated}
Consider the Gaussian Bayesian bandit model in Section~\ref{sec:gaussian-observation} with independent arms, common prior $\mathcal{N}(\mu_0,v_0)$, fixed observation noise $\tau^2$, finite per-arm horizon $H$, and nonnegative effective costs of arm pulls $c_k$ for arms $k=1,\ldots,K$. Let $\Pi_H^{\mathrm{req}}$ be the class of policies that evaluate only unfinished arms and may terminate only by recommending an arm that has reached its finite horizon. For the required-completion objective
\[
\sup_{\pi\in\Pi_H^{\mathrm{req}}}\;
\mathbb{E}_{\pi}\!\left[
\theta_{\hat{k}_T}
-
\sum_{t=1}^{T} c_{A_t}
\right],
\]
an optimal policy is to assign terminal index $G_{k,t}=\mu_{k,H}$ to completed arms, assign $G_{k,t}=G_{n_k(t)}^{c_k}(\mu_{k,n_k(t)})$ to unfinished arms, and stop when an arm attaining $\max_k G_{k,t}$ is completed. Otherwise, the policy evaluates an unfinished arm with largest Gittins index $G_{k,t}$. At stopping, an optimal recommendation is any completed arm with largest posterior mean among completed arms, equivalently any completed arm with largest terminal index.
\end{proposition}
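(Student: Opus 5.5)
Our approach is to reduce the problem to the Markov chain selection framework of \citet{dumitriu2003playing} and \citet{scully2025gittins} and then invoke its Gittins optimality theorem. The first step is to replace the latent objective by a Bayesian one. By the tower property, for any policy in $\Pi_H^{\mathrm{req}}$ the recommended arm $\hat k_T$ is completed, and the decision to recommend it is measurable with respect to the observed history. Hence
\[
\mathbb{E}_\pi[\theta_{\hat k_T}]=\mathbb{E}_\pi[\mu_{\hat k_T,H}].
\]
Because arms are independent a priori and only the pulled arm generates data, arm $k$'s posterior depends only on its own observations. So the objective equals $\mathbb{E}_\pi[\mu_{\hat k_T,H}-\sum_t c_{A_t}]$. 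This expression involves only the observable states $(\mu_{k,n_k},n_k)$.

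Next, we verify the structural hypotheses of Markov chain selection, using Appendix~\ref{app:gaussian-random-walk-details}. Each arm's state $(S_n,n)$ evolves as a time-inhomogeneous Gaussian random walk with deterministic variance schedule $v_n^2/(v_n+\tau^2)$. It evolves only when the arm is pulled, with the arms' chains independent. A pull of an unfinished arm incurs cost $c_k\ge 0$. The terminal states $n=H$ carry reward $\mu_{k,H}$, and the policy may stop only by collecting the reward of one arm in a terminal state. This is exactly the terminal-reward (golf-type) selection problem. Its Gittins index is defined by the single-arm retirement problem with outside option $\alpha$, which is Eqs.~(\ref{eq:bellman-compact})--(\ref{eq:gittins-index}). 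For a terminal state, the index is the reward itself, $\mu_{k,H}$.

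We then apply the theorem: the policy that advances the chain of largest index, and stops and collects when the maximizer is terminal, is optimal. To be self-contained, we would sketch the prevailing-charge argument. For each arm define the prevailing charge $\underline G_{k,t}=\min_{s\le t}G_{k,s}$. Any policy's expected net reward is bounded by $\mathbb{E}[\underline G_{\hat k_T,T}]$, with equality when the policy never pulls an arm while its charge strictly exceeds its current index before stopping. The Gittins policy achieves equality, and an interleaving argument shows it maximizes the expected recommended prevailing charge. Finally, the recommendation claim follows because at stopping, among completed arms the index equals the posterior mean. So a completed maximizer of $\max_k G_{k,t}$ is a completed arm of largest posterior mean, and choosing any completed arm with largest terminal index gives the same value.

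The main obstacle is technical rather than conceptual. Both the state space and the outside option are continuous, and the cost may be zero. We would therefore check that the infimum in Eq.~(\ref{eq:gittins-index}) is attained and finite. This holds because $V_n(s;\alpha)-\alpha$ is nonincreasing and continuous in $\alpha$, and $V_n$ is integrable under Gaussian transitions. The finite horizon $H$ guarantees termination, which handles the case $c_k=0$. We would also fix tie-breaking, for example preferring completed arms, so that the stopping rule matches the theorem's hypotheses.
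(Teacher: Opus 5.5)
Your proposal is correct and is essentially the paper's proof. The paper likewise builds one chain per arm with state $(s_k,n_k)$, per-pull reward $-c_k$, terminal reward $s_k=\mathbb{E}[\theta_k\mid x_k]$, and the Gaussian random-walk transition. It identifies the required-completion objective with this selection problem's expected reward; your tower-property step $\mathbb{E}_\pi[\theta_{\hat k_T}]=\mathbb{E}_\pi[\mu_{\hat k_T,H}]$ just makes that identification explicit. It then invokes the Markov-chain-selection Gittins theorem, and the recommendation claim follows because terminal indices are posterior means.

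Two remarks in your optional additions are wrong as stated, though the main argument does not depend on them:

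\begin{enumerate}
\item Since $\underline G_{k,t}=\min_{s\le t}G_{k,s}\le G_{k,t}$ always holds, your equality condition for the prevailing-charge bound is vacuous. The correct condition is that at termination every non-recommended arm has $G_{k,T}=\underline G_{k,T}$, i.e., no arm is abandoned in a strictly favorable state.
\item For $c_k=0$ (with $v_0>0$), continuing strictly beats every outside option $\alpha$, so the index is $+\infty$ rather than finite. This is harmless, since zero-cost arms are simply completed first, but your claim that the infimum is finite holds only for $c_k>0$.
\end{enumerate}
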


\ifdefined\iclrsysrs
Proposition~\ref{prop:empirical-target-optimality} applies the same Markov-chain selection result to the full-benchmark empirical target at batch boundaries. Numerical root approximation, continuation beyond the stopping time, and the LCB-style anytime recommendation are not part of either optimality claim.
\fi

\begin{proof}
We prove the proposition by reducing the required-completion problem to a Markov chain selection problem. For each arm $k$, define one local chain:
\begin{description}
\item[State.]
The local state is $x_k=(s_k,n_k)$, where $s_k$ is the posterior mean and $n_k\in\{0,\ldots,H\}$ is the number of observed batches. The terminal states are those with $n_k=H$.

\item[Action.]
At each decision time, choose one arm $k$. Only the chosen arm moves; every other arm keeps its current state.

\item[Reward.]
If the chosen arm is unfinished, the immediate reward is $-c_k$. If the chosen arm is terminal, the process stops and the terminal reward is $s_k$.

\item[Transition.]
If the chosen arm is in state $(s_k,n_k)$ with $n_k<H$, then its next state is $(S',n_k+1)$, where
\[
S'\mid s_k,n_k
\sim
\mathcal{N}\!\left(s_k,\frac{v_{n_k}^2}{v_{n_k}+\tau^2}\right).
\]
\end{description}
The global state is the collection of local states, initialized at $((\mu_0,0),\ldots,(\mu_0,0))$.

By Eq.~(\ref{eq:gaussian-posterior-update}), the state of arm $k$ can be represented by $(\mu_{k,n_k},n_k)$. Under fixed $\tau^2$, Appendix~\ref{app:gaussian-random-walk-details} shows that the posterior mean follows the Gaussian random-walk transition, so this state is Markov. Conditional on this state, the next state distribution of arm $k$ is independent of the states and histories of all other arms, and pulling arm $k$ incurs only its own effective cost $c_k$. Since $s_k=\mathbb{E}[\theta_k\mid x_k]$, the terminal reward is the posterior expected reward from recommending completed arm $k$. Thus maximizing expected total reward in the constructed selection problem is exactly the required-completion objective.

The Gittins-index optimality theorem for Markov chain selection implies that an optimal policy selects a chain with largest index until a terminal chain has largest index \citep{dumitriu2003playing,scully2025gittins}. Applying this result to the single-arm value function used to define $G_n^c(s)$ gives the stated rule. Terminal chains have index equal to their posterior mean, so when a completed arm attains the largest index, recommending any completed arm with largest posterior mean among completed arms gives an optimal terminal reward.
\end{proof}

The same reduction allows predetermined stage-specific batch sizes. The local
stage $n$ then determines both the next batch-mean variance and its cost, so
including $n$ in the state preserves the Markov property and arm independence.

\subsection{One-Dimensional Gittins Index Computation}
\label{app:one-dimension-gittins-computation}

We now spell out the reduction used by the precomputation step in Algorithm~\ref{alg:gittins}; the corresponding translational-equivariance argument also appears in \citep[Section~5.3.2]{xie2026gittins}. Fix a single arm, suppress the arm index, and let $\alpha$ be the outside terminal reward available from the other arms. With the cost $c$ fixed and suppressed in the value-function notation, the terminal value is
\[
V_H(s;\alpha)=\max\{s,\alpha\},
\]
and, for $n<H$,
\[
\begin{aligned}
Q_n^{\mathrm{cont}}(s;\alpha)
&=
-c+\mathbb{E}\!\left[V_{n+1}(S_{n+1};\alpha)\mid S_n=s\right],
&\qquad
Q_n^{\mathrm{stop}}(s;\alpha)
&=\alpha,
\\
V_n(s;\alpha)
&=
\max\{Q_n^{\mathrm{cont}}(s;\alpha),Q_n^{\mathrm{stop}}(s;\alpha)\}.
\end{aligned}
\]
The transition kernel depends only on differences in posterior means:
\[
S_{n+1}\mid S_n=s\sim \mathcal{N}(s,\sigma_n^2),
\qquad
\sigma_n^2=\frac{v_n^2}{v_n+\tau^2}.
\]

\begin{lemma}[Translational invariance of the continuation value]
\label{lem:translational-invariance}
Fix the cost $c$ and Gaussian random-walk transition. For any shift $\beta\in\mathbb{R}$ and any stage $n<H$,
\[
Q_n^{\mathrm{cont}}(s+\beta;\alpha+\beta)
=
Q_n^{\mathrm{cont}}(s;\alpha)+\beta.
\]
\end{lemma}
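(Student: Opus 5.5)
The plan is a backward induction over stages that proves a stronger statement for the value function itself. The claim is that, for every stage $m\in\{n+1,\ldots,H\}$ and every $s,\alpha,\beta\in\mathbb{R}$,
\[
V_m(s+\beta;\alpha+\beta)=V_m(s;\alpha)+\beta .
\]
The continuation-value identity in Lemma~\ref{lem:translational-invariance} then follows in one step from the Gaussian random-walk transition. The key structural facts are that the kernel $\mathcal{N}(s,\sigma_n^2)$ depends on $s$ only as a location parameter, and that $\sigma_n^2$ is a deterministic function of $n$ alone.

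\emph{Base case} ($m=H$). Here $V_H(s+\beta;\alpha+\beta)=\max\{s+\beta,\alpha+\beta\}=\max\{s,\alpha\}+\beta$, since the maximum commutes with adding a common constant.

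\emph{Continuation step.} Suppose the identity holds for $V_{m+1}$, and take $m<H$. Conditional on $S_m=s+\beta$, write $S_{m+1}=s+\beta+\sigma_m\xi$ with $\xi\sim\mathcal{N}(0,1)$, where the law of $\xi$ does not depend on $s$, $\beta$, or $\alpha$. Then
\[
Q_m^{\mathrm{cont}}(s+\beta;\alpha+\beta)=-c+\mathbb{E}\bigl[V_{m+1}(s+\sigma_m\xi+\beta;\alpha+\beta)\bigr]=-c+\mathbb{E}\bigl[V_{m+1}(s+\sigma_m\xi;\alpha)\bigr]+\beta ,
\]
using the inductive hypothesis pointwise in $\xi$. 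The right-hand side equals $Q_m^{\mathrm{cont}}(s;\alpha)+\beta$. Note that the cost $c$ is unaffected because it does not depend on the state.

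\emph{Closing the induction.} Since $Q_m^{\mathrm{stop}}(s+\beta;\alpha+\beta)=\alpha+\beta$, taking the maximum of the two shifted quantities gives $V_m(s+\beta;\alpha+\beta)=V_m(s;\alpha)+\beta$. Running the induction from $H$ down to $n+1$ and applying the continuation step once more at stage $n$ yields the lemma.

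The main obstacle is integrability: the expectations must be finite so that pulling out the additive $\beta$ is legitimate. I would handle this with an inductive side bound. Specifically, $V_m(s;\alpha)$ is measurable, being a maximum of continuous functions, and satisfies
\[
\max\{s,\alpha\}-(H-m)c\le V_m(s;\alpha)\le \max\{s,\alpha\},
\]
or at least $|V_m(s;\alpha)|\le |s|+|\alpha|+(H-m)|c|$. For the upper bound, the continuation branch gives at most $-c+\mathbb{E}\max\{S_{m+1},\alpha\}$, and a crude bound $|s|+|\alpha|+\mathbb{E}|\sigma_m\xi|+\ldots$ suffices. In either form, $V_{m+1}$ has linear growth, so its expectation under a Gaussian law is finite. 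With finiteness established, the linearity step in the continuation step is valid.
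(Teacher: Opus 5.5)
Your argument is correct and is essentially the paper's proof. Like the paper, you run backward induction on the stronger identity $V_m(s+\beta;\alpha+\beta)=V_m(s;\alpha)+\beta$, using that the Gaussian kernel is a location family whose variance depends only on the stage, and that $\max$ commutes with a common shift.

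One slip is in your integrability aside. The bound $V_m(s;\alpha)\le\max\{s,\alpha\}$ is false because of the value of information: for example, $V_{H-1}(\alpha;\alpha)=\alpha+\max\{0,\sigma_{H-1}/\sqrt{2\pi}-c\}$. For small $c$, the same example at $s=\alpha=0$ also breaks $|V_m|\le|s|+|\alpha|+(H-m)|c|$. The valid bound is $|V_m(s;\alpha)|\le|s|+|\alpha|+\sum_{j=m}^{H-1}\bigl(|c|+\sigma_j\sqrt{2/\pi}\bigr)$. Finiteness is not actually needed, though: since $V_{m+1}\ge\alpha$, the expectation is well defined in $(-\infty,\infty]$ and the shift by $\beta$ passes through regardless.
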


\begin{proof}[Proof of Lemma~\ref{lem:translational-invariance}]
The terminal value satisfies
\[
V_H(s+\beta;\alpha+\beta)
=
\max\{s+\beta,\alpha+\beta\}
=
V_H(s;\alpha)+\beta.
\]
Assume the corresponding shift identity holds for the next-stage value. Since the transition kernel is Gaussian with additive mean, the next state from $s+\beta$ has the same distribution as $S_{n+1}+\beta$ when the current state is $s$. Therefore
\[
\begin{aligned}
Q_n^{\mathrm{cont}}(s+\beta;\alpha+\beta)
&=
-c+
\mathbb{E}\!\left[
V_{n+1}(S_{n+1}+\beta;\alpha+\beta)\mid S_n=s
\right]
\\
&=
-c+
\mathbb{E}\!\left[
V_{n+1}(S_{n+1};\alpha)+\beta\mid S_n=s
\right]
\\
&=
Q_n^{\mathrm{cont}}(s;\alpha)+\beta.
\end{aligned}
\]
Taking the maximum with the shifted outside reward gives the next-stage value identity needed for the induction. The result follows by backward induction.
\end{proof}

\begin{corollary}[Centered continuation value]
\label{cor:centered-reduction}
Let $x:=s-\alpha$ and define $q_n(x):=Q_n^{\mathrm{cont}}(x;0)$. For any $n<H$,
\[
Q_n^{\mathrm{cont}}(s;\alpha)
=
q_n(s-\alpha)+\alpha.
\]
Thus evaluating the unfinished arm is worthwhile exactly when $q_n(s-\alpha)>0$.
\end{corollary}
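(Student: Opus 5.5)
The corollary follows directly from Lemma~\ref{lem:translational-invariance} by a particular choice of shift. I will apply the lemma at stage $n<H$ with the pair $(s-\alpha,0)$ in place of $(s,\alpha)$ and shift $\beta:=\alpha$. This gives
\[
Q_n^{\mathrm{cont}}(s;\alpha)
=
Q_n^{\mathrm{cont}}\bigl((s-\alpha)+\alpha;\,0+\alpha\bigr)
=
Q_n^{\mathrm{cont}}(s-\alpha;0)+\alpha
=
q_n(s-\alpha)+\alpha .
\]
This is the first claim. The only care needed is that the lemma holds for every real shift and every $s$, and it does, because the Gaussian kernel $\mathcal{N}(s,\sigma_n^2)$ has a variance that does not depend on $s$.

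For the second claim, I compare the two actions in the Bellman recursion at stage $n$. Stopping yields $Q_n^{\mathrm{stop}}(s;\alpha)=\alpha$, and continuing yields $Q_n^{\mathrm{cont}}(s;\alpha)=q_n(s-\alpha)+\alpha$. Subtracting $\alpha$ from both, continuing is strictly better than stopping if and only if $q_n(s-\alpha)>0$. Here ``worthwhile'' means strictly better, and at $q_n=0$ the two actions tie. This matches the infimum definition of $G_n^c$ in Eq.~(\ref{eq:gittins-index}), since stopping is weakly optimal exactly when $\alpha\ge Q_n^{\mathrm{cont}}(s;\alpha)$.

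I expect no real obstacle; the only point to verify is well-definedness. The expectations defining $Q_n^{\mathrm{cont}}$ are finite because each $V_{n+1}(\cdot;\alpha)$ has at most linear growth, being a maximum of affine-type functions and constants, and Gaussians have finite first moments. A backward induction from $V_H(s;\alpha)=\max\{s,\alpha\}$ maintains this linear-growth bound at every stage, which justifies both applying the lemma and subtracting $\alpha$.
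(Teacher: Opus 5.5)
Your proposal is correct and matches the paper's proof: apply Lemma~\ref{lem:translational-invariance} with state $s-\alpha$, outside reward $0$, and shift $\beta=\alpha$, then compare $Q_n^{\mathrm{cont}}(s;\alpha)=q_n(s-\alpha)+\alpha$ with the stopping value $\alpha$. Your linear-growth check, which ensures the expectations are finite, is a sound addition that the paper leaves implicit.
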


\begin{proof}
Apply Lemma~\ref{lem:translational-invariance} to the centered state $x=s-\alpha$, outside reward $0$, and shift $\beta=\alpha$. Then
\[
Q_n^{\mathrm{cont}}(s;\alpha)
=
Q_n^{\mathrm{cont}}(s-\alpha;0)+\alpha
=
q_n(s-\alpha)+\alpha.
\]
Continuing is preferable to the outside reward precisely when $Q_n^{\mathrm{cont}}(s;\alpha)>\alpha$, which is equivalent to $q_n(s-\alpha)>0$.
\end{proof}

Define the centered value
\[
W_n(x):=V_n(\alpha+x;\alpha)-\alpha.
\]
Then the recursion no longer depends on $\alpha$:
\begin{equation}
\label{eq:centered-dp}
\begin{aligned}
W_H(x)
&=
\max\{x,0\},
\\
W_n(x)
&=
\max\{0,q_n(x)\},
\\
q_n(x)
&=
-c+\mathbb{E}\!\left[W_{n+1}(X_{n+1})\mid X_n=x\right],
\end{aligned}
\end{equation}
where $X_{n+1}\mid X_n=x\sim\mathcal{N}(x,\sigma_n^2)$ and the recursion for $W_n$ applies for $n<H$. This is the one-dimensional dynamic program: for each stage $n$, the numerical grid is only over the centered posterior-mean advantage $x$.

For comparison with an outside reward $\alpha$, evaluating an unfinished arm is worthwhile exactly when $q_n(s-\alpha)>0$. Let $r_n$ denote the crossing point satisfying
\[
q_n(r_n)=0.
\]
The Gittins index for cost $c$ is therefore
\begin{equation}
\label{eq:gittins-root-index}
G_n^c(s)=s-r_n.
\end{equation}
The online policy does not need to store the full two-argument value function. During precomputation we represent the one-dimensional functions $W_n$ and $q_n$ on a grid in order to propagate the recursion backward. The exact policy is characterized by the roots $\{r_n\}_{n=0}^{H-1}$, while the numerical policy stores their gridded approximations $\{\widehat r_n\}_{n=0}^{H-1}$ defined below.

\subsection{FFT Implementation}
\label{app:fft-computation}

For completeness, we summarize the offline FFT computation used in our implementation. The piecewise-linear expected-improvement construction and its Gaussian expectation are derived in \citep[Section~5.3.3 and Appendix~D.2]{xie2026gittins}.

Fix $n<H$, let $Z_n\sim\mathcal{N}(0,\sigma_n^2)$ with $\sigma_n^2:=v_n^2/(v_n+\tau^2)$, and use a uniform grid $\xi_j:=\xi_0+j\delta$, $j=0,\ldots,P-1$. For $w_j:=W_{n+1}(\xi_j)$, set
\[
m_0:=0,\qquad
m_i:=\frac{w_i-w_{i-1}}{\delta}\ (1\leq i<P),\qquad
m_P:=1,\qquad d_i:=m_{i+1}-m_i.
\]
The boundary-extended piecewise-linear surrogate is
\[
\widehat W_{n+1}(x):=w_0+\sum_{i=0}^{P-1}d_i(x-\xi_i)_+.
\]
Writing $\operatorname{EI}_{\sigma}(z):=\mathbb{E}[(z+Z)_+]=z\Phi(z/\sigma)+\sigma\phi(z/\sigma)$ for $Z\sim\mathcal{N}(0,\sigma^2)$ gives
\[
\mathbb{E}[\widehat W_{n+1}(\xi_j+Z_n)]
=w_0+\sum_{i=0}^{P-1}d_i\operatorname{EI}_{\sigma_n}((j-i)\delta).
\]

The sum is a linear convolution. Store $\boldsymbol d=(d_0,\ldots,d_{P-1})$ and the length-$(2P-1)$ kernel
\[
e^{(n)}_\ell:=\operatorname{EI}_{\sigma_n}((\ell-(P-1))\delta),
\qquad \ell=0,\ldots,2P-2.
\]
If $\boldsymbol h^{(n)}:=\boldsymbol d*\boldsymbol e^{(n)}$ is their full convolution, the required $P$ values are the \emph{valid} slice $h^{(n)}_{P-1},\ldots,h^{(n)}_{2P-2}$. The full convolution has length $P+(2P-1)-1=3P-2$, so zero-padding both inputs to an FFT length $L\geq3P-2$ prevents circular wrap-around; for the experimental grid $P=1025$, we use $L=4096$.

The backward update and numerical crossing are
\[
q_n(\xi_j):=-c+w_0+h^{(n)}_{j+P-1},\qquad
W_n(\xi_j):=\max\{0,q_n(\xi_j)\},\qquad
\widehat r_n:=\min\{\xi_j:q_n(\xi_j)\geq0\}.
\]
Each stage takes $\mathcal{O}(P\log P)$ time and $\mathcal{O}(P)$ working memory, giving $\mathcal{O}(HP\log P)$ time over $H$ stages rather than $\mathcal{O}(HP^2)$ for explicit summation. In exact arithmetic the FFT reproduces the discrete convolution; approximation relative to the continuous dynamic program remains from the grid, boundary extension, root discretization, and floating-point arithmetic.

%% file: appendices/anytime_recommendation.tex
\section{Anytime Recommendation for Full-Benchmark Empirical Means}
\label{app:anytime-recommendation}

\paragraph{Posterior of the full empirical mean.}
For the finite-benchmark specialization in Section~\ref{sec:anytime-recommendation}, fix an arm and suppress its index. Of the $N$ available scores, let $N_t$ have been observed with sum $R_t$, leaving $m_t=N-N_t$ unobserved. Conditional on the latent mean $\theta$, the remaining scores are independent under our Gaussian surrogate, with mean $\theta$ and variance $\tau_{\mathrm{cell}}^2$. Let $\mathcal H_t$ denote all observations collected through global allocation step $t$. Given the latent posterior $\theta\mid\mathcal H_t\sim\mathcal N(\mu_t,v_t)$,
\[
\bar Z:=\frac{R_t+\sum_{j\,\mathrm{unobserved}}Z_j}{N}.
\]
Restoring the arm index, let $N_{k,t}$ be the observed-example count,
$m_{k,t}:=N-N_{k,t}$ the remaining-example count, and $n_k(t)$ the number of
posterior updates (batches). Taking conditional expectations and applying the
law of total variance give
\begin{equation}
\label{eq:finite-posterior-moments}
\begin{aligned}
M_{k,t}
&:=\mathbb E[\bar Z_k\mid\mathcal H_t]
=\frac{R_{k,t}+m_{k,t}\mu_{k,n_k(t)}}{N},\\
V_{k,t}
&:=\operatorname{Var}(\bar Z_k\mid\mathcal H_t)
=\frac{m_{k,t}^2v_{k,n_k(t)}+m_{k,t}\tau_{\mathrm{cell}}^2}{N^2}.
\end{aligned}
\end{equation}
For full batches, $\tau_{\mathrm{cell}}^2=B\tau^2$. Both terms are needed: uncertainty about unobserved example scores remains even when the latent mean is accurately estimated. At completion, the posterior collapses to the known full-row mean. Before any observations, $M_0=\mu_0$ and $V_0=v_0+\tau_{\mathrm{cell}}^2/N$. These moments follow from the Gaussian approximation, including when the original benchmark scores are binary or bounded.

\paragraph{Empirical-target indices and stopping.}
For fixed $N$, prior $(\mu_0,v_0)$, and per-example noise, the posterior mean of the full-benchmark empirical target is an affine transformation of the latent posterior mean:
\[
M_t=a\mu_t+(1-a)\mu_0,
\qquad
a:=1+\frac{\tau_{\mathrm{cell}}^2}{Nv_0}.
\]
Let $H:=\lceil N/B\rceil$ and let
$b_n:=\min\{B,N-nB\}$ be the predetermined size of local batch $n+1$. Its
noise variance is $\tau_{\mathrm{cell}}^2/b_n$, and
\[
M_{n+1}\mid M_n
\sim\mathcal N\!\left(M_n,
a^2\frac{v_n^2}{v_n+\tau_{\mathrm{cell}}^2/b_n}\right).
\]
This deterministic schedule of transition variances lets us reuse the centered
recursion in Eq.~(\ref{eq:centered-dp}) with state $M_n$. Charging batch cost
$c_{k,n}:=\lambda b_n\widetilde c_k$ gives the batch-level root schedule
$\{r_{k,n}\}_{n=0}^{H-1}$. Thus an unfinished arm has exact
index
\[
G_{k,t}
:=M_{k,t}-r_{k,n_k(t)},
\]
whereas a completed arm has terminal index $\bar Z_k$. The
transition variance here is the uncertainty resolved by a new batch,
not the total remaining variance $V_t$ used in the recommendation penalty.

\begin{proof}[Proof of Proposition~\ref{prop:empirical-target-optimality}]
At batch boundaries, $(M_{k,t},n_k(t))$ is an arm-wise Markov state, and the
chains are independent under the Gaussian surrogate. The local stage determines
the next batch size, transition variance, and positive batch cost. Completion
reveals the terminal value $M_{k,t}=\bar Z_k$. Substituting these
chains into the Markov-chain selection reduction in
Appendix~\ref{app:gittins-proof} therefore gives the stated Gittins policy and
required-completion optimality among policies that act at batch boundaries.
\end{proof}

The implementation replaces the exact roots by gridded approximations
$\widehat r_{k,n}$, with each transition using the corresponding
batch size, including a partial final batch. Budget checks occur after each
batch, so the final batch can cross the nominal budget. Completed arms have
numerical index equal to their known empirical mean. The stopping
rule records the first post-batch crossing at which a completed arm attains the
largest numerical index. The optional-completion evaluation may end before
this signal because of a budget or external interruption, use the signal as an
adaptive endpoint, or continue sampling past it; in every case, the LCB-style
anytime recommendation remains available. Numerical discretization,
continuation past the signal, and the recommendation rule are outside the
batch-level optimality claim.

\begin{algorithm}[H]
\caption{Optional-completion Gittins evaluation for full-benchmark empirical scores}
\label{alg:gittins}
\begin{algorithmic}[1]
\REQUIRE Prior $(\mu_0,v_0)$, raw per-example costs $\{\widetilde c_k\}_{k=1}^K$, noise $\tau_{\mathrm{cell}}^2$, $N$ examples, batch size $B$, cost scale $\lambda$, and an evaluation-end condition (for example, a budget, external interruption, or the inherited stopping signal).
\STATE Set $H=\lceil N/B\rceil$ and $b_n=\min\{B,N-nB\}$; precompute gridded empirical-target roots $\{\widehat r_{k,n}\}_{n=0}^{H-1}$ using batch noise $\tau_{\mathrm{cell}}^2/b_n$ and batch cost $\lambda b_n\widetilde c_k$.
\STATE Initialize latent moments $(\mu_{k,0},v_{k,0})=(\mu_0,v_0)$, batch counts $n_k(0)=0$, observed counts $N_{k,0}=0$, sums $R_{k,0}=0$, and $t=0$.
\STATE Compute $(M_{k,0},V_{k,0})$ by Eq.~(\ref{eq:finite-posterior-moments}); set $\hat{k}_0$ by Eq.~(\ref{eq:finite-lcb-recommendation}).
\STATE Compute $\widehat G_{k,0}=M_{k,0}-\widehat r_{k,0}$ for every arm.
\WHILE{an unfinished arm exists and evaluation has not ended}
    \STATE Select an unfinished arm $A_t$ with largest $\widehat G_{k,t}$.
    \STATE Evaluate $b=\min\{B,N-N_{A_t,t}\}$ fresh examples and observe their batch mean $\overline Y$; charge $\lambda b\widetilde c_{A_t}$ to the budget.
    \STATE Let $n=n_{A_t}(t)$; update $(\mu_{A_t,n+1},v_{A_t,n+1})$ from $(\mu_{A_t,n},v_{A_t,n})$ by Eq.~(\ref{eq:gaussian-posterior-update}) with batch noise $\tau_{\mathrm{cell}}^2/b$.
    \STATE For every arm $k$, set $R_{k,t+1}:=R_{k,t}+\mathbf{1}\{k=A_t\}b\overline Y$ and $N_{k,t+1}:=N_{k,t}+\mathbf{1}\{k=A_t\}b$.
    \STATE Set $n_k(t+1):=n_k(t)+\mathbf{1}\{k=A_t\}$ for every arm $k$, then set $t\leftarrow t+1$.
    \STATE Compute $(M_{k,t},V_{k,t})$ by Eq.~(\ref{eq:finite-posterior-moments}); set and report $\hat{k}_t$ by Eq.~(\ref{eq:finite-lcb-recommendation}).
    \STATE Set $\widehat G_{k,t}=M_{k,t}$ for completed arms and $\widehat G_{k,t}=M_{k,t}-\widehat r_{k,n_k(t)}$ otherwise.
    \IF{a completed arm attains $\max_k \widehat G_{k,t}$}
        \STATE Record the inherited Gittins stopping signal if this is the first crossing.
    \ENDIF
\ENDWHILE
\STATE \textbf{return} $\hat{k}_t$.
\end{algorithmic}
\end{algorithm}

\paragraph{Why subtract a posterior standard deviation?}
The required-completion adaptive-stopping formulation excludes unfinished arms from its terminal output, but the anytime formulation must be able to recommend such an arm at any time. A mean-only recommendation treats a high estimate based on very few examples as favorably as the same estimate supported by many examples. Under a general prior that is optimistic for a benchmark, an unobserved or sparsely observed arm can repeatedly become the recommended arm, lose its lead when evaluated, and be replaced by another uncertain arm. The score $M_{k,t}-\sqrt{V_{k,t}}$ discounts this uncertainty without preventing later recommendation once enough evidence accumulates. Changing the target from the latent mean to the full-benchmark empirical mean alone does not resolve this effect: when all arms share $N$, prior, and noise, the affine transformation above preserves their mean-only ranking on the same observed data.

The coefficient is fixed to one for the reported LCB-style rule. It affects which arm is recommended, not which arm is sampled or the stopping time. The penalty is a stabilization choice rather than a confidence guarantee. In particular,
\[
\mathbb E\left[\max_j\bar Z_j-\bar Z_k\mid\mathcal H_t\right]
=\mathbb E\left[\max_j\bar Z_j\mid\mathcal H_t\right]-M_{k,t},
\]
so mean-only recommendation remains Bayes optimal for posterior expected simple regret under a trusted model. LCB can reduce switching without improving average regret in every setting.

\paragraph{Paired recommendation diagnostic.}
An existing diagnostic isolates the recommendation rule on one GSM8K response matrix ($122$ arms and $1000$ examples, matrix seed 1), using sampling seed 0, batches of $16$, unit costs, and Gittins cost scale $10^{-4}$. For each prior, both rules use the same empirical-target Gittins sampling trajectory and differ only in the recommendation computed after each batch. Under the general prior $\mathcal N(0.5,0.04)$, subtracting one posterior standard deviation reduces recommendation switches from $139$ to $38$ and budget-weighted mean simple regret from $0.04824$ to $0.02903$. Under the data-specific prior $\mathcal N(0.2,0.01)$, switches decrease from $49$ to $37$, while budget-weighted mean regret slightly increases from $0.03128$ to $0.03177$. All four recommendations have zero regret at the budget endpoint. These single-seed observations illustrate the stabilization mechanism, especially under the general prior; they are not aggregate results or a guarantee of improvement across benchmarks. The comparison uses unsmoothed post-batch recommendations over a common $12{,}200$-evaluation budget; any observation beyond that budget is excluded.

%% file: appendices/dataset_details.tex
\section{Dataset Details}
\label{app:dataset-detail}

\subsection{Dataset Description}
\label{app:dataset-description}

\paragraph{GSM8K and PIQA.}
For GSM8K and PIQA, we use the response matrices released with BanditEval~\citep{zhou2024speeding}. The underlying benchmarks are Grade School Math 8K (GSM8K)~\citep{cobbe2021training} and Physical Interaction: Question Answering (PIQA)~\citep{bisk2020piqa}. GSM8K consists of grade-school math word problems, while PIQA evaluates physical commonsense reasoning through question answering.

The BanditEval response matrices are constructed from a collection of publicly available language models and sampling configurations. In particular, BanditEval considers $11$ models: GPT2, GPT2-Large, CodeLLaMA, Tulu-7B, Tulu-2-7B, Gemma-7B, Phi2, Llema-7B, LLaMA-2-7B, Mistral-7B, and StarCoder-7B. For each model, responses are generated using three temperature choices $\{0, 0.5, 1\}$, two maximum decoding lengths $\{128, 512\}$, and two zero-shot prompting strategies: directly asking for the answer, and using the chain-of-thought prompt ``Let's think step by step.'' The Cartesian product of these choices yields $11 \times 3 \times 2 \times 2 = 132$ possible model--configuration arms.

However, the released response matrices contain some missing configurations. As a result, the final matrices used in our experiments contain $122$ arms for GSM8K and $103$ arms for PIQA. Both GSM8K and PIQA contain $1{,}000$ examples in the response matrices. Each benchmark is represented by five independently generated response matrices, corresponding to five random seeds used when querying the LLMs. Since the benchmark examples and arm set are fixed, the differences across these matrices mainly reflect randomness in LLM response generation.

\paragraph{AlpacaEval.}
We use the AlpacaEval response matrix released with BanditEval~\citep{zhou2024speeding} to evaluate instruction-following quality~\citep{li2023alpacaeval}. Unlike the BanditEval GSM8K and PIQA model$\times$prompt configuration matrices, this matrix is derived from the AlpacaEval~2.0 leaderboard comparisons annotated by \texttt{weighted\_alpaca\_eval\_gpt4\_turbo}. Each row corresponds to one AlpacaEval leaderboard model, and each column corresponds to one of the $805$ fixed AlpacaEval instructions. Entry $(k,n)$ stores the model's pairwise score against the GPT-4 Turbo baseline on instruction~$n$. These scores are continuous values in $[0,1]$ derived from the auto-annotator's preference probabilities; they are \emph{not} binary correctness labels. Accordingly, the full-benchmark empirical target $\bar Z_k$ in Eq.~(\ref{eq:finite-target}) is the row mean of these pairwise scores.

We exclude the annotator row \nolinkurl{gpt4_1106_preview_verbose} and the degenerate \nolinkurl{gpt4_1106_preview} row, whose $805$ comparison values are all $0.5$. The resulting matrix contains $152$ leaderboard-model arms and $805$ instruction examples. It keeps the AlpacaEval pairwise scores as continuous values rather than converting them to binary outcomes. Each arm is therefore a single leaderboard model rather than a model$\times$sampling-configuration pair. In contrast to GSM8K and PIQA, we use one fixed AlpacaEval matrix rather than five independently generated response matrices; algorithm randomness enters only through the $20$ randomized trial seeds used in each sweep.

For configuration-level Bayesian optimization baselines, we convert this matrix into $152$ configuration-level arms. Each Bayesian optimization arm evaluates one model on all $805$ instructions, revealing the row-average score and consuming that model's all-example evaluation cost. Cost-aware experiments use model-specific token prices from our Alpaca pricing table, with an input-to-output token ratio of $1{:}8$ estimated from typical AlpacaEval prompt/response lengths (Appendix~\ref{app:experiment-setup}). For GittinsEval-S on AlpacaEval, we use the dataset-specific prior mean/variance $(0.2,\,0.01)$; GittinsEval-G uses the general prior $(0.5,\,0.04)$ listed in Table~\ref{tab:prior_settings}.

\paragraph{MMLU.}
For MMLU, we use the DOVE response matrices~\citep{habba2025dove}. MMLU is a multiple-choice question-answering benchmark consisting of $57$ subjects and approximately $14{,}000$ examples in total. We treat each MMLU subject as a separate problem instance. Each subject is evaluated across $15$ LLM configurations and $100$ prompting techniques, resulting in $1500$ arms per subject.

The $15$ LLM configurations used in the MMLU response matrices are Llama-3-8B, Llama-3-8B-Instruct, Llama-3-70B-Instruct, CodeLlama-34B-Instruct, FLAN-T5-XL, FLAN-T5-XXL, FLAN-UL2, Merlinite-7B, Mixtral-8x7B-Instruct-v0.1, Mistral-7B-Instruct-v0.2, Gemma-7B, Gemma-7B-IT, Falcon-40B, Mistral-7B-v0.1, and Falcon-180B. The number of examples varies across subjects; the full list of subjects used in our evaluation is provided in Table~\ref{tab:mmlu-datasets}.

\subsection{Dataset Statistics}
\label{app:dataset-statistics}

We provide descriptive statistics for the response matrices used in our evaluation. For each dataset or subject, we compute the empirical quality of each arm as its mean score over benchmark examples. For binary correctness matrices, this score is mean accuracy; for AlpacaEval, it is the mean pairwise preference score. The following figures visualize the resulting distribution of arm qualities. These plots help illustrate the spread of arm performance, the location of the empirical mean, the best arm, and the prior mean used in informative-prior experiments.

\paragraph{GSM8K and PIQA.}
Figure~\ref{fig:gsm8k-piqa-quality-distribution} shows the per-arm quality distributions for GSM8K and PIQA across the five independently generated response matrices. The distributions are highly stable across seeds for both datasets. This indicates that, although the response matrices are generated independently, the randomness from LLM response generation introduces only limited variation at the aggregate level. GSM8K has a noticeably lower overall accuracy distribution than PIQA, while PIQA exhibits a tighter concentration of high-quality arms.

\begin{figure}[!htbp]
    \centering
    \includegraphics[width=\textwidth]{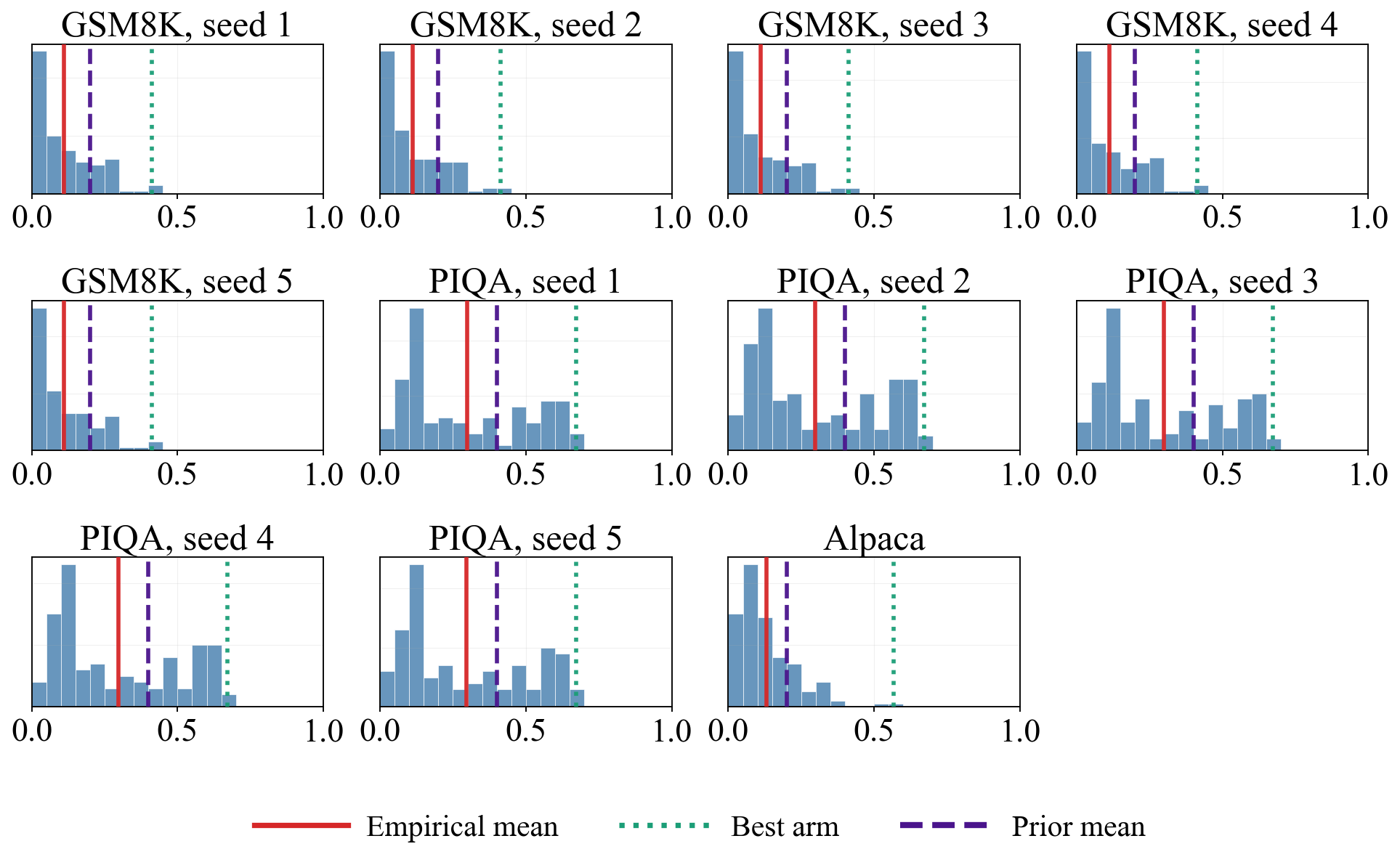}
    \caption{Per-arm quality distributions for GSM8K, PIQA, and AlpacaEval. Each panel shows one response matrix; bars report mean accuracy for GSM8K and PIQA and mean pairwise score for AlpacaEval. Vertical lines mark the empirical mean (red), best arm (green), and prior mean (purple).}
    \label{fig:gsm8k-piqa-quality-distribution}
\end{figure}

\paragraph{MMLU.}
For MMLU, we treat each subject as a separate problem instance. We divide the subjects into three difficulty buckets according to the empirical mean arm quality of the subject. The intuition is that if the mean arm quality is high, then the subject is easier for the collection of LLM configurations and prompting techniques; conversely, a lower mean arm quality indicates a harder subject. We refer to these buckets as high-, medium-, and low-prior buckets, with prior means $\mu_0 = 0.75$, $\mu_0 = 0.6$, and $\mu_0 = 0.4$, respectively. Figures~\ref{fig:mmlu-prior-bucket-high}--\ref{fig:mmlu-prior-bucket-low} show the per-arm quality distributions for the three buckets.

\begin{figure}[!htbp]
    \centering
    \includegraphics[width=\textwidth]{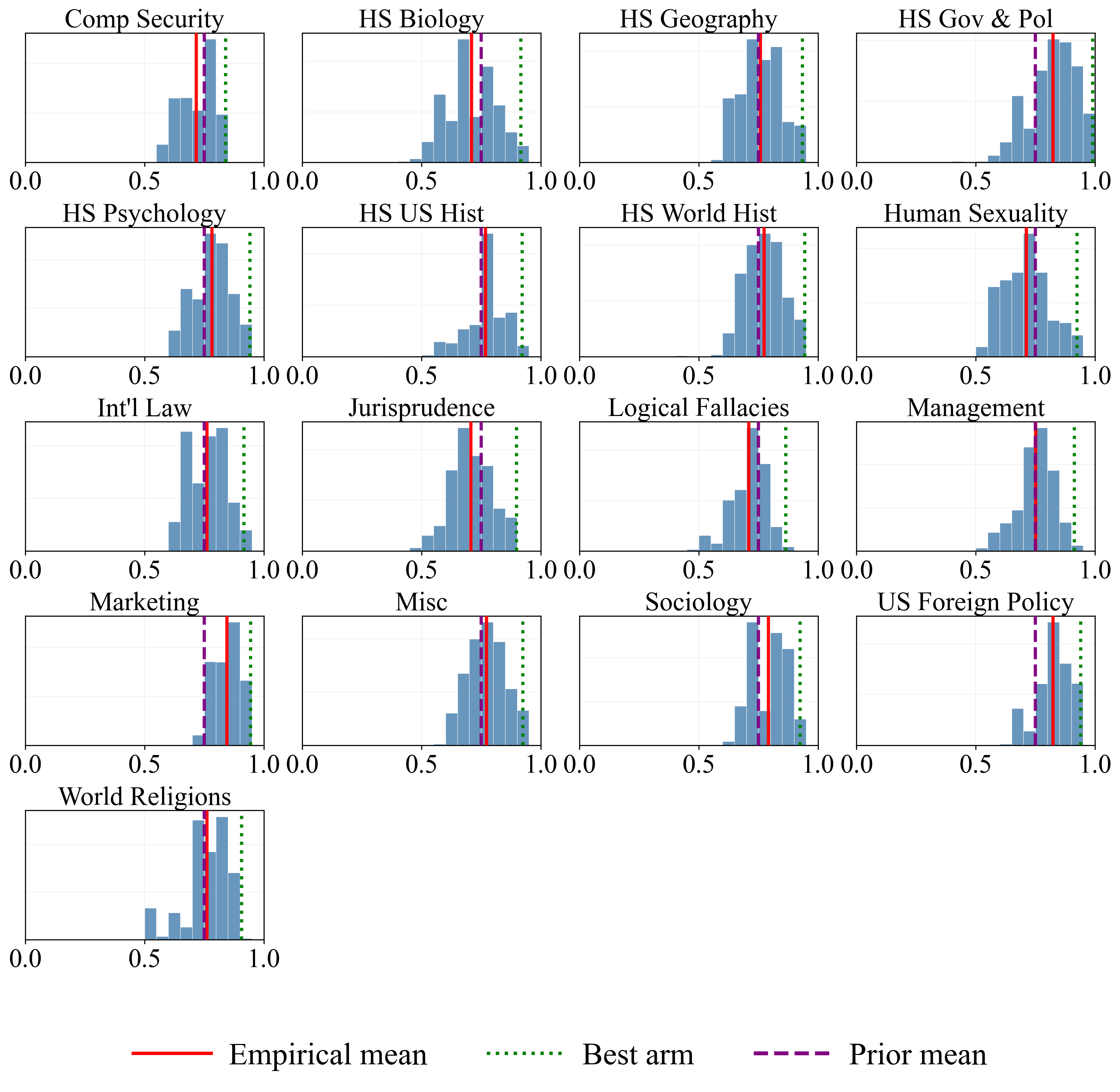}
    \caption{Per-arm accuracy distributions for easy MMLU subjects in the high-prior bucket ($\mu_0=0.75$). Each panel shows one subject; vertical lines mark the empirical mean (red), best arm (green), and prior mean (purple).}
    \label{fig:mmlu-prior-bucket-high}
\end{figure}

\begin{figure}[!htbp]
    \centering
    \includegraphics[width=\textwidth]{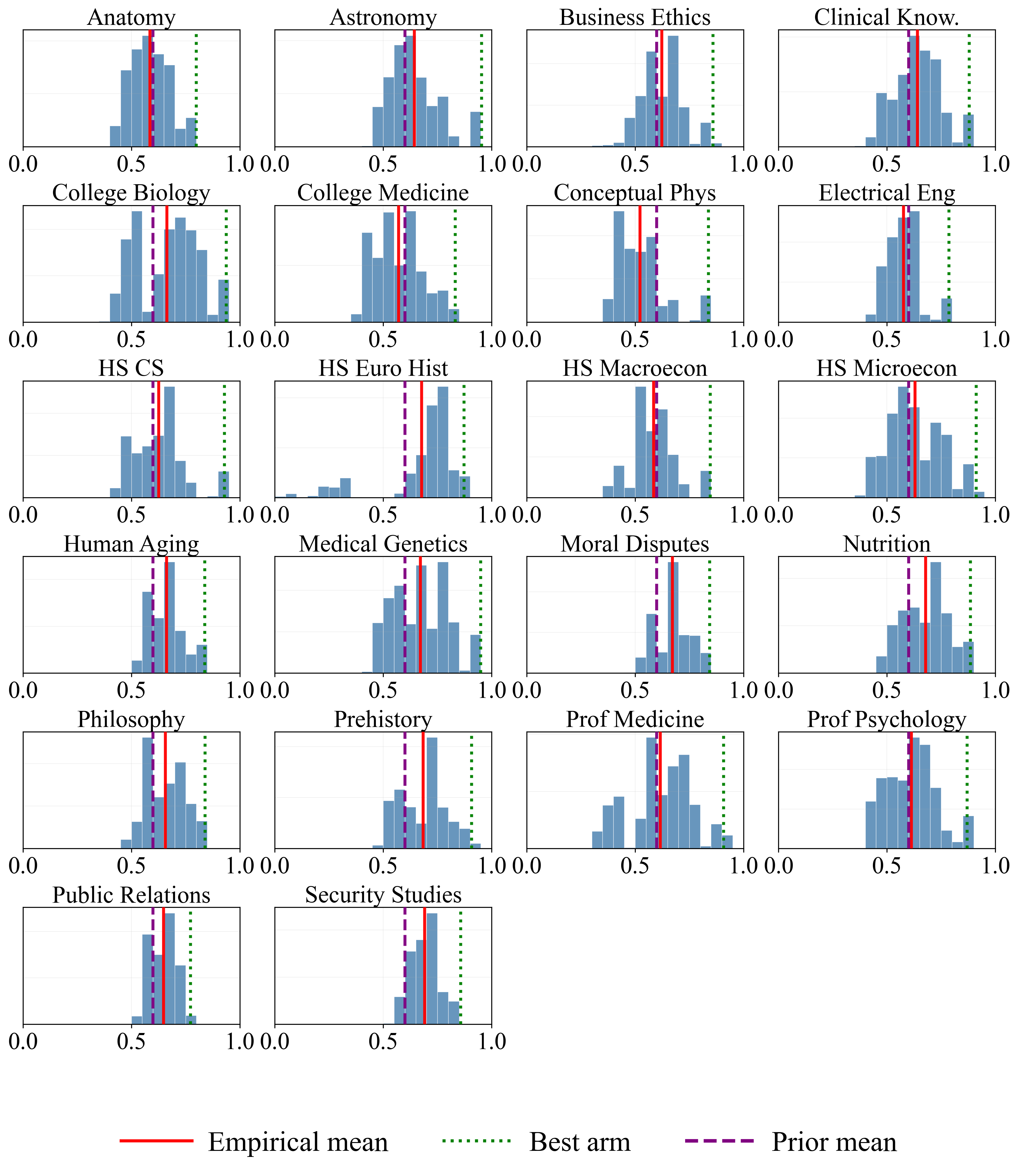}
    \caption{Per-arm accuracy distributions for medium-difficulty MMLU subjects in the medium-prior bucket ($\mu_0=0.6$). Each panel shows one subject; vertical lines mark the empirical mean (red), best arm (green), and prior mean (purple).}
    \label{fig:mmlu-prior-bucket-medium}
\end{figure}

\begin{figure}[!htbp]
    \centering
    \includegraphics[width=\textwidth]{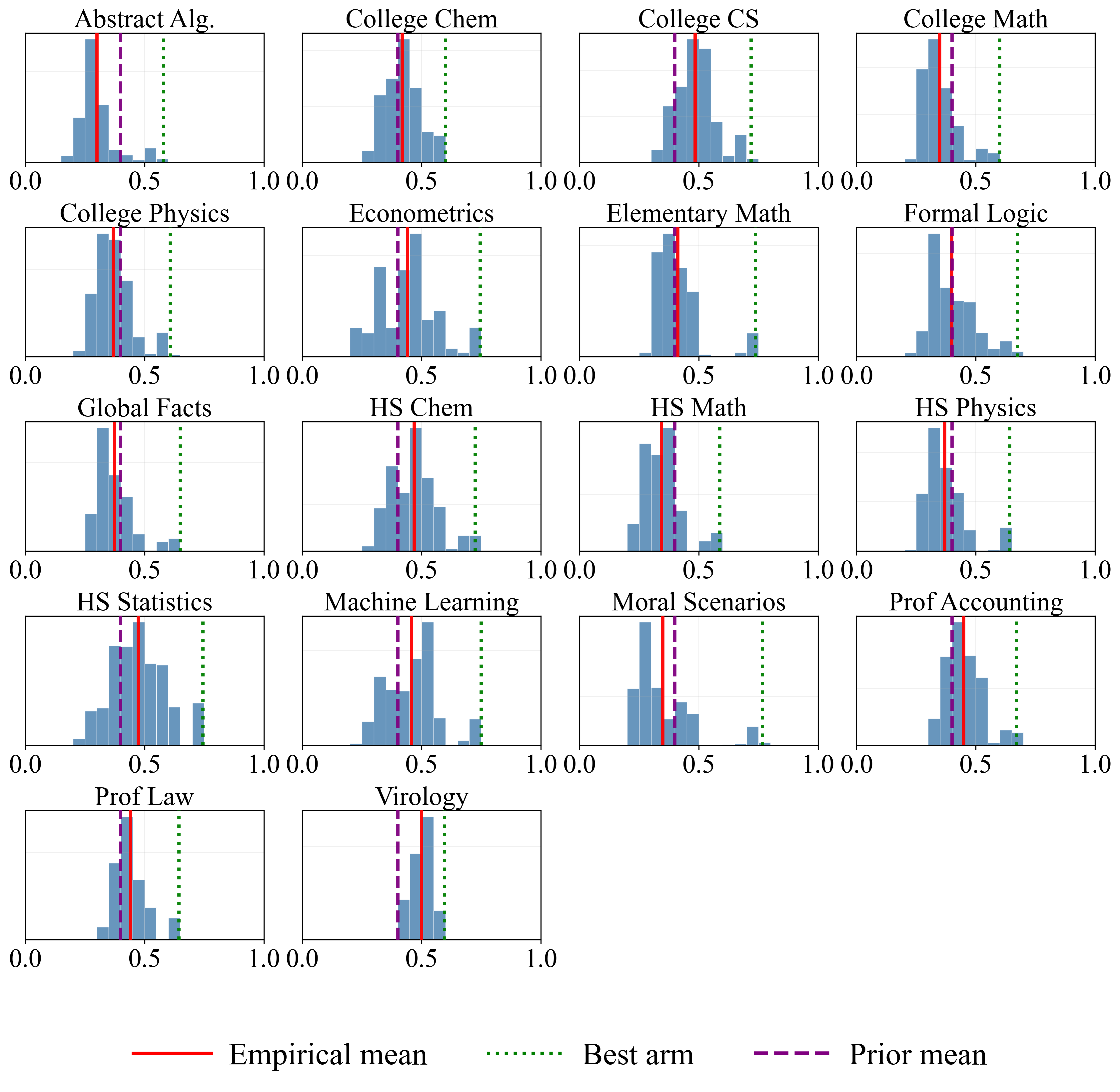}
    \caption{Per-arm accuracy distributions for hard MMLU subjects in the low-prior bucket ($\mu_0=0.4$). Each panel shows one subject; vertical lines mark the empirical mean (red), best arm (green), and prior mean (purple).}
    \label{fig:mmlu-prior-bucket-low}
\end{figure}

{
\small
\renewcommand{\arraystretch}{0.98}
\begin{longtable}{lcccl}
\caption{MMLU datasets used in our evaluation. Each subject produces a response matrix with $1500$ rows (arms), corresponding to $15$ LLM configurations paired with $100$ prompting techniques, and $N$ columns corresponding to the benchmark examples. Thus, the matrix has shape $1500 \times N$ and contains $1500N$ observation cells. Difficulty is based on prior accuracy: datasets with high prior accuracy are labeled easy, those with medium prior accuracy are labeled medium, and those with low prior accuracy are labeled hard. The size category groups datasets by the number of benchmark examples.}
\label{tab:mmlu-datasets}\\
\toprule
Dataset & Matrix Shape & Difficulty & Size & Full Name \\
\midrule
\endfirsthead
\toprule
Dataset & Matrix Shape & Difficulty & Size & Full Name \\
\midrule
\endhead
\midrule
\multicolumn{5}{r}{Continued on next page}\\
\endfoot
\bottomrule
\endlastfoot
Abstract Alg. & $1500 \times 100$ & hard & small & Abstract Algebra \\
Anatomy & $1500 \times 135$ & medium & small & Anatomy \\
Astronomy & $1500 \times 152$ & medium & medium & Astronomy \\
Business Ethics & $1500 \times 100$ & medium & small & Business Ethics \\
Clinical Know. & $1500 \times 265$ & medium & medium & Clinical Knowledge \\
College Biology & $1500 \times 144$ & medium & small & College Biology \\
College Chem & $1500 \times 100$ & hard & small & College Chemistry \\
College CS & $1500 \times 100$ & hard & small & College Computer Science \\
College Math & $1500 \times 100$ & hard & small & College Mathematics \\
College Medicine & $1500 \times 173$ & medium & medium & College Medicine \\
College Physics & $1500 \times 102$ & hard & small & College Physics \\
Comp Security & $1500 \times 100$ & easy & small & Computer Security \\
Conceptual Phys & $1500 \times 235$ & medium & medium & Conceptual Physics \\
Econometrics & $1500 \times 114$ & hard & small & Econometrics \\
Electrical Eng & $1500 \times 145$ & medium & small & Electrical Engineering \\
Elementary Math & $1500 \times 378$ & hard & medium & Elementary Mathematics \\
Formal Logic & $1500 \times 126$ & hard & small & Formal Logic \\
Global Facts & $1500 \times 100$ & hard & small & Global Facts \\
HS Biology & $1500 \times 310$ & easy & medium & High School Biology \\
HS Chem & $1500 \times 203$ & hard & medium & High School Chemistry \\
HS CS & $1500 \times 100$ & medium & small & High School Computer Science \\
HS Euro Hist & $1500 \times 165$ & medium & medium & High School European History \\
HS Geography & $1500 \times 198$ & easy & medium & High School Geography \\
HS Gov \& Pol & $1500 \times 193$ & easy & medium & High School Government and Politics \\
HS Macroecon & $1500 \times 390$ & medium & medium & High School Macroeconomics \\
HS Math & $1500 \times 270$ & hard & medium & High School Mathematics \\
HS Microecon & $1500 \times 238$ & medium & medium & High School Microeconomics \\
HS Physics & $1500 \times 151$ & hard & medium & High School Physics \\
HS Psychology & $1500 \times 545$ & easy & large & High School Psychology \\
HS Statistics & $1500 \times 216$ & hard & medium & High School Statistics \\
HS US Hist & $1500 \times 204$ & easy & medium & High School US History \\
HS World Hist & $1500 \times 237$ & easy & medium & High School World History \\
Human Aging & $1500 \times 223$ & medium & medium & Human Aging \\
Human Sexuality & $1500 \times 131$ & easy & small & Human Sexuality \\
Int'l Law & $1500 \times 121$ & easy & small & International Law \\
Jurisprudence & $1500 \times 108$ & easy & small & Jurisprudence \\
Logical Fallacies & $1500 \times 163$ & easy & medium & Logical Fallacies \\
Machine Learning & $1500 \times 112$ & hard & small & Machine Learning \\
Management & $1500 \times 103$ & easy & small & Management \\
Marketing & $1500 \times 234$ & easy & medium & Marketing \\
Medical Genetics & $1500 \times 100$ & medium & small & Medical Genetics \\
Misc & $1500 \times 783$ & easy & large & Miscellaneous \\
Moral Disputes & $1500 \times 346$ & medium & medium & Moral Disputes \\
Moral Scenarios & $1500 \times 895$ & hard & large & Moral Scenarios \\
Nutrition & $1500 \times 306$ & medium & medium & Nutrition \\
Philosophy & $1500 \times 311$ & medium & medium & Philosophy \\
Prehistory & $1500 \times 324$ & medium & medium & Prehistory \\
Prof Accounting & $1500 \times 282$ & hard & medium & Professional Accounting \\
Prof Law & $1500 \times 1534$ & hard & large & Professional Law \\
Prof Medicine & $1500 \times 272$ & medium & medium & Professional Medicine \\
Prof Psychology & $1500 \times 612$ & medium & large & Professional Psychology \\
Public Relations & $1500 \times 110$ & medium & small & Public Relations \\
Security Studies & $1500 \times 245$ & medium & medium & Security Studies \\
Sociology & $1500 \times 201$ & easy & medium & Sociology \\
US Foreign Policy & $1500 \times 100$ & easy & small & US Foreign Policy \\
Virology & $1500 \times 166$ & hard & medium & Virology \\
World Religions & $1500 \times 171$ & easy & medium & World Religions \\
\end{longtable}
}

\FloatBarrier

%% file: appendices/experiment_setup.tex
\section{Experiment Setup and Implementation Details}
\label{app:experiment-setup}

\paragraph{Computing environment.}
Experiments were run on CPU-only computing resources using precomputed response matrices rather than online LLM inference. The response matrix serves as the offline ground-truth oracle: during bandit simulation, each policy observes only the selected $(\text{arm}, \text{question})$ entries, while per-arm matrix means are used to compute simple regret. For the Bayesian optimization baselines, the response matrices are converted into configuration-level inputs, where evaluating one candidate reveals its aggregate score. For GSM8K and PIQA, each converted BanditEval configuration corresponds to a model$\times$prompt arm. The BanditEval AlpacaEval matrix instead has one leaderboard-model arm per row, so each converted Bayesian optimization configuration evaluates that model on all $805$ instructions. For MMLU, each converted DOVE configuration corresponds to a model$\times$template arm. Reported per-batch runtime therefore measures policy-side allocation overhead rather than model-inference time.

\paragraph{Experimental repetitions.}
For GSM8K and PIQA, we run each adaptive policy on each of the five independently generated response matrices. For every response matrix, we perform $20$ independent algorithm trials, each with independently randomized example orders and algorithmic randomness. We report averages over all matrix--trial pairs. For AlpacaEval, we use a single fixed pairwise score matrix with $152$ models and $805$ instructions. As in MMLU, we run $20$ independent algorithm trials on this matrix and report averages over those trials. Unlike GSM8K and PIQA, we do not average over multiple independently generated response matrices for AlpacaEval. For MMLU, we run $20$ randomized algorithm trials for each subject and aggregate results across the corresponding set of subjects. The Bayesian optimization baselines use the same randomized-trial structure on their converted configuration-level inputs.

\ifdefined\iclrsysrs
\paragraph{Anytime recommendations and regret.}
At each reporting step, GittinsEval-G and GittinsEval-S recommend the arm maximizing $M_{k,t}-\sqrt{V_{k,t}}$ as in Eq.~(\ref{eq:finite-lcb-recommendation}), where $M_{k,t}$ and $V_{k,t}$ are the posterior mean and variance of the full-row empirical score, given in Eq.~(\ref{eq:finite-posterior-moments}). This optional-completion recommendation considers all arms, including those with unevaluated examples; it does not require the recommended arm to be completed. For an arm that has been fully evaluated, $M_{k,t}$ equals its observed row mean and $V_{k,t}=0$. The uncertainty penalty is intended to reduce early recommendation switching when arms with few observations have unreliable mean estimates, especially under a general prior; see Appendix~\ref{app:anytime-recommendation}. Simple regret is always computed from the recommended arm's actual full-row mean in the response matrix, not from its posterior mean or its penalized recommendation score. The penalty affects recommendation only; it does not enter the numerical Gittins indices used for allocation or affect the stopping time.

\paragraph{GittinsEval stopping time.}
The GittinsEval stopping time is the first post-batch time at which an arm attaining the largest numerical empirical-target Gittins index is fully observed. This is the stopping rule from the required-completion adaptive-stopping formulation; it does not restrict the optional-completion LCB-style recommendation. Root discretization, continuation beyond the signal, and the recommendation rule do not inherit the exact batch-level optimality guarantee. In fixed-budget experiments, we record this time and continue the regret trajectory to the evaluation budget.

\else
\paragraph{Gittins stopping rule.}
After each batch is evaluated and the posterior Gittins indices are updated, the stopping rule signals when an arm attaining the largest current Gittins score is already fully observed. This is the termination condition for the required-completion adaptive-stopping formulation; the same evaluation trajectory also supports the optional-completion anytime recommendation.

\fi

\paragraph{Baseline initialization and warm-up.}
For a response matrix with $K$ arms and $N$ examples, the Bayesian optimization baselines use a configuration-level random-initialization phase. In our main Bayesian optimization runs, a $5\%$ random initialization means drawing approximately $0.05K$ arms uniformly without replacement and observing the full rows for those arms, i.e., evaluating the selected configurations on all $N$ examples before the acquisition-driven phase begins. This corresponds to $6$ initial configurations for GSM8K ($K=122$), $5$ for PIQA ($K=103$), $8$ for AlpacaEval ($K=152$), and $75$ for each MMLU subject ($K=1500$). Under the nominal $10\%$ Bayesian optimization budget, the corresponding total configuration budgets are $12$, $10$, $15$, and $150$ configurations, respectively. This convention is distinct from the low-rank-factorization warm-up in BanditEval~\citep{zhou2024speeding}. There, LRF first draws $T_0$ individual arm--example pairs uniformly from the $K\times N$ response matrix, and the default experimental setting uses $T_0=0.05KN$. The standalone LRF baseline in that paper is also written with a $T_0$ entry-level warm-up parameter, so any such warm-up should be interpreted as a percentage of response-matrix entries rather than a percentage of arms evaluated on all benchmark examples.

\paragraph{Curve aggregation and visualization.}
Unless otherwise noted, repeated trajectories for GittinsEval-G, GittinsEval-S, UCB-E, LRF, and SySRs are aggregated on a common budget grid. For each method and panel, we linearly interpolate each randomized-run trajectory onto a shared grid of $350$ evenly spaced points spanning the observed budget range and compute the mean and standard error across the runs available at each grid point. Values outside a run's observed range are treated as missing rather than extrapolated, so early-ending runs are not right-extended. The same procedure is used in the unit-cost and cost-aware settings, with cumulative evaluation cost defining the horizontal coordinate in both; under unit costs, this equals cumulative example evaluations. Gittins stopping points are reported only as overlays and do not truncate the regret trajectories. Bayesian optimization trajectories are handled separately. We omit the random-initialization segment from the displayed curves, align the post-initialization trajectories across randomized runs to their mean initialization endpoint, and aggregate them on the resulting common budget axis. Runs that terminate earlier are right-held at their final observed simple regret so that they continue to contribute over the displayed post-initialization budget range. This convention is used in both the unit-cost and cost-aware settings and is particularly relevant in the latter, where heterogeneous configuration costs lead to different initialization endpoints across runs. For the cross-subject MMLU aggregates, each task--run trajectory is first normalized to its reported budget range before interpolation on a common $[0,1]$ grid of $350$ evenly spaced points, and the resulting task--run curves are pooled within each size--difficulty bucket. LRF retains its warm-up offset under this normalization.

\ifdefined\iclrsysrs
\paragraph{PromptEval evaluation and aggregation.}
We evaluate PromptEval-BAI~\citep{polo2024efficient}, using this name for PromptEval's best-prompt-identification instantiation in figures and captions. Its native allocation follows a unit-cost successive-halving best-arm-identification rule that exploits configuration covariates; costs are recorded so that the same trajectories can be plotted against cumulative observations in the unit-cost setting and against recorded cumulative costs in the cost-aware setting. On GSM8K and PIQA we use five response matrices with $20$ randomized trials each; on each MMLU subject we use $20$ randomized trials. Across GSM8K, PIQA, and MMLU, we use PE-OneHot, which assigns each candidate arm an identity feature vector; the other PromptEval covariate variants are not included in our comparison. We do not report PromptEval-BAI on AlpacaEval because the released best-arm-identification implementation assumes binary correctness matrices and therefore does not directly apply to our continuous model-by-instruction preference matrix. PromptEval's own AlpacaEval experiment instead studies sensitivity to judge prompts and binarizes continuous judge scores for model fitting; it is separate from the paper's best-prompt-identification experiments on MMLU, BBH, and LMentry. Applying PromptEval-BAI here would therefore require an additional, non-native modeling choice such as thresholding the scores. For the main GSM8K/PIQA curves, we aggregate repeated trajectories by successive-halving phase, reporting mean simple regret at the mean phase budget with standard-error bands. Since PromptEval-BAI produces recommendations at discrete successive-halving phase boundaries, we preserve this native phase structure rather than interpolating additional intermediate points. For individual MMLU subject panels, cost-aware curves use the same phase-wise aggregation, while unit-cost curves instead align trajectories by their average initial budget and interpolate with step-hold on the union of observed normalized-cost points, where cost is expressed as a percentage of the exhaustive-evaluation cost; early-finishing runs are right-held to the largest observed end budget among those trajectories. For the cross-subject MMLU aggregate plots, we pool subject--seed trajectories on this shared normalized-cost axis, apply average-initial alignment, interpolate with step-hold on the union of observed budget points at or below the nominal $10\%$ budget, and right-hold early-finishing runs through that endpoint.
\fi

\paragraph{Evaluation costs.}
The cost-aware bandit experiments assign each arm a batch cost proportional to the estimated cost of running that model configuration on $B$ benchmark examples. For GSM8K and PIQA, an arm is a model plus sampling or prompting configuration; when several arms share a base model, they share the model-level price, while the batch multiplier accounts for how many examples are queried at that allocation step. For MMLU, each arm is a model--prompt pair, and model-level input prices are repeated across the corresponding prompt arms. For AlpacaEval, each arm is a single leaderboard model rather than a model$\times$sampling-configuration pair. Throughout our experiments, costs are model-level proxies derived from published API prices or model-level price metadata. Fixed benchmark-level input/output ratios convert these prices into a per-example arm cost, so all examples evaluated by the same arm have the same cost; we do not use example-specific token counts or latency. For the Bayesian optimization baselines, each converted input row corresponds to a complete configuration, and its cost is the estimated all-example evaluation cost of that configuration on the corresponding benchmark or MMLU subject. The Gittins dynamic program uses costs rescaled to the same utility units as the index computation, while the Bayesian optimization baselines use configuration-level costs for cost-aware acquisition and budget accounting.

\paragraph{AlpacaEval evaluation costs.}
AlpacaEval uses model-specific API or proxy prices for the $152$ leaderboard models in the response matrix. For each model, we record separate input and output costs in USD per 1M tokens and form the evaluation cost as
\[
\text{cost} = c_{\mathrm{in}} + 8\,c_{\mathrm{out}},
\]
using the estimated AlpacaEval input-to-output token ratio $1{:}8$. These model-level costs define the per-arm cost vector for cost-aware bandit experiments and the all-example evaluation costs used by the converted Bayesian optimization inputs. Since AlpacaEval arms are individual leaderboard models rather than BanditEval-style model$\times$prompt configurations, we do not repeat a single model price across multiple prompting arms as in MMLU. Table~\ref{tab:gsm8k_piqa_model_costs} and Table~\ref{tab:mmlu_model_costs} therefore do not apply to AlpacaEval; the complete Alpaca pricing table is reported in Table~\ref{tab:alpaca_model_costs}.

\begin{table}[!htbp]
\centering
\caption{
Model-level evaluation costs used for GSM8K and PIQA. Costs are reported in USD per 1M tokens. GSM8K uses an assumed input-to-output token ratio of $1:2$, so the evaluation cost is computed as input cost plus twice the output cost. PIQA uses input-only pricing. These model-level costs are used to construct the per-arm cost vectors for the cost-aware experiments. }
\label{tab:gsm8k_piqa_model_costs}
\small
\setlength{\tabcolsep}{5pt}
\begin{tabular}{lcccc}
\toprule
Model & Input cost & Output cost & GSM8K cost & PIQA cost \\
\midrule
CodeLlama     & 0.30 & 0.30 & 0.90 & 0.30 \\
Gemma-7B      & 0.20 & 0.20 & 0.60 & 0.20 \\
GPT-2         & 0.10 & 0.10 & 0.30 & 0.10 \\
GPT-2 Large   & 0.10 & 0.10 & 0.30 & 0.10 \\
LLaMA2-7B     & 0.20 & 0.20 & 0.60 & 0.20 \\
Llemma-7B     & 0.80 & 1.20 & 3.20 & 0.80 \\
Mistral-7B    & 0.05 & 0.20 & 0.45 & 0.05 \\
Phi-2         & 0.05 & 0.10 & 0.25 & 0.05 \\
StarCoder2-7B & 0.20 & 0.20 & 0.60 & 0.20 \\
Tulu          & 0.20 & 0.20 & 0.60 & 0.20 \\
Tulu2         & 0.20 & 0.20 & 0.60 & 0.20 \\
\bottomrule
\end{tabular}
\end{table}

\begin{table}[!htbp]
\centering
\caption{
Model-level input costs used for MMLU. Costs are reported in USD per 1M input tokens. MMLU uses input-only pricing. Each MMLU subject contains 1500 arms, corresponding to 15 models paired with 100 prompt configurations; therefore, each model-level input cost is repeated across the corresponding prompt arms. }
\label{tab:mmlu_model_costs}
\small
\setlength{\tabcolsep}{5pt}
\begin{tabular*}{\textwidth}{@{\extracolsep{\fill}}lr@{\hspace{2em}}lr@{}}
\toprule
MMLU model & Input cost & MMLU model & Input cost \\
\midrule
CodeLlama-34B-Instruct & 0.776 & Llama-3-70B-Instruct      & 0.51 \\
Falcon-180B           & 1.25  & Llama-3-8B               & 0.05 \\
Falcon-40B            & 0.84  & Llama-3-8B-Instruct      & 0.03 \\
FLAN-T5-XL            & 0.60  & Merlinite-7B             & 0.60 \\
FLAN-T5-XXL           & 1.80  & Mistral-7B-Instruct-v0.2  & 0.14 \\
FLAN-UL2              & 5.00  & Mistral-7B-v0.1          & 0.11 \\
Gemma-7B              & 0.20  & Mixtral-8x7B-Instruct-v0.1 & 0.54 \\
Gemma-7B-IT           & 0.07  &                          &      \\
\bottomrule
\end{tabular*}
\end{table}

\clearpage
\IfFileExists{../appendices/alpaca_pricing_table.tex}{%
  \input{../appendices/alpaca_pricing_table}}{%
\input{appendices/alpaca_pricing_table}}

\paragraph{Gaussian approximation.}
For benchmark scores grouped in batches, a batch of $B$ responses produces an empirical mean score. For binary-accuracy benchmarks such as GSM8K, PIQA, and MMLU, the default observation variance is $\tau^2=1/(4B)$, the worst-case binary variance divided by the batch size. AlpacaEval matrix entries are continuous pairwise preference scores in $[0,1]$ rather than binary correctness labels. Since any random variable supported on $[0,1]$ has variance at most $1/4$, we also use the conservative per-comparison working variance $\tau_{\mathrm{cell}}^2=1/4$ for AlpacaEval. Under the working conditional-independence approximation, this gives the batch-mean variance $\tau^2=1/(4B)$. This fixed value is a conservative working bound, not an empirical variance estimate.

\paragraph{Prior settings.}
All arms use the same prior mean and variance, representing a shared belief before benchmark-specific evidence is collected. Table~\ref{tab:prior_settings} lists the general default prior and the data-specific common priors used by GittinsEval-G and GittinsEval-S. Figures~\ref{fig:mmlu-prior-bucket-high}, \ref{fig:mmlu-prior-bucket-medium}, and \ref{fig:mmlu-prior-bucket-low} show the MMLU latent-accuracy distributions used to motivate the informative prior buckets.

\begin{table}[!htbp]
\centering
\caption{
Prior settings used in the Gittins-index experiments. We report Gaussian priors as $\mathcal{N}(\mu_0, v_0)$. The default prior is a general accuracy-scale choice. Dataset-specific priors provide shared benchmark-level information without using arm-specific prior means. }
\label{tab:prior_settings}
\small
\setlength{\tabcolsep}{9pt}
\begin{tabular}{lll}
\toprule
Benchmark / group & Prior & Description \\
\midrule
All benchmarks & $\mathcal{N}(0.5, 0.04)$ & General default prior \\
GSM8K & $\mathcal{N}(0.2, 0.01)$ & Very hard benchmark \\
PIQA & $\mathcal{N}(0.4, 0.02)$ & Hard benchmark \\
AlpacaEval & $\mathcal{N}(0.2, 0.01)$ & Very hard benchmark \\
MMLU low-accuracy bucket & $\mathcal{N}(0.4, 0.02)$ & Hard subjects \\
MMLU medium-accuracy bucket & $\mathcal{N}(0.6, 0.02)$ & Medium subjects \\
MMLU high-accuracy bucket & $\mathcal{N}(0.75, 0.01)$ & Easy subjects \\
\bottomrule
\end{tabular}
\end{table}

%% file: appendices/alpaca_pricing_table.tex
\begingroup
\setlength{\tabcolsep}{2pt}
\renewcommand{\arraystretch}{1.03}
\newcommand{\alpacamodel}[1]{\raggedright\hyphenpenalty=10000\exhyphenpenalty=10000 #1}
\begin{table}[p]
\centering
\caption{AlpacaEval model-level pricing for the $152$-model response matrix. Input (In.) and output (Out.) prices are in USD per 1M tokens. The combined evaluation cost is $\mathrm{Cost}=\mathrm{Input}+8\times\mathrm{Output}$, matching the input-to-output token ratio used for cost-aware runs. Models are alphabetized down the left block, then the right block, continuing on the next page.}
\label{tab:alpaca_model_costs}
\fontsize{10}{11.6}\selectfont
\begin{tabular*}{\textwidth}{@{\extracolsep{\fill}}p{0.305\textwidth}rrr@{\hspace{12pt}}p{0.305\textwidth}rrr@{}}
\toprule
\textbf{Model} & \textbf{In.} & \textbf{Out.} & \textbf{Cost} & \textbf{Model} & \textbf{In.} & \textbf{Out.} & \textbf{Cost} \\
\midrule
\alpacamodel{airoboros-\allowbreak 33b} & 0.9 & 0.9 & 8.1 & \alpacamodel{gemma-\allowbreak 7b-\allowbreak it} & 0.2 & 0.2 & 1.8 \\
\alpacamodel{airoboros-\allowbreak 65b} & 0.9 & 0.9 & 8.1 & \alpacamodel{gpt-\allowbreak 3.5-\allowbreak turbo-\allowbreak 0301} & 1.5 & 2 & 17.5 \\
\alpacamodel{aligner-\allowbreak 2b\_\allowbreak claude-\allowbreak 3-\allowbreak opus-\allowbreak 20240229} & 0.1 & 0.1 & 0.9 & \alpacamodel{gpt-\allowbreak 3.5-\allowbreak turbo-\allowbreak 0613} & 1.5 & 2 & 17.5 \\
\alpacamodel{aligner-\allowbreak 2b\_\allowbreak qwen1.5-\allowbreak 72b-\allowbreak chat} & 0.1 & 0.1 & 0.9 & \alpacamodel{gpt-\allowbreak 3.5-\allowbreak turbo-\allowbreak 1106} & 1 & 2 & 17 \\
\alpacamodel{alpaca-\allowbreak 7b} & 0.2 & 0.2 & 1.8 & \alpacamodel{gpt-\allowbreak 3.5-\allowbreak turbo-\allowbreak 1106\_\allowbreak concise} & 1 & 2 & 17 \\
\alpacamodel{alpaca-\allowbreak 7b\_\allowbreak concise} & 0.2 & 0.2 & 1.8 & \alpacamodel{gpt-\allowbreak 3.5-\allowbreak turbo-\allowbreak 1106\_\allowbreak verbose} & 1 & 2 & 17 \\
\alpacamodel{alpaca-\allowbreak 7b\_\allowbreak verbose} & 0.2 & 0.2 & 1.8 & \alpacamodel{gpt-\allowbreak 4-\allowbreak 0125-\allowbreak preview} & 10 & 30 & 250 \\
\alpacamodel{alpaca-\allowbreak farm-\allowbreak ppo-\allowbreak human} & 0.2 & 0.2 & 1.8 & \alpacamodel{gpt35\_\allowbreak turbo\_\allowbreak instruct} & 1.5 & 2 & 17.5 \\
\alpacamodel{alpaca-\allowbreak farm-\allowbreak ppo-\allowbreak sim-\allowbreak gpt4-\allowbreak 20k} & 0.2 & 0.2 & 1.8 & \alpacamodel{gpt4} & 30 & 60 & 510 \\
\alpacamodel{baichuan-\allowbreak 13b-\allowbreak chat} & 0.2 & 0.2 & 1.8 & \alpacamodel{gpt4\_\allowbreak 0314} & 30 & 60 & 510 \\
\alpacamodel{baize-\allowbreak v2-\allowbreak 13b} & 0.2 & 0.2 & 1.8 & \alpacamodel{gpt4\_\allowbreak 0613} & 30 & 60 & 510 \\
\alpacamodel{baize-\allowbreak v2-\allowbreak 7b} & 0.2 & 0.2 & 1.8 & \alpacamodel{gpt4\_\allowbreak 0613\_\allowbreak concise} & 30 & 60 & 510 \\
\alpacamodel{causallm-\allowbreak 14b} & 0.2 & 0.2 & 1.8 & \alpacamodel{gpt4\_\allowbreak 0613\_\allowbreak verbose} & 30 & 60 & 510 \\
\alpacamodel{chatglm2-\allowbreak 6b} & 0.2 & 0.2 & 1.8 & \alpacamodel{gpt4\_\allowbreak 1106\_\allowbreak preview\_\allowbreak concise} & 10 & 30 & 250 \\
\alpacamodel{claude} & 8 & 24 & 200 & \alpacamodel{gpt4\_\allowbreak gamed} & 30 & 60 & 510 \\
\alpacamodel{claude-\allowbreak 2} & 8 & 24 & 200 & \alpacamodel{guanaco-\allowbreak 13b} & 0.2 & 0.2 & 1.8 \\
\alpacamodel{claude-\allowbreak 2.1} & 8 & 24 & 200 & \alpacamodel{guanaco-\allowbreak 33b} & 0.9 & 0.9 & 8.1 \\
\alpacamodel{claude-\allowbreak 2.1\_\allowbreak concise} & 8 & 24 & 200 & \alpacamodel{guanaco-\allowbreak 65b} & 0.9 & 0.9 & 8.1 \\
\alpacamodel{claude-\allowbreak 2.1\_\allowbreak verbose} & 8 & 24 & 200 & \alpacamodel{guanaco-\allowbreak 7b} & 0.2 & 0.2 & 1.8 \\
\alpacamodel{claude-\allowbreak 3-\allowbreak opus-\allowbreak 20240229} & 15 & 75 & 615 & \alpacamodel{humpback-\allowbreak llama-\allowbreak 65b} & 0.9 & 0.9 & 8.1 \\
\alpacamodel{claude-\allowbreak 3-\allowbreak sonnet-\allowbreak 20240229} & 3 & 15 & 123 & \alpacamodel{humpback-\allowbreak llama2-\allowbreak 70b} & 0.9 & 0.9 & 8.1 \\
\alpacamodel{claude-\allowbreak instant-\allowbreak 1.2} & 0.8 & 2.4 & 20 & \alpacamodel{internlm2-\allowbreak chat-\allowbreak 20b-\allowbreak ppo} & 0.9 & 0.9 & 8.1 \\
\alpacamodel{claude2-\allowbreak alpaca-\allowbreak 13b} & 0.2 & 0.2 & 1.8 & \alpacamodel{jina-\allowbreak chat} & 0.2 & 0.2 & 1.8 \\
\alpacamodel{cohere} & 1 & 2 & 17 & \alpacamodel{llama-\allowbreak 2-\allowbreak 13b-\allowbreak chat-\allowbreak hf} & 0.2 & 0.2 & 1.8 \\
\alpacamodel{Conifer-\allowbreak 7B-\allowbreak DPO} & 0.2 & 0.2 & 1.8 & \alpacamodel{llama-\allowbreak 2-\allowbreak 70b-\allowbreak chat-\allowbreak hf} & 0.9 & 0.9 & 8.1 \\
\alpacamodel{Contextual-\allowbreak KTO-\allowbreak Mistral-\allowbreak PairRM} & 0.2 & 0.2 & 1.8 & \alpacamodel{llama-\allowbreak 2-\allowbreak 7b-\allowbreak chat-\allowbreak hf} & 0.2 & 0.2 & 1.8 \\
\alpacamodel{cut-\allowbreak 13b} & 0.2 & 0.2 & 1.8 & \alpacamodel{llama-\allowbreak 2-\allowbreak chat-\allowbreak 7b-\allowbreak evol70k-\allowbreak neft} & 0.2 & 0.2 & 1.8 \\
\alpacamodel{dbrx-\allowbreak instruct} & 1.2 & 1.2 & 10.8 & \alpacamodel{LLaMA-\allowbreak 7B} & 0.2 & 0.2 & 1.8 \\
\alpacamodel{deepseek-\allowbreak llm-\allowbreak 67b-\allowbreak chat} & 0.9 & 0.9 & 8.1 & \alpacamodel{LMCocktail-\allowbreak 10.7B-\allowbreak v1} & 0.2 & 0.2 & 1.8 \\
\alpacamodel{deita-\allowbreak 7b-\allowbreak v1.0} & 0.2 & 0.2 & 1.8 & \alpacamodel{Meta-\allowbreak Llama-\allowbreak 3-\allowbreak 70B-\allowbreak Instruct} & 0.9 & 0.9 & 8.1 \\
\alpacamodel{dolphin-\allowbreak 2.2.1-\allowbreak mistral-\allowbreak 7b} & 0.2 & 0.2 & 1.8 & \alpacamodel{Meta-\allowbreak Llama-\allowbreak 3-\allowbreak 8B-\allowbreak Instruct} & 0.2 & 0.2 & 1.8 \\
\alpacamodel{evo-\allowbreak 7b} & 0.2 & 0.2 & 1.8 & \alpacamodel{minichat-\allowbreak 1.5-\allowbreak 3b} & 0.1 & 0.1 & 0.9 \\
\alpacamodel{evo-\allowbreak v2-\allowbreak 7b} & 0.2 & 0.2 & 1.8 & \alpacamodel{minichat-\allowbreak 3b} & 0.1 & 0.1 & 0.9 \\
\alpacamodel{falcon-\allowbreak 40b-\allowbreak instruct} & 0.9 & 0.9 & 8.1 & \alpacamodel{minotaur-\allowbreak 13b} & 0.2 & 0.2 & 1.8 \\
\alpacamodel{falcon-\allowbreak 7b-\allowbreak instruct} & 0.2 & 0.2 & 1.8 & \alpacamodel{Mistral-\allowbreak 7B-\allowbreak Instruct-\allowbreak v0.2} & 0.2 & 0.2 & 1.8 \\
\alpacamodel{FsfairX-\allowbreak Zephyr-\allowbreak Chat-\allowbreak v0.1} & 0.2 & 0.2 & 1.8 & \alpacamodel{Mistral-\allowbreak 7B-\allowbreak ReMax-\allowbreak v0.1} & 0.2 & 0.2 & 1.8 \\
\alpacamodel{gemini-\allowbreak pro} & 0.5 & 1.5 & 12.5 & \alpacamodel{mistral-\allowbreak large-\allowbreak 2402} & 8 & 24 & 200 \\
\alpacamodel{gemma-\allowbreak 2b-\allowbreak it} & 0.1 & 0.1 & 0.9 & \alpacamodel{mistral-\allowbreak medium} & 2.7 & 8.1 & 67.5 \\
\bottomrule
\end{tabular*}
\end{table}
\clearpage
\begin{table}[p]
\centering
\ContinuedFloat
\caption[]{AlpacaEval model-level pricing (continued). Prices are in USD per 1M tokens; $\mathrm{Cost}=\mathrm{Input}+8\times\mathrm{Output}$.}
\fontsize{10}{11.6}\selectfont
\begin{tabular*}{\textwidth}{@{\extracolsep{\fill}}p{0.305\textwidth}rrr@{\hspace{12pt}}p{0.305\textwidth}rrr@{}}
\toprule
\textbf{Model} & \textbf{In.} & \textbf{Out.} & \textbf{Cost} & \textbf{Model} & \textbf{In.} & \textbf{Out.} & \textbf{Cost} \\
\midrule
\alpacamodel{mistral-\allowbreak orpo-\allowbreak beta} & 0.2 & 0.2 & 1.8 & \alpacamodel{Qwen1.5-\allowbreak 7B-\allowbreak Chat} & 0.2 & 0.2 & 1.8 \\
\alpacamodel{Mixtral-\allowbreak 8x22B-\allowbreak Instruct-\allowbreak v0.1} & 1.2 & 1.2 & 10.8 & \alpacamodel{recycled-\allowbreak wizardlm-\allowbreak 7b-\allowbreak v1.0} & 0.2 & 0.2 & 1.8 \\
\alpacamodel{Mixtral-\allowbreak 8x7B-\allowbreak Instruct-\allowbreak v0.1} & 0.5 & 0.5 & 4.5 & \alpacamodel{recycled-\allowbreak wizardlm-\allowbreak 7b-\allowbreak v2.0} & 0.2 & 0.2 & 1.8 \\
\alpacamodel{Mixtral-\allowbreak 8x7B-\allowbreak Instruct-\allowbreak v0.1\_\allowbreak concise} & 0.5 & 0.5 & 4.5 & \alpacamodel{Samba-\allowbreak CoE-\allowbreak v0.1} & 0.2 & 0.2 & 1.8 \\
\alpacamodel{Mixtral-\allowbreak 8x7B-\allowbreak Instruct-\allowbreak v0.1\_\allowbreak verbose} & 0.5 & 0.5 & 4.5 & \alpacamodel{Samba-\allowbreak CoE-\allowbreak v0.2} & 0.2 & 0.2 & 1.8 \\
\alpacamodel{Nanbeige-\allowbreak Plus-\allowbreak Chat-\allowbreak v0.1} & 0.2 & 0.2 & 1.8 & \alpacamodel{Samba-\allowbreak CoE-\allowbreak v0.2-\allowbreak best-\allowbreak of-\allowbreak 16} & 3.2 & 3.2 & 28.8 \\
\alpacamodel{Nanbeige2-\allowbreak 8B-\allowbreak Chat} & 0.2 & 0.2 & 1.8 & \alpacamodel{Snorkel-\allowbreak Mistral-\allowbreak PairRM-\allowbreak DPO} & 0.2 & 0.2 & 1.8 \\
\alpacamodel{nous-\allowbreak hermes-\allowbreak 13b} & 0.2 & 0.2 & 1.8 & \alpacamodel{Snorkel-\allowbreak Mistral-\allowbreak PairRM-\allowbreak DPO-\allowbreak best-\allowbreak of-\allowbreak 16} & 3.2 & 3.2 & 28.8 \\
\alpacamodel{oasst-\allowbreak rlhf-\allowbreak llama-\allowbreak 33b} & 0.9 & 0.9 & 8.1 & \alpacamodel{Starling-\allowbreak LM-\allowbreak 7B-\allowbreak alpha} & 0.2 & 0.2 & 1.8 \\
\alpacamodel{oasst-\allowbreak sft-\allowbreak llama-\allowbreak 33b} & 0.9 & 0.9 & 8.1 & \alpacamodel{TempNet-\allowbreak LLaMA2-\allowbreak Chat-\allowbreak 13B-\allowbreak v0.1} & 0.2 & 0.2 & 1.8 \\
\alpacamodel{oasst-\allowbreak sft-\allowbreak pythia-\allowbreak 12b} & 0.1 & 0.1 & 0.9 & \alpacamodel{TempNet-\allowbreak LLaMA2-\allowbreak Chat-\allowbreak 70B-\allowbreak v0.1} & 0.9 & 0.9 & 8.1 \\
\alpacamodel{openbuddy-\allowbreak falcon-\allowbreak 40b-\allowbreak v9} & 0.9 & 0.9 & 8.1 & \alpacamodel{TempNet-\allowbreak LLaMA2-\allowbreak Chat-\allowbreak 7B-\allowbreak v0.1} & 0.2 & 0.2 & 1.8 \\
\alpacamodel{openbuddy-\allowbreak falcon-\allowbreak 7b-\allowbreak v6} & 0.2 & 0.2 & 1.8 & \alpacamodel{text\_\allowbreak davinci\_\allowbreak 001} & 20 & 20 & 180 \\
\alpacamodel{openbuddy-\allowbreak llama-\allowbreak 30b-\allowbreak v7.1} & 0.9 & 0.9 & 8.1 & \alpacamodel{text\_\allowbreak davinci\_\allowbreak 003} & 20 & 20 & 180 \\
\alpacamodel{openbuddy-\allowbreak llama-\allowbreak 65b-\allowbreak v8} & 0.9 & 0.9 & 8.1 & \alpacamodel{tulu-\allowbreak 2-\allowbreak dpo-\allowbreak 13b} & 0.2 & 0.2 & 1.8 \\
\alpacamodel{openbuddy-\allowbreak llama2-\allowbreak 13b-\allowbreak v11.1} & 0.2 & 0.2 & 1.8 & \alpacamodel{tulu-\allowbreak 2-\allowbreak dpo-\allowbreak 70b} & 0.9 & 0.9 & 8.1 \\
\alpacamodel{openbuddy-\allowbreak llama2-\allowbreak 70b-\allowbreak v10.1} & 0.9 & 0.9 & 8.1 & \alpacamodel{tulu-\allowbreak 2-\allowbreak dpo-\allowbreak 7b} & 0.2 & 0.2 & 1.8 \\
\alpacamodel{openchat-\allowbreak 13b} & 0.2 & 0.2 & 1.8 & \alpacamodel{ultralm-\allowbreak 13b} & 0.2 & 0.2 & 1.8 \\
\alpacamodel{openchat-\allowbreak v2-\allowbreak 13b} & 0.2 & 0.2 & 1.8 & \alpacamodel{ultralm-\allowbreak 13b-\allowbreak best-\allowbreak of-\allowbreak 16} & 3.2 & 3.2 & 28.8 \\
\alpacamodel{openchat-\allowbreak v2-\allowbreak w-\allowbreak 13b} & 0.2 & 0.2 & 1.8 & \alpacamodel{ultralm-\allowbreak 13b-\allowbreak v2.0} & 0.2 & 0.2 & 1.8 \\
\alpacamodel{openchat-\allowbreak v3.1-\allowbreak 13b} & 0.2 & 0.2 & 1.8 & \alpacamodel{ultralm-\allowbreak 13b-\allowbreak v2.0-\allowbreak best-\allowbreak of-\allowbreak 16} & 3.2 & 3.2 & 28.8 \\
\alpacamodel{openchat8192-\allowbreak 13b} & 0.2 & 0.2 & 1.8 & \alpacamodel{vicuna-\allowbreak 13b} & 0.2 & 0.2 & 1.8 \\
\alpacamodel{opencoderplus-\allowbreak 15b} & 0.2 & 0.2 & 1.8 & \alpacamodel{vicuna-\allowbreak 13b-\allowbreak v1.3} & 0.2 & 0.2 & 1.8 \\
\alpacamodel{OpenHermes-\allowbreak 2.5-\allowbreak Mistral-\allowbreak 7B} & 0.2 & 0.2 & 1.8 & \alpacamodel{vicuna-\allowbreak 13b-\allowbreak v1.5} & 0.2 & 0.2 & 1.8 \\
\alpacamodel{pairrm-\allowbreak tulu-\allowbreak 2-\allowbreak 13b} & 0.2 & 0.2 & 1.8 & \alpacamodel{vicuna-\allowbreak 33b-\allowbreak v1.3} & 0.9 & 0.9 & 8.1 \\
\alpacamodel{pairrm-\allowbreak tulu-\allowbreak 2-\allowbreak 70b} & 0.9 & 0.9 & 8.1 & \alpacamodel{vicuna-\allowbreak 7b} & 0.2 & 0.2 & 1.8 \\
\alpacamodel{pairrm-\allowbreak Yi-\allowbreak 34B-\allowbreak Chat} & 0.9 & 0.9 & 8.1 & \alpacamodel{vicuna-\allowbreak 7b-\allowbreak v1.3} & 0.2 & 0.2 & 1.8 \\
\alpacamodel{pairrm-\allowbreak zephyr-\allowbreak 7b-\allowbreak beta} & 0.2 & 0.2 & 1.8 & \alpacamodel{vicuna-\allowbreak 7b-\allowbreak v1.5} & 0.2 & 0.2 & 1.8 \\
\alpacamodel{phi-\allowbreak 2} & 0.1 & 0.1 & 0.9 & \alpacamodel{wizardlm-\allowbreak 13b} & 0.2 & 0.2 & 1.8 \\
\alpacamodel{phi-\allowbreak 2-\allowbreak dpo} & 0.1 & 0.1 & 0.9 & \alpacamodel{wizardlm-\allowbreak 13b-\allowbreak v1.1} & 0.2 & 0.2 & 1.8 \\
\alpacamodel{phi-\allowbreak 2-\allowbreak sft} & 0.1 & 0.1 & 0.9 & \alpacamodel{wizardlm-\allowbreak 13b-\allowbreak v1.2} & 0.2 & 0.2 & 1.8 \\
\alpacamodel{platolm-\allowbreak 7b} & 0.2 & 0.2 & 1.8 & \alpacamodel{wizardlm-\allowbreak 70b} & 0.9 & 0.9 & 8.1 \\
\alpacamodel{pythia-\allowbreak 12b-\allowbreak mix-\allowbreak sft} & 0.1 & 0.1 & 0.9 & \alpacamodel{xwinlm-\allowbreak 13b-\allowbreak v0.1} & 0.2 & 0.2 & 1.8 \\
\alpacamodel{Qwen-\allowbreak 14B-\allowbreak Chat} & 0.2 & 0.2 & 1.8 & \alpacamodel{xwinlm-\allowbreak 70b-\allowbreak v0.1} & 0.9 & 0.9 & 8.1 \\
\alpacamodel{Qwen1.5-\allowbreak 1.8B-\allowbreak Chat} & 0.1 & 0.1 & 0.9 & \alpacamodel{xwinlm-\allowbreak 7b-\allowbreak v0.1} & 0.2 & 0.2 & 1.8 \\
\alpacamodel{Qwen1.5-\allowbreak 110B-\allowbreak Chat} & 0.9 & 0.9 & 8.1 & \alpacamodel{Yi-\allowbreak 34B-\allowbreak Chat} & 0.9 & 0.9 & 8.1 \\
\alpacamodel{Qwen1.5-\allowbreak 14B-\allowbreak Chat} & 0.2 & 0.2 & 1.8 & \alpacamodel{zephyr-\allowbreak 7b-\allowbreak alpha} & 0.2 & 0.2 & 1.8 \\
\alpacamodel{Qwen1.5-\allowbreak 72B-\allowbreak Chat} & 0.1 & 0.1 & 0.9 & \alpacamodel{zephyr-\allowbreak 7b-\allowbreak beta} & 0.2 & 0.2 & 1.8 \\
\bottomrule
\end{tabular*}
\end{table}
\clearpage
\endgroup

%% file: appendices/experiment_results.tex
\section{Additional Experiment Results}
\label{app:additional-experiment-results}

\subsection{Ablation Studies}
We include four ablations to separate the statistical and cost-sensitive components of the Gittins index policy.

\paragraph{Choice of common prior.}
We compare the common priors listed in Table~\ref{tab:prior_settings}. This ablation tests whether Bayesian allocation gains come from useful shared prior information or from the index rule alone, without giving different arms different prior means. The resulting comparison between GittinsEval-G, which uses the general prior, and GittinsEval-S, which uses the data-specific common prior, is reported in Figures~\ref{fig:gsm8k-piqa-main} and~\ref{fig:mmlu-aggregate}; full per-subject MMLU results are provided in Figures~\ref{fig:mmlu-easy-unit-bo5pct}--\ref{fig:mmlu-hard-aware-bo5pct}.

\begin{table}[!htbp]
\centering
\caption{
Batch-size settings used in the simple-regret experiments. For UCB-E and both Gittins variants, batch size $B$ denotes the number of examples queried for the selected arm at each allocation step. For AlpacaEval, these examples correspond to instruction-level pairwise preference scores rather than binary correctness labels. Because LRF's repeated low-rank updates make smaller batches prohibitively slow, LRF uses $B=32$ in all reported settings. Bayesian optimization baselines operate at the configuration level, so evaluating one candidate reveals its aggregate score and does not use a bandit batch size. For MMLU, subject matrices are grouped by the number of benchmark examples, and the batch-size grid for UCB-E and Gittins is chosen according to this subject-size bucket. }
\label{tab:batch_size_settings}
\small
\setlength{\tabcolsep}{6pt}
\begin{tabular}{llll}
\toprule
Benchmark / group & Matrices & Example-count rule & Batch sizes \\
\midrule
GSM8K & 5 response matrices & 1000 examples each & $B \in \{8, 16, 32\}$ \\
PIQA & 5 response matrices & 1000 examples each & $B \in \{8, 16, 32\}$ \\
AlpacaEval & 1 pairwise score matrix & 805 instructions & $B \in \{8, 16, 32\}$ \\
MMLU small & 22 subject matrices & $n_{\mathrm{examples}} \leq 150$ & $B \in \{2, 4, 8\}$ \\
MMLU medium & 30 subject matrices & $151 \leq n_{\mathrm{examples}} \leq 400$ & $B \in \{4, 8, 16\}$ \\
MMLU large & 5 subject matrices & $n_{\mathrm{examples}} > 400$ & $B \in \{8, 16, 32\}$ \\
\bottomrule
\end{tabular}
\end{table}

\paragraph{Batch size.}
For UCB-E and the Gittins variants, we vary the batch size $B$ used to construct empirical batch-mean observations. Smaller batches provide more frequent adaptation but noisier observations; larger batches reduce Gaussian approximation error and posterior noise but make each allocation decision coarser. Because LRF's repeated low-rank updates make smaller batches prohibitively slow, we use $B=32$ for LRF in the reported settings. Bayesian optimization baselines operate at the configuration level and therefore do not use a bandit batch size. Table~\ref{tab:batch_size_settings} summarizes the batch-size grids used for GSM8K, PIQA, MMLU, and AlpacaEval. On AlpacaEval, each batch query corresponds to $B$ additional pairwise comparisons for the selected model rather than $B$ independent benchmark examples with binary labels.

Figures~\ref{fig:benchmark-batch-size-sensitivity} and
\ref{fig:mmlu-batch-size-sensitivity} show that batch size primarily affects
the low-budget regime. On GSM8K, PIQA, and AlpacaEval, $B=8$ consistently provides the best early sample efficiency: its simple regret falls more quickly, whereas larger batches can spend more evaluations before the posterior and acquisition policy are updated. The effect is clearest for $B=32$ on unit-cost GSM8K, where early regret is substantially higher. The curves approach one another as the budget grows, suggesting that batch size has a larger effect on early efficiency and stopping cost than on the final recommendation. Differences are smaller under cost-aware evaluation, but $B=8$ still provides the best overall cost efficiency across these benchmarks.

\begin{figure*}[p]
\centering
\includegraphics[width=0.8\textwidth]{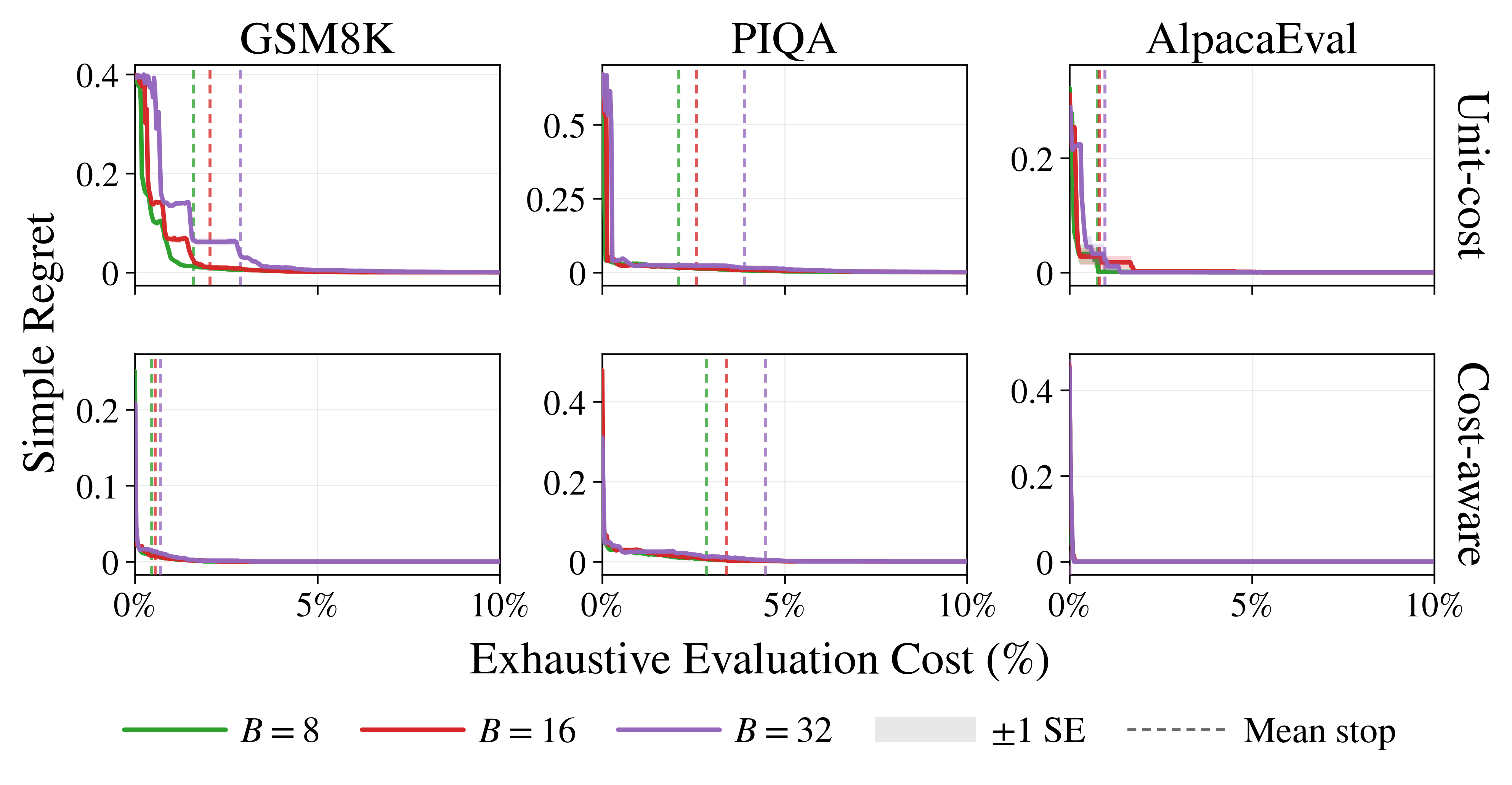}
\caption{
Batch-size sensitivity on GSM8K, PIQA, and AlpacaEval for $B\in\{8,16,32\}$. The top row uses unit costs and the bottom row uses original costs. Curves show mean simple regret against cumulative evaluation cost as a percentage of exhaustive evaluation; bands denote $\pm1$ standard error and dashed lines mark mean natural-stopping cost. GSM8K and PIQA average 100 runs each, and AlpacaEval averages 20 runs. }
\label{fig:benchmark-batch-size-sensitivity}
\end{figure*}

For MMLU, the smallest batch in each size-specific grid likewise tends to have the lowest overall simple regret: $B=2$ for small subjects, $B=4$ for medium subjects, and $B=8$ for large subjects. Larger batches are generally less efficient at low budgets because the policy updates only after an entire batch is completed; on the medium and large groups, they also tend to delay natural stopping. The late-budget convergence indicates qualitative robustness to batch size and supports the size-adaptive default $B=2/4/8$. The stopping lines for the small group require additional care: some configurations stop naturally on only a subset of subjects, so the plotted mean is conditional on the subjects with an observed stop and should not be compared without also considering stopping coverage.

\begin{figure*}[p]
\centering
\includegraphics[width=0.8\textwidth]{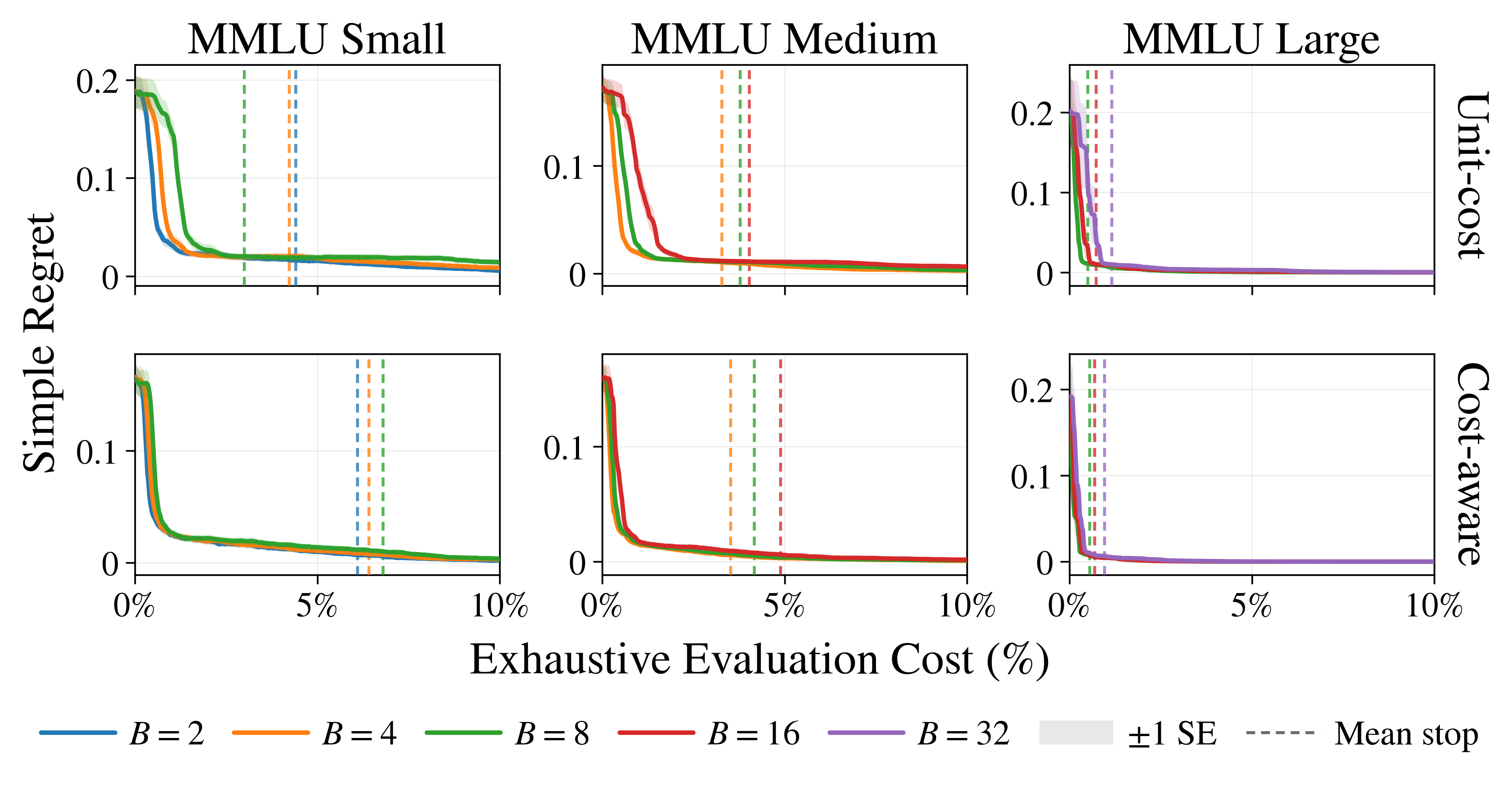}
\caption{
Batch-size sensitivity across the 57 MMLU subjects, grouped as small (22), medium (30), and large (5), with grids $B\in\{2,4,8\}$, $B\in\{4,8,16\}$, and $B\in\{8,16,32\}$. The top row uses unit costs and the bottom row uses original costs. Curves show equally weighted subject means against cumulative evaluation cost as a percentage of exhaustive evaluation; bands denote $\pm1$ subject-level standard error and dashed lines average subjects with observed natural stops. }
\label{fig:mmlu-batch-size-sensitivity}
\end{figure*}

\paragraph{Cost-scaling factor.}
The Gittins index policy uses a cost-scaling factor to map evaluation costs into the continuation penalty in the index computation. In the unit-cost setting, this factor is applied to a common unit cost for each evaluation; in the cost-aware setting, it is applied to the original per-evaluation costs. This parameter controls the effective price of continuing to sample: larger values make additional evaluations more costly and therefore favor earlier stopping, whereas smaller values encourage longer exploration. In our experiments, we sweep three cost-scaling factors, \(10^{-3}\), \(10^{-4}\), and \(10^{-5}\), for both unit-cost and cost-aware Gittins variants. Unless otherwise specified, the main reported figures use \(10^{-4}\), which we found to provide a representative balance between stopping early and continuing exploration.

Figure~\ref{fig:benchmark-lambda-sensitivity} confirms the expected stopping-time trade-off on GSM8K, PIQA, and AlpacaEval: increasing $\lambda$ triggers natural stopping earlier, but $\lambda=10^{-3}$ is often too aggressive. Under unit costs it consistently shows higher early simple regret on GSM8K and AlpacaEval, indicating that stopping too soon can degrade recommendation quality. The regret curves for $10^{-5}$ and $10^{-4}$ are generally close, but $10^{-5}$ usually requires more evaluation cost before stopping. The three settings are less separated in the cost-aware panels. Thus, $\lambda=10^{-4}$ is not uniformly optimal on every individual dataset, but it provides the most reliable trade-off among regret, stopping time, and cross-dataset stability.

\begin{figure*}[p]
\centering
\includegraphics[width=0.8\textwidth]{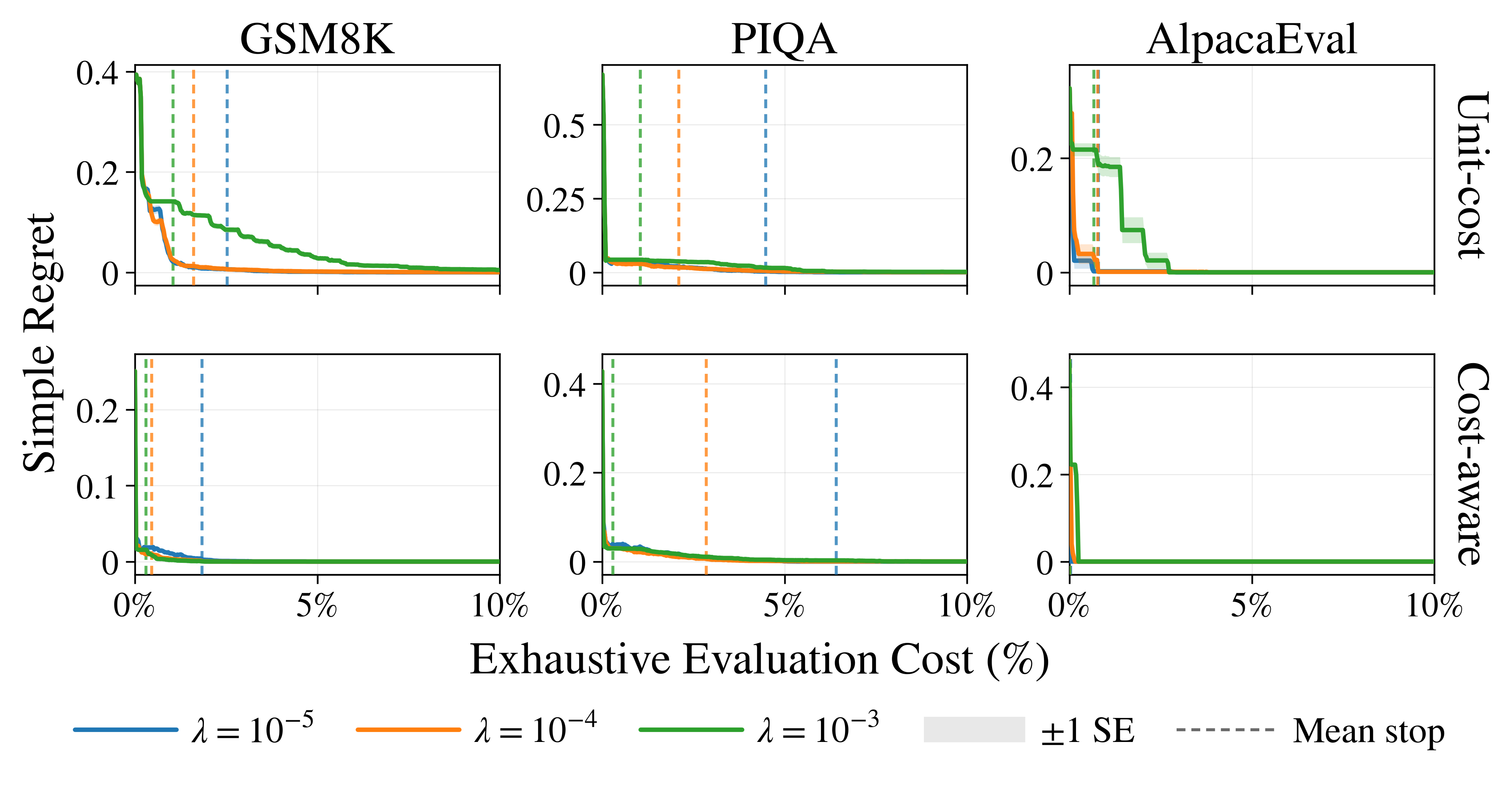}
\caption{
Cost-scaling sensitivity on GSM8K, PIQA, and AlpacaEval for $\lambda\in\{10^{-5},10^{-4},10^{-3}\}$. The top row uses unit costs and the bottom row uses original costs. Curves show mean simple regret against cumulative evaluation cost as a percentage of exhaustive evaluation; bands denote $\pm1$ standard error and dashed lines mark mean natural-stopping cost. GSM8K and PIQA average 100 runs each, and AlpacaEval averages 20 runs. }
\label{fig:benchmark-lambda-sensitivity}
\end{figure*}

The same pattern is visible across the MMLU size groups in Figure~\ref{fig:mmlu-lambda-sensitivity}. The conservative $\lambda=10^{-5}$ setting reduces regret more slowly and does not naturally stop within the 10\% budget on some subjects. In contrast, $\lambda=10^{-3}$ stops very early but has consistently higher early regret on the medium and large groups. This behavior is especially pronounced for cost-aware large subjects, suggesting that a large continuation penalty can terminate evaluation before a reliable configuration has been identified. Across size groups and cost settings, $\lambda=10^{-4}$ reduces regret quickly while retaining reasonable stopping times. Although $\lambda=10^{-3}$ sometimes attains slightly lower late regret on the small group, its worse early behavior does not overturn $10^{-4}$ as the global default. Since the large group contains only five subjects and some error bands are wide, we interpret these results as a consistent trend rather than a claim of statistical significance.

\begin{figure*}[p]
\centering
\includegraphics[width=0.8\textwidth]{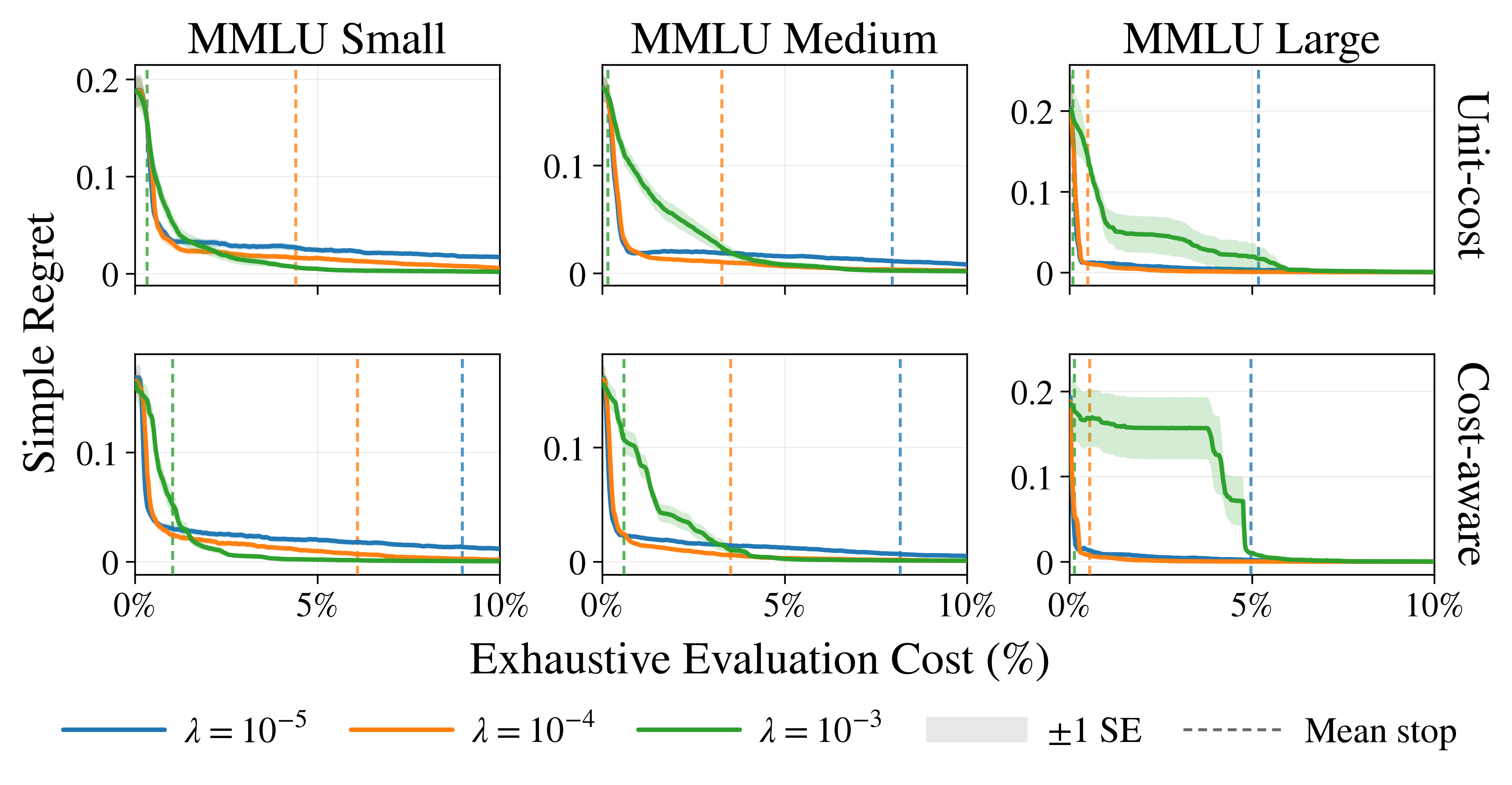}
\caption{
Cost-scaling sensitivity across the 57 MMLU subjects, grouped as small (22), medium (30), and large (5), for $\lambda\in\{10^{-5},10^{-4},10^{-3}\}$. The top row uses unit costs and the bottom row uses original costs. Curves show equally weighted subject means against cumulative evaluation cost as a percentage of exhaustive evaluation; bands denote $\pm1$ subject-level standard error and dashed lines mark mean natural-stopping cost. }
\label{fig:mmlu-lambda-sensitivity}
\end{figure*}

Overall, the method is qualitatively robust over the examined hyperparameter ranges, while hyperparameter selection matters most in the low-budget regime. The default $\lambda=10^{-4}$ avoids both the overly conservative behavior of $10^{-5}$ and the premature stopping associated with $10^{-3}$. Batch-size sensitivity is likewise modest beyond the low-budget regime: smaller batches generally provide better early sample efficiency through more frequent adaptation, whereas larger batches require fewer sequential posterior updates and allocation decisions and can therefore reduce policy-side overhead. Accordingly, the main-text plots use the smaller, empirically stronger setting for each reported group: $B=8$ for GSM8K, PIQA, and AlpacaEval, $B=2$ for MMLU-small, and $B=8$ for MMLU-large. The full MMLU results use the size-adaptive configuration $B=2/4/8$ for small, medium, and large subjects, respectively.
\FloatBarrier

\paragraph{Choice of recommendation rule.}
We compare the unpenalized posterior-mean recommendation
\[
\widehat{k}^{\mathrm{mean}}_t
=
\arg\max_k M_{k,t}
\]
with the LCB-style recommendation used by GittinsEval-G,
\[
\widehat{k}^{\mathrm{LCB}}_t
=
\arg\max_k
\left\{
M_{k,t}-\sqrt{V_{k,t}}
\right\}.
\]
Here, $M_{k,t}$ and $V_{k,t}$ are respectively the posterior mean and variance of arm $k$'s complete, fixed evaluation-row mean. The two recommendation rules are evaluated on the same finite-population Gittins allocation trajectory; therefore, they share the same inherited Gittins stopping time.

Figure~\ref{fig:recommendation-rule-ablation} shows the two cost-aware settings with the largest observed differences. At $2\%$ of the exhaustive-evaluation cost, the LCB recommendation reduces mean simple regret on GSM8K from $0.0780 \pm 0.0121$ to $0.0080 \pm 0.0013$, an $89.7\%$ reduction. On MMLU Hard-Large, it reduces aggregate raw simple regret from $0.0828 \pm 0.0143$ to $0.0018 \pm 0.0004$, a $97.9\%$ reduction. Since the recommendation rule changes neither the allocation trajectory nor the inherited Gittins stopping rule, the stopping times are shared by the two curves. Thus, the improvement comes from stabilizing the anytime recommendation rather than changing either what GittinsEval evaluates or when it stops.

The aggregate comparison does not imply uniform behavior across individual MMLU subjects: several subjects still exhibit noticeable differences between the posterior-mean and LCB-style recommendations, consistent with the single-seed recommendation diagnostic above.

\begin{figure*}[htbp]
\centering
\includegraphics[width=0.8\linewidth]{
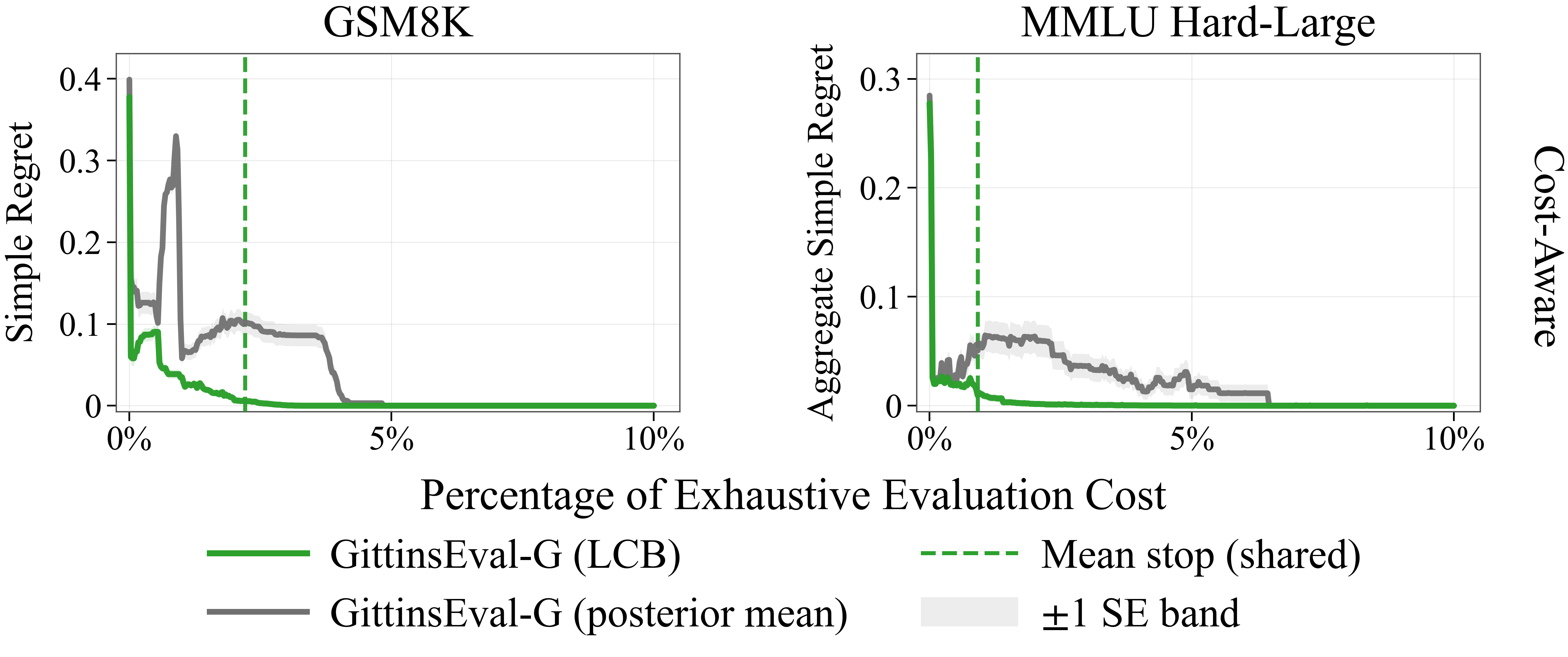 }
\caption{
Recommendation-rule ablation for GittinsEval-G under cost-aware evaluation. The left panel compares the LCB-style and posterior-mean recommendations on GSM8K (100 randomized runs); the right panel compares them on MMLU Hard-Large (40 subject--run pairs, comprising 20 runs for each of two subjects). Both rules use the same allocation trajectories and stopping times. Curves show mean simple regret (left) and aggregate raw simple regret (right); bands denote $\pm1$ standard error and dashed lines mark the shared mean stopping time. }
\label{fig:recommendation-rule-ablation}
\end{figure*}
\FloatBarrier

\subsection{Per-Subject MMLU Results}

We provide additional MMLU results under informative priors in Figures~\ref{fig:mmlu-easy-unit-bo5pct}--\ref{fig:mmlu-hard-aware-bo5pct}. The subjects are grouped into easy, medium, and hard groups according to their empirical mean arm quality. The corresponding prior means are $\mu_0=0.75$, $\mu_0=0.6$, and $\mu_0=0.4$, respectively. For each bucket, we report both unit-cost and cost-aware results, with the horizontal axis showing cumulative evaluation cost as a percentage of the exhaustive-evaluation cost. Across the MMLU subjects, the Gittins-based policies generally improve over UCB-E, but the stronger variant depends on the difficulty bucket. Across the MMLU subjects, GittinsEval-S generally reaches lower simple regret earlier than UCB-E and GittinsEval-G across difficulty buckets. This suggests that the data-specific prior is broadly helpful for MMLU, especially when subjects are more challenging.

\begin{figure}[htbp]
\centering
\ifdefined\iclrsysrs
\includegraphics[width=\textwidth]{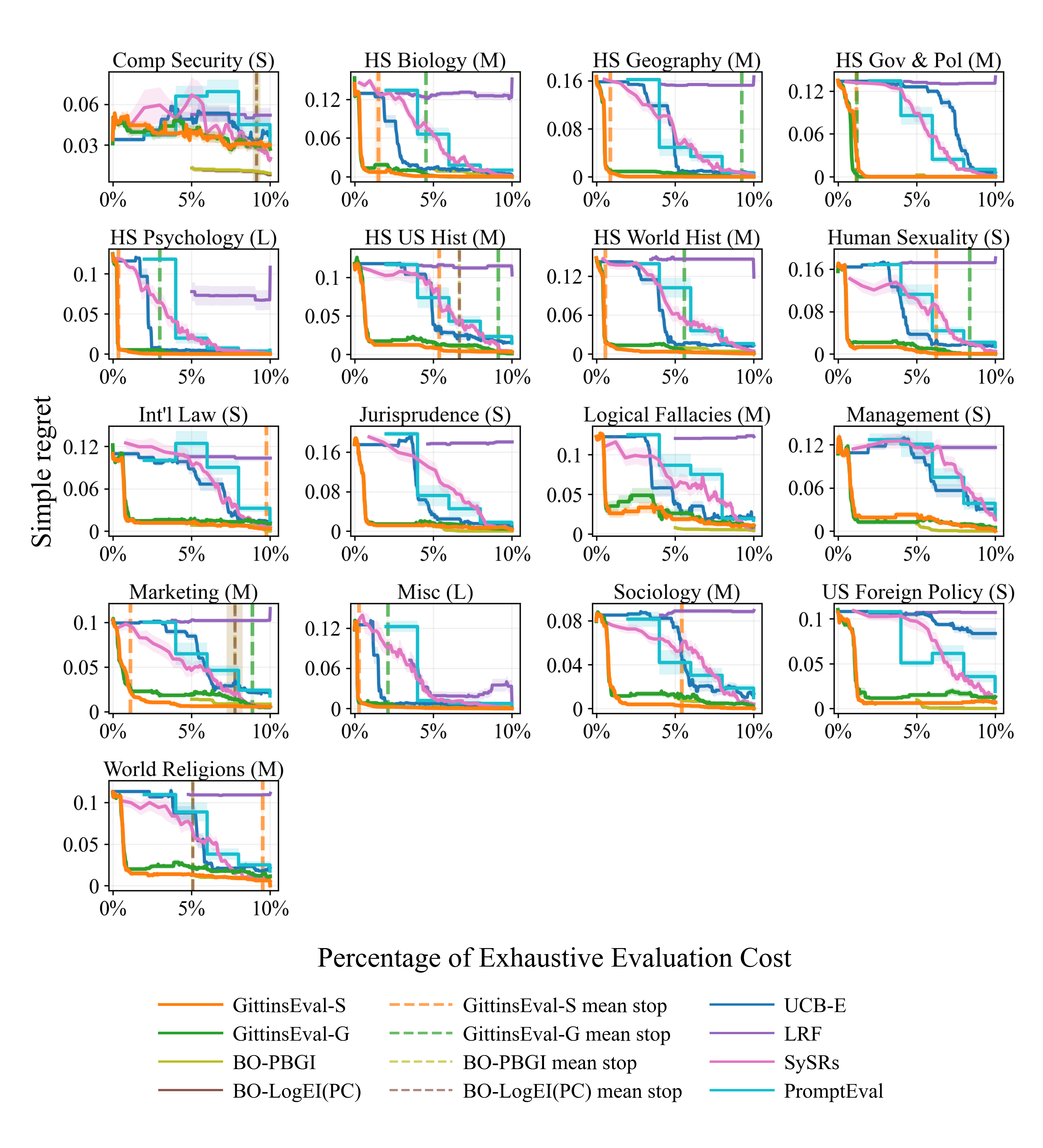}
\else
\includegraphics[width=\textwidth]{figures/mmlu_prior/mmlu_easy_unit_bo5pct.png}
\fi
\caption{
Per-subject MMLU unit-cost results for easy subjects (high-prior bucket). Simple regret is plotted against cumulative evaluation cost as a percentage of exhaustive evaluation. Labels S, M, and L denote matrix size and use $B=2,4,8$, respectively; LRF uses $B=32$. Bands denote $\pm1$ standard error, and dashed lines with faint bands mark mean stopping times with $\pm1$ standard error. }
\label{fig:mmlu-easy-unit-bo5pct}
\end{figure}

\begin{figure}[htbp]
\centering
\ifdefined\iclrsysrs
\includegraphics[width=\textwidth]{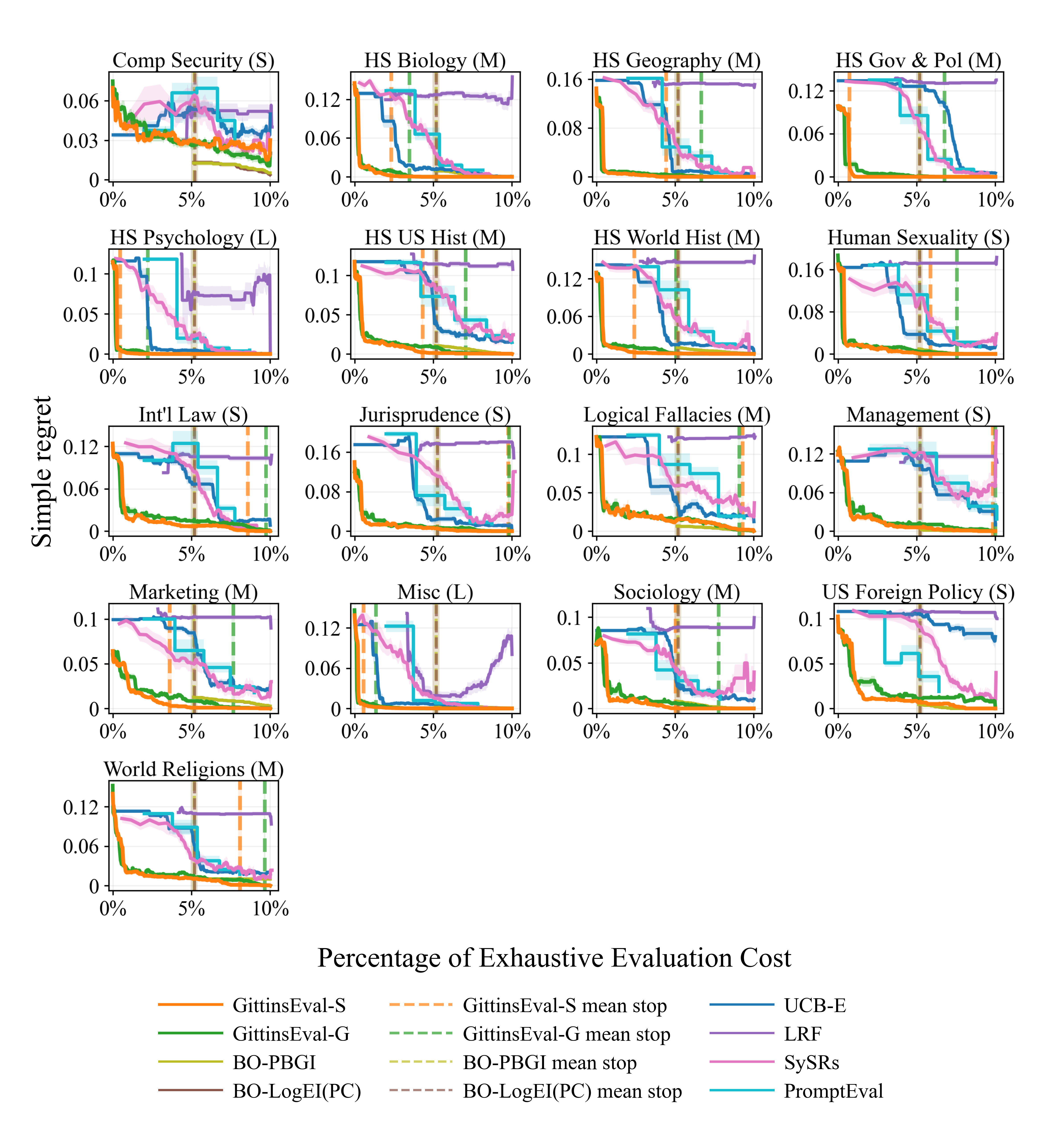}
\else
\includegraphics[width=\textwidth]{figures/mmlu_prior/mmlu_easy_aware_bo5pct.png}
\fi
\caption{
Per-subject MMLU cost-aware results for easy subjects (high-prior bucket). Simple regret is plotted against cumulative evaluation cost as a percentage of exhaustive evaluation. Labels S, M, and L denote matrix size and use $B=2,4,8$, respectively; LRF uses $B=32$. Bands denote $\pm1$ standard error, and dashed lines with faint bands mark mean stopping times with $\pm1$ standard error. }
\label{fig:mmlu-easy-aware-bo5pct}
\end{figure}

\begin{figure}[p]
\centering
\ifdefined\iclrsysrs
\includegraphics[width=\textwidth]{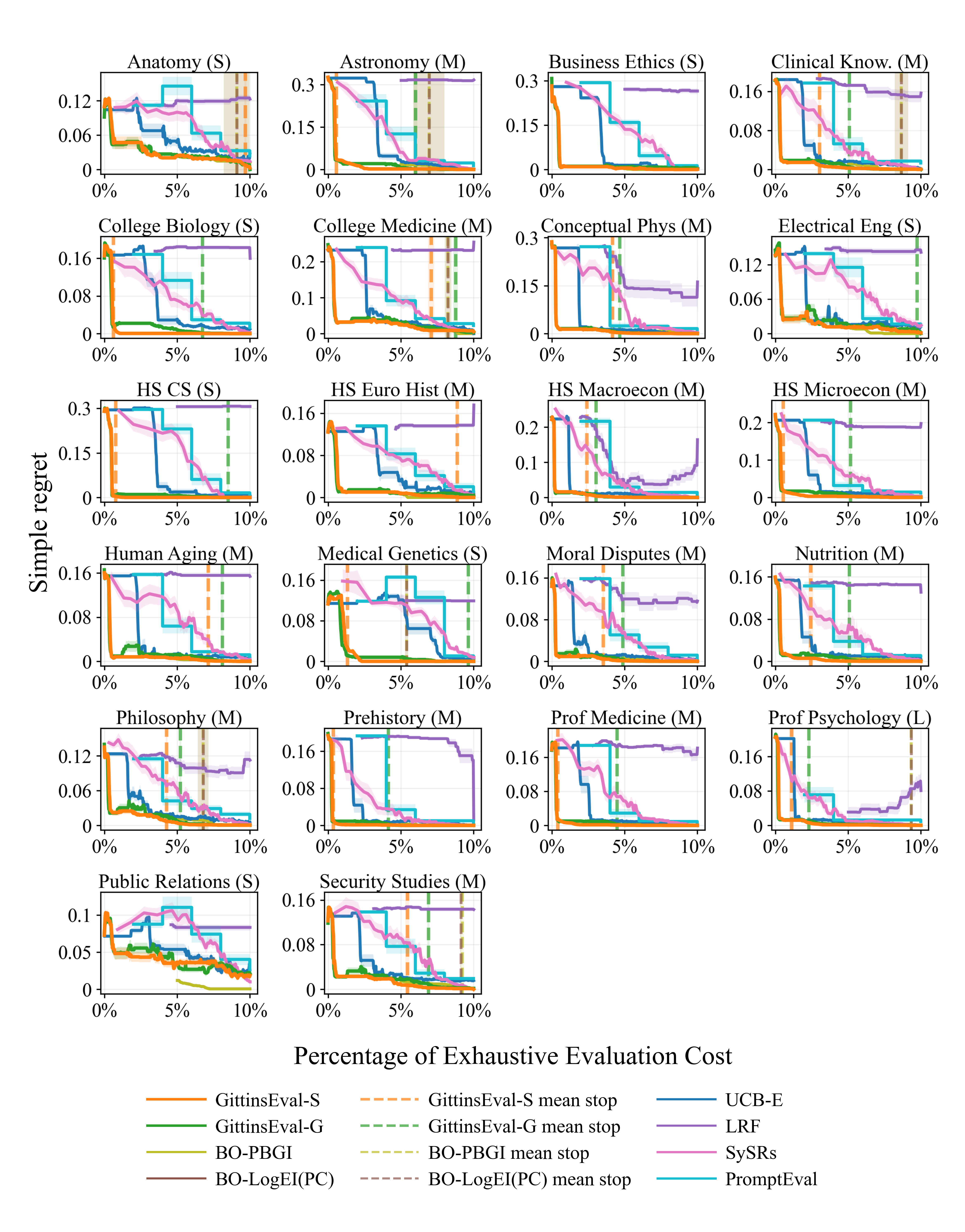}
\else
\includegraphics[width=\textwidth]{figures/mmlu_prior/mmlu_medium_unit_bo5pct.png}
\fi
\caption{
Per-subject MMLU unit-cost results for medium-difficulty subjects (medium-prior bucket). Simple regret is plotted against cumulative evaluation cost as a percentage of exhaustive evaluation. Labels S, M, and L denote matrix size and use $B=2,4,8$, respectively; LRF uses $B=32$. Bands denote $\pm1$ standard error, and dashed lines with faint bands mark mean stopping times with $\pm1$ standard error. }
\label{fig:mmlu-medium-unit-bo5pct}
\end{figure}

\begin{figure}[p]
\centering
\ifdefined\iclrsysrs
\includegraphics[width=\textwidth]{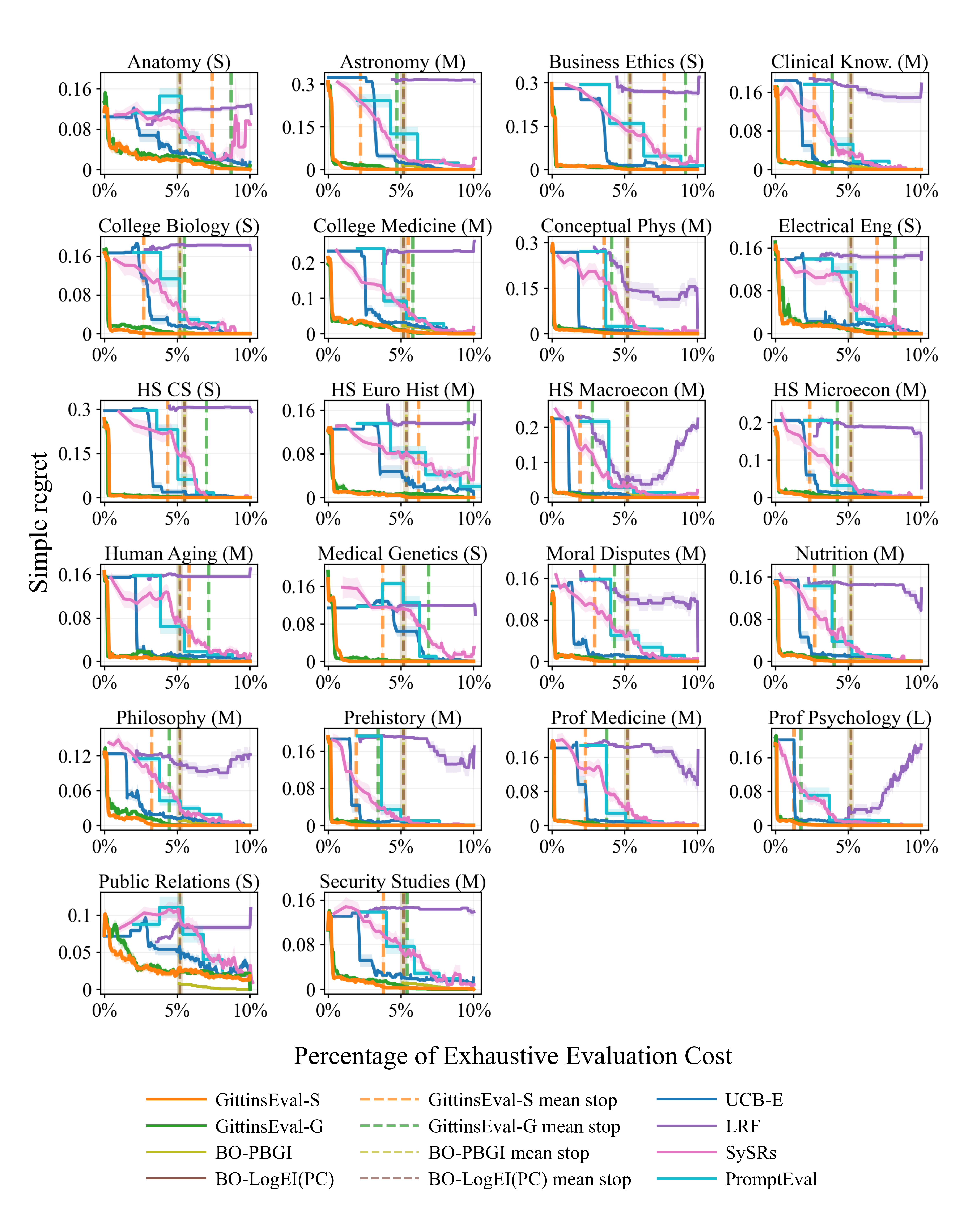}
\else
\includegraphics[width=\textwidth]{figures/mmlu_prior/mmlu_medium_aware_bo5pct.png}
\fi
\caption{
Per-subject MMLU cost-aware results for medium-difficulty subjects (medium-prior bucket). Simple regret is plotted against cumulative evaluation cost as a percentage of exhaustive evaluation. Labels S, M, and L denote matrix size and use $B=2,4,8$, respectively; LRF uses $B=32$. Bands denote $\pm1$ standard error, and dashed lines with faint bands mark mean stopping times with $\pm1$ standard error. }
\label{fig:mmlu-medium-aware-bo5pct}
\end{figure}

\begin{figure}[p]
\centering
\ifdefined\iclrsysrs
\includegraphics[width=\textwidth]{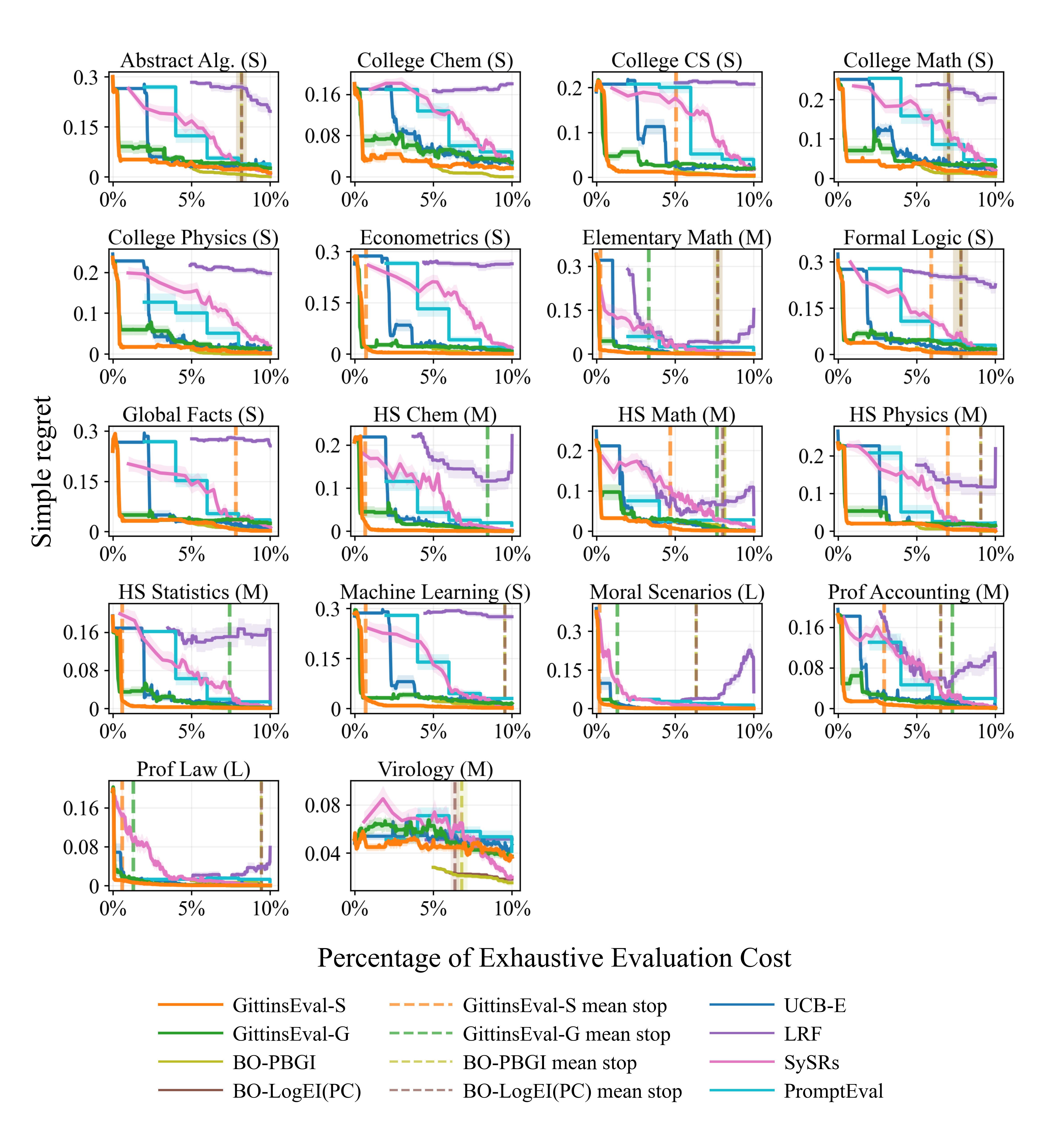}
\else
\includegraphics[width=\textwidth]{figures/mmlu_prior/mmlu_hard_unit_bo5pct.png}
\fi
\caption{
Per-subject MMLU unit-cost results for hard subjects (low-prior bucket). Simple regret is plotted against cumulative evaluation cost as a percentage of exhaustive evaluation. Labels S, M, and L denote matrix size and use $B=2,4,8$, respectively; LRF uses $B=32$. Bands denote $\pm1$ standard error, and dashed lines with faint bands mark mean stopping times with $\pm1$ standard error. }
\label{fig:mmlu-hard-unit-bo5pct}
\end{figure}

\begin{figure}[p]
\centering
\ifdefined\iclrsysrs
\includegraphics[width=\textwidth]{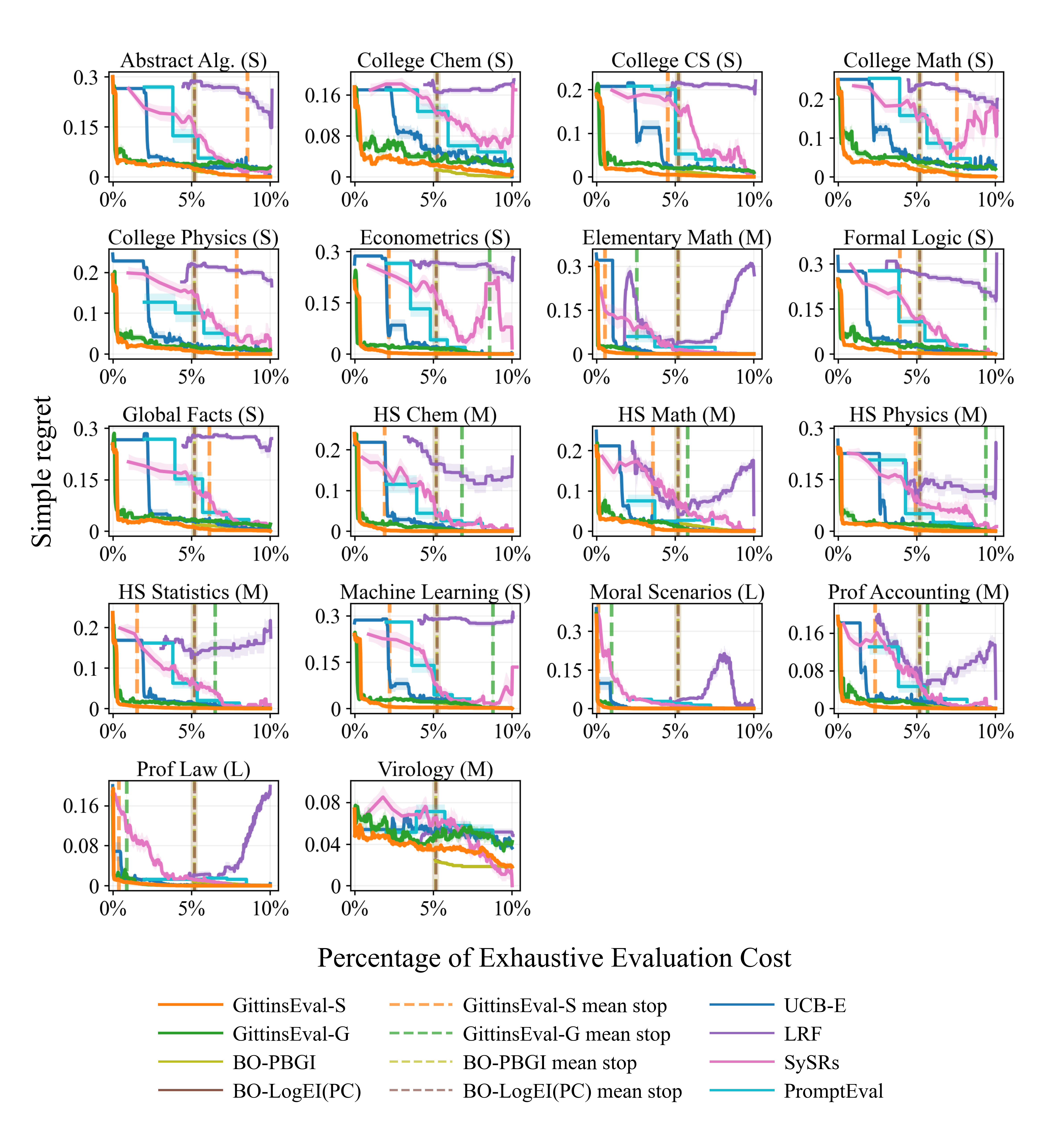}
\else
\includegraphics[width=\textwidth]{figures/mmlu_prior/mmlu_hard_aware_bo5pct.png}
\fi
\caption{
Per-subject MMLU cost-aware results for hard subjects (low-prior bucket). Simple regret is plotted against cumulative evaluation cost as a percentage of exhaustive evaluation. Labels S, M, and L denote matrix size and use $B=2,4,8$, respectively; LRF uses $B=32$. Bands denote $\pm1$ standard error, and dashed lines with faint bands mark mean stopping times with $\pm1$ standard error. }
\label{fig:mmlu-hard-aware-bo5pct}
\end{figure}

\FloatBarrier
\subsection{Computational Overhead Comparison}

\paragraph{Timing protocol.}
Wall-clock timing measurements report the total runtime of each allocation policy over a completed simulated evaluation run. Since the response matrices are precomputed, each evaluation is implemented as an array lookup rather than an LLM inference call. Therefore, the reported runtime reflects the computational overhead of the allocation procedure itself, including method-specific setup and the repeated online computation required during the run. This includes confidence-bound computation for UCB-E, low-rank-factorization updates for LRF, \ifdefined\iclrsysrs similarity-based updates for SySRs, \fi Gittins-index computation for the Gittins variants, and Gaussian-process fitting and acquisition computation for the Bayesian optimization baselines\ifdefined\iclrsysrs, as well as the PromptEval-BAI allocation procedure\fi. These measurements should be interpreted as simulation-time algorithmic overhead rather than model-serving or benchmark-construction cost.

\begin{figure}[!htbp]
\centering
\includegraphics[width=\textwidth]{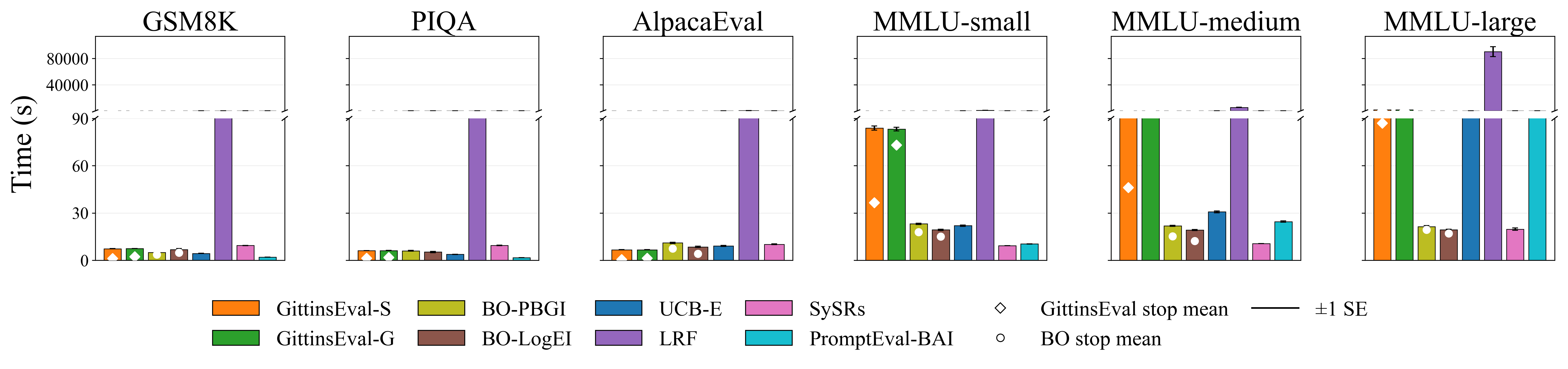}
\caption{
Total wall-clock runtime under unit costs for GSM8K, PIQA, AlpacaEval, and the three MMLU size groups. Bars show the mean total wall-clock runtime across completed runs, with black error bars indicating $\pm1$ standard error across runs. Diamond and circle markers indicate the mean estimated stopping time for GittinsEval and BO methods, respectively. UCB-E and Gittins use $B=8$ on GSM8K/PIQA/AlpacaEval and $B=2,4,8$ on MMLU-small/medium/large; LRF uses $B=32$, and the Bayesian-optimization baselines are PBGI and LogEI. Gittins is generally close to UCB-E, whereas LRF is much slower on larger MMLU tasks; the broken vertical axis accommodates this gap. \ifdefined\iclrsysrs PromptEval-BAI is not reported on AlpacaEval.\fi }
\label{fig:timing-total-wall-time}
\end{figure}

\begin{figure}[!htbp]
\centering
\includegraphics[width=\textwidth]{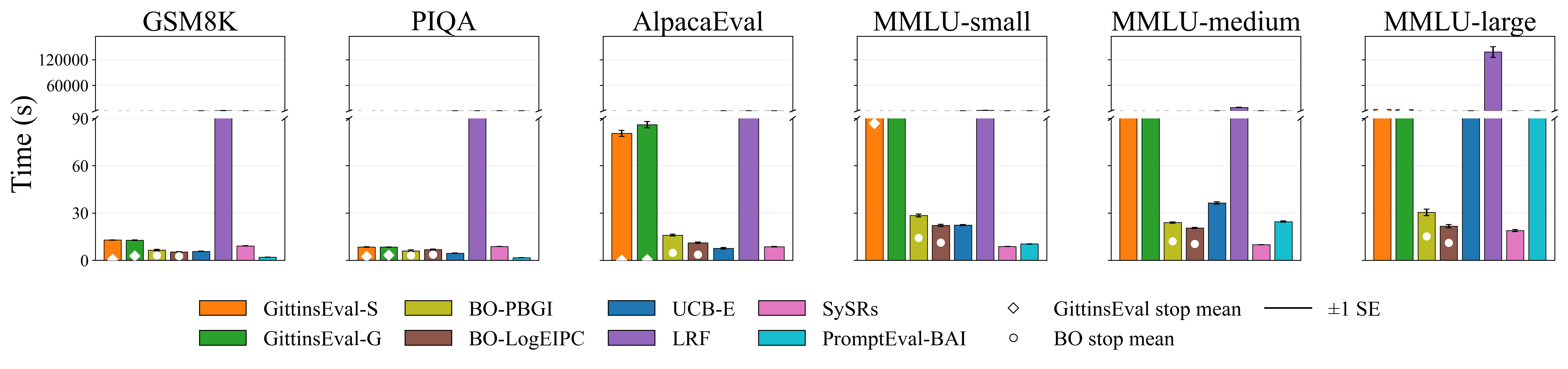}
\caption{
Total wall-clock runtime under cost-aware evaluation for GSM8K, PIQA, AlpacaEval, and the three MMLU size groups. Bars show the mean total wall-clock runtime across completed runs, with black error bars indicating $\pm1$ standard error across runs. Diamond and circle markers indicate the mean estimated stopping time for GittinsEval and BO methods, respectively. UCB-E and Gittins use $B=8$ on GSM8K/PIQA/AlpacaEval and $B=2,4,8$ on MMLU-small/medium/large; LRF uses $B=32$, and the Bayesian-optimization baselines are PBGI and LogEIPC. Gittins is generally close to UCB-E, whereas LRF is much slower on larger MMLU tasks; the broken vertical axis accommodates this gap. \ifdefined\iclrsysrs PromptEval-BAI is not reported on AlpacaEval.\fi }
\label{fig:timing-total-wall-time-aware}
\end{figure}

%% file: ICLR/BanditGittinsEval.bib
@inproceedings{zhou2024speeding,
  title={On speeding up language model evaluation},
  author={Zhou, Jin Peng and Belardi, Christian K and Wu, Ruihan and Zhang, Travis and Gomes, Carla P and Sun, Wen and Weinberger, Kilian Q},
  booktitle={The Thirteenth International Conference on Learning Representations},
  year={2025},
  url={https://openreview.net/forum?id=3cvwO5DBZn}
}

@inproceedings{habba2025dove,
  title={{DOVE}: A Large-Scale Multi-Dimensional Predictions Dataset Towards Meaningful {LLM} Evaluation},
  author={Habba, Eliya and Arviv, Ofir and Itzhak, Itay and Perlitz, Yotam and Bandel, Elron and Choshen, Leshem and Shmueli-Scheuer, Michal and Stanovsky, Gabriel},
  booktitle={Findings of the Association for Computational Linguistics: ACL 2025},
  pages={11744--11763},
  year={2025},
  publisher={Association for Computational Linguistics},
  doi={10.18653/v1/2025.findings-acl.611},
  url={https://aclanthology.org/2025.findings-acl.611/}
}

@incollection{scully2025gittins,
  title={The Gittins Index: A Design Principle for Decision Making under Uncertainty},
  author={Scully, Ziv and Terenin, Alexander},
  booktitle={Tutorials in Operations Research: Advances in Analytics and Operations Research: Improving Decisions to Secure the Future},
  pages={28--70},
  year={2025},
  publisher={INFORMS}
}

@article{gittins1979bandit,
  title={Bandit Processes and Dynamic Allocation Indices},
  author={Gittins, John C},
  journal={Journal of the Royal Statistical Society: Series B (Methodological)},
  volume={41},
  number={2},
  pages={148--177},
  year={1979},
  doi={10.1111/j.2517-6161.1979.tb01068.x}
}

@book{gittins2011multi,
  title={Multi-armed bandit allocation indices},
  author={Gittins, John and Glazebrook, Kevin and Weber, Richard},
  year={2011},
  publisher={John Wiley \& Sons}
}

@article{cobbe2021training,
  title={Training verifiers to solve math word problems},
  author={Cobbe, Karl and Kosaraju, Vineet and Bavarian, Mohammad and Chen, Mark and Jun, Heewoo and Kaiser, Lukasz and Plappert, Matthias and Tworek, Jerry and Hilton, Jacob and Nakano, Reiichiro and others},
  journal={arXiv preprint arXiv:2110.14168},
  year={2021}
}

@inproceedings{bisk2020piqa,
  title={Piqa: Reasoning about physical commonsense in natural language},
  author={Bisk, Yonatan and Zellers, Rowan and Gao, Jianfeng and Choi, Yejin and others},
  booktitle={Proceedings of the AAAI conference on artificial intelligence},
  volume={34},
  number={05},
  pages={7432--7439},
  year={2020}
}

@inproceedings{audibert2010best,
  title={Best arm identification in multi-armed bandits},
  author={Audibert, Jean-Yves and Bubeck, S{\'e}bastien},
  booktitle={COLT-23th Conference on learning theory-2010},
  pages={13--p},
  year={2010}
}

@article{hendrycks2020measuring,
  title={Measuring massive multitask language understanding},
  author={Hendrycks, Dan and Burns, Collin and Basart, Steven and Zou, Andy and Mazeika, Mantas and Song, Dawn and Steinhardt, Jacob},
  journal={arXiv preprint arXiv:2009.03300},
  year={2020}
}

@article{polo2024efficient,
  title={Efficient multi-prompt evaluation of llms},
  author={Polo, Felipe M and Xu, Ronald and Weber, Lucas and Silva, M{\'\i}rian and Bhardwaj, Onkar and Choshen, Leshem and de Oliveira, Allysson F and Sun, Yuekai and Yurochkin, Mikhail},
  journal={Advances in Neural Information Processing Systems},
  volume={37},
  pages={22483--22512},
  year={2024}
}

@article{xie2024cost,
  title={Cost-aware bayesian optimization via the pandora's box gittins index},
  author={Xie, Qian and Astudillo, Raul and Frazier, Peter I and Scully, Ziv and Terenin, Alexander},
  journal={Advances in Neural Information Processing Systems},
  volume={37},
  pages={115523--115562},
  year={2024}
}

@article{shi2024efficientbestarm,
  title={Efficient prompt optimization through the lens of best arm identification},
  author={Shi, Chengshuai and Yang, Kun and Chen, Zihan and Li, Jundong and Yang, Jing and Shen, Cong},
  journal={Advances in Neural Information Processing Systems},
  volume={37},
  pages={99646--99685},
  year={2024}
}

@article{dumitriu2003playing,
  title={On playing golf with two balls},
  author={Dumitriu, Ioana and Tetali, Prasad and Winkler, Peter},
  journal={SIAM Journal on Discrete Mathematics},
  volume={16},
  number={4},
  pages={604--615},
  year={2003},
  publisher={SIAM}
}

@article{xie2025cost,
  title={Cost-aware stopping for bayesian optimization},
  author={Xie, Qian and Cai, Linda and Terenin, Alexander and Frazier, Peter I and Scully, Ziv},
  journal={arXiv preprint arXiv:2507.12453},
  year={2025}
}

@article{snoek2012practical,
	author = {Snoek, Jasper and Larochelle, Hugo and Adams, Ryan P},
	journal = {Advances in Neural Information Processing Systems},
	title = {Practical Bayesian optimization of machine learning algorithms},
	year = {2012}}

@misc{li2023alpacaeval,
  title={Alpacaeval: An automatic evaluator of instruction-following models},
  author={Li, Xuechen and Zhang, Tianyi and Dubois, Yann and Taori, Rohan and Gulrajani, Ishaan and Guestrin, Carlos and Liang, Percy and Hashimoto, Tatsunori B},
  year={2023}
}

@article{lyu2026cutting,
  title={Cutting LLM Evaluation Costs with SySRs: A Bandit Algorithm that Provably Exploits Model Similarity},
  author={Lyu, Zifan and Nejma, Chahine and Wegel, Tobias and Yang, Fanny and Dorner, Florian E},
  journal={arXiv preprint arXiv:2606.07726},
  year={2026}
}

@phdthesis{xie2026gittins,
  author={Xie, Qian},
  title={The {Gittins Index} Design Principle for Cost-Aware {Bayesian} Decision-Making under Uncertainty},
  school={Cornell University},
  year={2026},
  month=aug,
  type={{Ph.D.} dissertation},
  number={32792910},
  url={https://www.proquest.com/docview/3385501903},
  note={ProQuest Dissertations \& Theses, ProQuest document ID 3385501903}
}

@inproceedings{klein2017fast,
  title={Fast bayesian optimization of machine learning hyperparameters on large datasets},
  author={Klein, Aaron and Falkner, Stefan and Bartels, Simon and Hennig, Philipp and Hutter, Frank},
  booktitle={Artificial intelligence and statistics},
  pages={528--536},
  year={2017},
  organization={PMLR}
}

@inproceedings{kandasamy2017multi,
  title={Multi-fidelity bayesian optimisation with continuous approximations},
  author={Kandasamy, Kirthevasan and Dasarathy, Gautam and Schneider, Jeff and P{\'o}czos, Barnab{\'a}s},
  booktitle={International conference on machine learning},
  pages={1799--1808},
  year={2017},
  organization={PMLR}
}

@inproceedings{wu2020practical,
  title={Practical multi-fidelity Bayesian optimization for hyperparameter tuning},
  author={Wu, Jian and Toscano-Palmerin, Saul and Frazier, Peter I and Wilson, Andrew Gordon},
  booktitle={Uncertainty in Artificial Intelligence},
  pages={788--798},
  year={2020},
  organization={PMLR}
}

@article{balandat2020botorch,
  title={{BoTorch}: A framework for efficient {Monte-Carlo Bayesian} optimization},
  author={Balandat, Maximilian and Karrer, Brian and Jiang, Daniel and Daulton, Samuel and Letham, Ben and Wilson, Andrew G and Bakshy, Eytan},
  journal={Advances in Neural Information Processing Systems},
  volume={33},
  pages={21524--21538},
  year={2020}
}

@article{tolochinsky2026valid,
  title={Valid Best-Model Identification for LLM Evaluation via Low-Rank Factorization},
  author={Tolochinsky, Elad and Tenzer, Yaniv and Romano, Yaniv},
  journal={arXiv preprint arXiv:2605.10405},
  year={2026}
}

@article{ao2026bestarm,
  title={Best Arm Identification with LLM Judges and Limited Human Audits},
  author={Ao, Ruicheng and Chen, Hongyu and Gao, Siyang and Li, Hanwei and Simchi-Levi, David},
  journal={arXiv preprint arXiv:2601.21471},
  year={2026}
}

@inproceedings{harrasse2026debate,
  title={Debate, Deliberate, Decide ({D3}): A Cost-Aware Adversarial Framework for Reliable and Interpretable {LLM} Evaluation},
  author={Harrasse, Abir and Bandi, Chaithanya and Bandi, Hari},
  booktitle={Proceedings of the 19th Conference of the European Chapter of the Association for Computational Linguistics (Volume 1: Long Papers)},
  pages={8376--8392},
  year={2026},
  month={mar},
  address={Rabat, Morocco},
  publisher={Association for Computational Linguistics},
  doi={10.18653/v1/2026.eacl-long.392}
}
